\newcommand{\algref}[1]{Algorithm~\ref{#1}}
\renewcommand{\eqref}[1]{(\ref{#1})}
\newcommand{\figref}[1]{Figure~\ref{#1}}
\newcommand{\secref}[1]{Section~\ref{#1}}
\newcommand{\subfig}[1]{\textit{#1}}
\newcommand{\ie}{\textrm{i.e.}}
\newcommand{\eg}{\textrm{e.g.}}
\newenvironment{dm}
  {\vspace*{0pt}\displaymath}{\vspace*{0pt}\enddisplaymath}
\newenvironment{eq}
  {\vspace*{0pt}\equation}{\vspace*{0pt}\endequation}
\newcommand{\sss}{\scriptscriptstyle}
\newlength{\citeskipup}
\newlength{\citeskipdown}
\definecolor{darkgreen}{rgb}{0,0.5,0}
\newcounter{cmt}
\DeclareMathOperator*{\argmax}{arg\,max}
\renewcommand{\O}{\ensuremath{\mathcal O}}
\newtheorem{theorem}{Theorem}[section]
\newtheorem*{theoremstar}{Theorem}
\newtheorem*{lemmastar}{Lemma}
\newtheorem{lemma}{Lemma}[section]
\newenvironment{proof}{\noindent{\bf Proof.}}{$\Box$\medskip}
\newcommand{\bfgreek}[1]{\bm{\@nameuse{up#1}}}
\newcommand{\bvar}[1]{{#1}}
\newcommand{\children}{\mathrm{CH}\xspace}
\newcommand{\depth}{\ensuremath{\Delta}\xspace}
\newcommand{\despot}{DESPOT\xspace}
\newcommand{\dtree}{\ensuremath{\mathcal{D}}\xspace}
\newcommand{\ev}{\mbox{$\textbf{\textrm{E}}$}\xspace}
\newcommand{\height}{\ensuremath{D}\xspace}
\newcommand{\len}{\ensuremath{\ell}\xspace}
\newcommand{\node}{\ensuremath{b}\xspace}
\newcommand{\nsam}{\ensuremath{K}\xspace}
\newcommand{\pol}{\ensuremath{\pi}\xspace}
\newcommand{\polclass}{\ensuremath{\Pi}\xspace}
\newcommand{\pr}{\ensuremath{p}}
\newcommand{\rmax}{\ensuremath{R_\mathrm{max}}}
\newcommand{\seqset}{\ensuremath{{\boldsymbol{\Phi}}}\xspace}
\newcommand{\seq}{\ensuremath{{\boldsymbol{\phi}}}\xspace}
\newcommand{\seqt}{\ensuremath{\phi}}
\newcommand{\wdeu}{\ensuremath{E}\xspace}
\newcommand{\gap}{\ensuremath{\epsilon}\xspace}
\newcommand{\tmax}{\ensuremath{T_\mathrm{max}}\xspace}
\newcommand{\lowerv}{L} 
\newcommand{\upperv}{U} 
\newcommand{\lowernu}{\ell} 
\newcommand{\uppernu}{\mu} 
\begin{document}

\title{DESPOT: Online POMDP Planning with Regularization}
\author{\name Nan Ye \email n.ye@qut.edu.au \\
\addr ACEMS \& Queensland University of Technology, Australia \\
   \\
\name Adhiraj Somani \email adhirajsomani@gmail.com\\
\name David Hsu \email dyhsu@comp.nus.edu.sg\\
\name Wee Sun Lee \email leews@comp.nus.edu.sg\\
\addr
National University of Singapore,
Singapore
}
\maketitle

\begin{abstract}
  The partially observable Markov decision process (POMDP) provides a
  principled general framework for planning under uncertainty, but solving
  POMDPs optimally is
  computationally intractable,  due to the ``curse of dimensionality''
  and the ``curse of history''.  To overcome these challenges, we
  introduce the \emph{Determinized Sparse Partially Observable Tree}
  (\despot),   a sparse approximation of the standard belief tree,
  for online planning under uncertainty.
  A DESPOT focuses online planning on a set of
  randomly sampled \emph{scenarios}  and compactly
  captures  the ``execution'' of all policies under these scenarios.
  We show that the best policy obtained
  from a DESPOT is near-optimal, with a regret bound that depends on
  the representation size
  of the optimal policy.  Leveraging this result, we give
  an anytime online
  planning algorithm, which searches a DESPOT for a policy that optimizes a
  regularized objective function. Regularization  balances
  the estimated  value of a policy under the sampled scenarios and
 the policy size, thus avoiding  overfitting. 
 The algorithm demonstrates strong experimental results, compared with some of
 the best online POMDP algorithms available. It has also been incorporated
 into an autonomous driving system for real-time vehicle control. The source code for the algorithm is available online.
 \end{abstract}

\section{Introduction}
\label{sec:intro}

The partially observable Markov decision process (POMDP) \cite{SmaSon73} provides a
principled general framework for planning in partially observable
stochastic environments.  It has a wide range of applications ranging from
robot control~\cite{roy1999coastal}, resource management~\cite{ChaCar12} to
medical diagnosis~\cite{hauskrecht2000planning}.
However, solving POMDPs optimally is computationally intractable
\cite{PapTsi87,MadHan99}. Even approximating the optimal solution is
difficult~\cite{LusGol01}.  There has been substantial progress
in the last
decade~\cite{pingor03,SmiSim04,poupart2005exploiting,KurHsu08,SilVen10,BaiHsu11}.
However, it remains a challenge today to scale up POMDP planning and tackle
POMDPs with very large state spaces and complex dynamics to achieve near
real-time performance in practical applications (see, \eg,
Figures~\ref{fig:driving} and \ref{fig:demining}).

Intuitively, POMDP planning faces two main difficulties. One is the ``curse of
dimensionality''.  A large state space  is a well-known difficulty for
planning: the number of states grows exponentially with the
number of state variables. Furthermore, in a partially observable environment,
the agent must reason in the space of \emph{beliefs},
which are probability distributions over the states.
If one naively  discretizes the belief space, the number of
discrete beliefs then grows exponentially with the number of states.
The second difficulty is the ``curse of history'': the number of
action-observation histories under consideration for POMDP planning grows
exponentially with the planning horizon.  Both curses result in exponential
growth of computational complexity and major barriers to large-scale POMDP
planning.

\begin{figure}
  \begin{center}
    \begin{tabular}{c@{\hspace*{0.2in}}c}
      \includegraphics[width=2.9in]{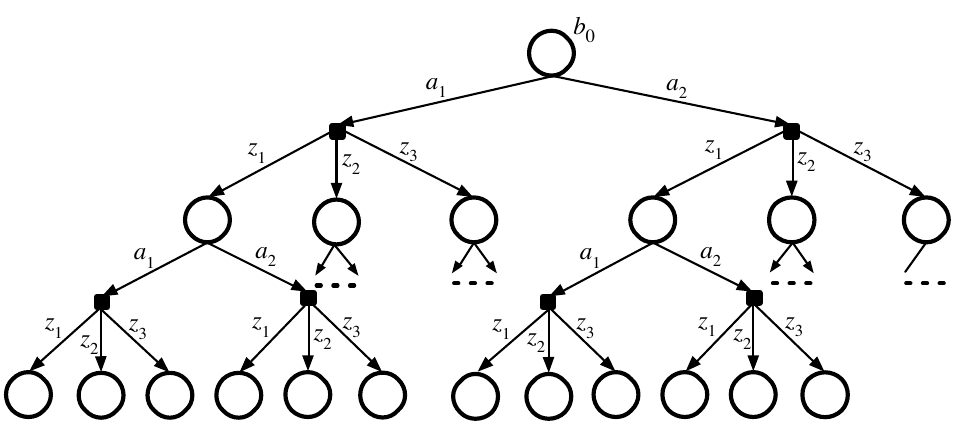} 
      &
      \includegraphics[width=2.9in]{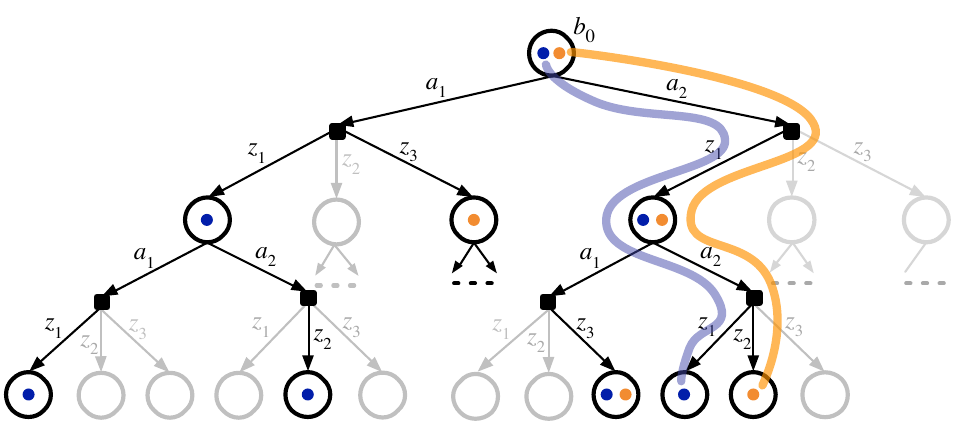} \\
      (\subfig a) & (\subfig b)
    \end{tabular}
\caption{Online POMDP planning performs lookahead search on a tree.
  (\subfig{a}) A standard belief tree of height $D=2$.
  It contains all action-observation
        histories. Each belief tree node represents a belief. Each path
        represents an action-observation history. 
 (\subfig{b})
  A DESPOT (black),   obtained under 2 sampled scenarios marked with blue and
  orange dots,  is overlaid on the standard belief tree. 
       A \despot contains only the histories under the sampled scenarios. 
          Each \despot node represents a belief implicitly and  contains
          a particle set that  approximates the belief. The
          particle set consists of a subset of sampled   scenarios.
     }
	\label{fig:despot}
\end{center} \end{figure}

This paper presents a new anytime online algorithm\footnote{The source code for the algorithm is available at
\url{http://bigbird.comp.nus.edu.sg/pmwiki/farm/appl/}.} for  POMDP planning.  Online
POMDP planning chooses one action at a time and interleaves planning
and plan execution \cite{RosPin08}. At each time step, the
agent performs lookahead search on a belief tree
rooted at the current belief (\figref{fig:despot}\subfig a) and 
executes  immediately the best action found.
To attack the two curses, our algorithm exploits two basic ideas:
sampling and anytime heuristic search, described below.

To speed up lookahead search,  we introduce
the \emph{Determinized Sparse Partially Observable Tree} (\despot) as a sparse
approximation to  the standard belief tree.
The construction of a \despot leverages a set of randomly sampled
\emph{scenarios}. 
Intuitively,  a scenario is a sequence of random numbers that determinizes the
execution of a policy under uncertainty and  generates a unique trajectory
of states and observations given an action sequence. 
A \despot encodes the execution of all policies under a fixed set of
$\nsam$ sampled scenarios (\figref{fig:despot}\subfig b).  It alleviates the
``curse of dimensionality'' by sampling states from a belief and
alleviates the ``curse of history'' by sampling observations.  A
standard belief tree of height $\height$ contains all possible
action-observation histories and thus $\O(|A|^\height|Z|^\height)$
belief nodes, where $|A|$ is the number of actions and $|Z|$ is the number of
observations.  In contrast, a corresponding \despot contains only histories
under the $\nsam$ sampled scenarios and thus $\O(|A|^\height \nsam)$
belief nodes.
A \despot is much sparser than a standard belief tree
when $\nsam$ is small but converges to the standard belief tree as
$\nsam$ grows.

To approximate the standard belief tree well, the number of scenarios, $\nsam$,
required by a \despot may be exponentially large in the worst case.
The worst case, however, may be uncommon in practice.  
We give a competitive analysis that
compares our computed policy against near optimal policies and show that
$\nsam \in \O(|\pol| \ln (|\pol| |A| |Z|) )$ is
sufficient to guarantee near-optimal performance for the lookahead
search, when a POMDP admits a near-optimal policy $\pol$ with
representation size $|\pol|$ (Theorem~\ref{th:despot}).  Consequently,
if a POMDP admits a compact near-optimal policy, it is
sufficient to use a DESPOT  much smaller than a standard belief
tree, significantly speeding up the lookahead search.  Our
experimental results support this view in practice: $K$ as small as
500 can work well for some large POMDPs.

As a \despot is constructed from sampled scenarios, lookahead search
may overfit the sampled scenarios and find a biased policy. To
achieve the desired performance bound, our algorithm uses
\emph{regularization}, which balances the estimated value of a policy
under the sampled scenarios and the size of the policy.
Experimental results show that when  overfitting occurs,
regularization is helpful in practice.

To further improve the practical performance of lookahead search, we
introduce an anytime heuristic algorithm to search a \despot.
The algorithm constructs a \despot incrementally under the guidance of a
heuristic.  Whenever the maximum planning time is reached, it outputs the
action with the best regularized value based on the partially constructed
\despot, thus endowing the algorithm with the \emph{anytime} characteristic.
We show that if the search heuristic is admissible,
our algorithm eventually finds the optimal
action, given sufficient time.  We further show that if the heuristic is not
admissible, the performance of the algorithm degrades gracefully.
Graceful degradation is important, as it allows practitioners to use
their intuition to design good heuristics that are not necessarily
admissible over the entire belief space.

\begin{figure}
\begin{center}
\includegraphics[width=3.5in]{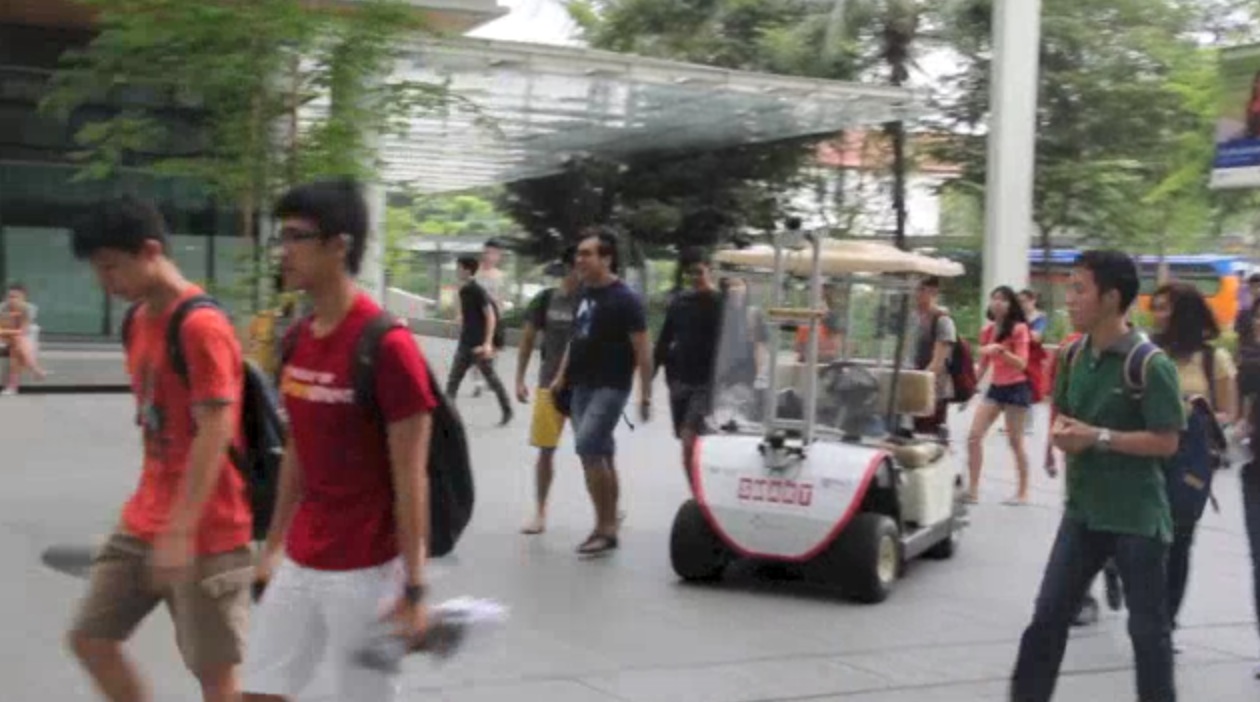}
\caption{DESPOT running in real time on an autonomous golf-cart among
  many pedestrians.}
	\label{fig:driving}
\end{center} \end{figure}

Experiments show that the anytime \despot algorithm is successful on
very large POMDPs with up to $10^{56}$ states. The algorithm is also capable of
handling complex dynamics. We implemented it on a robot
vehicle for intention-aware autonomous driving among many
pedestrians (Figure \ref{fig:driving}). It achieved real-time performance on this system \cite{BaiCai15}.  
In addition, the \despot algorithm is a core component of
our autonomous mine detection strategy that won the 2015 
Humanitarian Robotics and Automation Technology Challenge 
 \cite{madhavan2015} (\figref{fig:demining}).

\begin{figure}
\begin{center}
  \includegraphics[width=2.8in,height=1.9in]{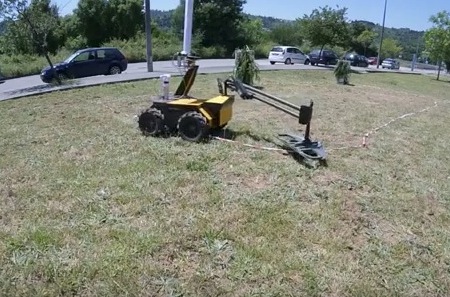}
  \quad
  \includegraphics[width=2.8in,height=1.9in]{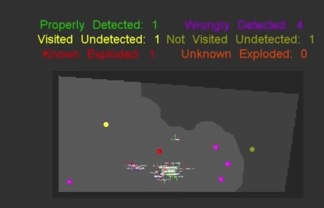}
\caption{Robot mine detection in  Humanitarian Robotics and Automation
  Technology Challenge 2015. Images used with permission of Lino Marques.}
	\label{fig:demining}
\end{center} \end{figure}

In the following, \secref{sec:background} reviews the background on POMDPs and
related work.  \secref{sec:despot} defines the \despot formally and presents
the theoretical analysis to motivate the use of a regularized objective
function for lookahead search.
\secref{sec:anytime} presents the anytime online POMDP algorithm, which
searches a \despot for a near-optimal policy.  \secref{sec:experiments}
presents experiments that evaluate our algorithm and compare it with the state
of the art.
\secref{sec:discussion} discusses the strengths and the limitations of this
work as well as opportunities for further work. We
conclude with a summary in \secref{sec:conclusions}.

\section{Background}
\label{sec:background}

We review the basics of online POMDP planning and related works in this
section.

\subsection{Online POMDP Planning}
\label{sec:pomdp}
A POMDP models an agent acting in a partially observable stochastic
environment.  It can be specified formally as a tuple $(S, A, Z, T, O, R)$,
where $S$ is a set of states, $A$ is a set of agent actions, and $Z$ is
a set of observations.
When  the agent takes action $a\in A$ in state $s\in S$, it moves to a new
state $s'\in S$ with
probability $T(s, a, s') = \pr(s' | s, a)$ and
receives observation $z \in Z$ with probability $O(s', a, z) = \pr(z | s',
a)$. It also receives a real-valued reward  $R(s, a)$.

A POMDP agent does not know the true state, but receives observations that
provide partial information on the state.  The agent thus maintains a
belief, represented as a probability distribution over~$S$. It
starts with an initial belief $b_0$.
At time $t$, it updates the belief 
according to Bayes' rule,  by incorporating information from
the action $a_t$ taken 
and the resulting observation $o_t$:
\begin{eq}\label{eq:1}
  b_t(s')
  =  \eta {O(s', a_{t}, z_{t})}  \sum_{s\in S} T(s, a_t, s') b_{t-1}(s), 
\end{eq}
where $\eta$ is a normalizing constant.
The belief $b_t=\tau(b_{t-1}, a_t,
z_t) = \tau(\tau(b_{t-2}, a_{t-1}, z_{t-1}) , a_t, z_t) = \cdots = \tau(\cdots
\tau(\tau(b_0, a_1,b_1), a_2, b_2), \ldots, a_t,z_t) $ is a sufficient
statistic that contains all the information from the history of actions and
observations $(a_1, z_1, a_2, z_2, \ldots, a_{t}, z_{t})$.

A \emph{policy} $\pi\colon \mathcal{B} \mapsto A$ is a mapping from the belief space
$\mathcal{B}$ to the action space $A$.  It prescribes an action $\pol(b) \in
A$ at the belief $b\in \mathcal{B}$.  For infinite-horizon POMDPs, the
\emph{value} of a policy \pol at a belief $b$ is the expected total discounted
reward that the agent receives by executing \pol:
\begin{eq}\label{eq:2}
V_\pol(b) = \ev\Bigl(\sum_{t=0}^{\infty} \gamma^t
R\bigl(s_t,\pol(b_t)\bigr)\;\bigl|\; b_0=b\Bigr).
\end{eq}
The constant $\gamma\in[0,1)$ is a discount factor, which expresses  preferences
for immediate rewards over future ones.

In online POMDP planning, the agent starts with an initial belief. At
each time step, it searches for an optimal action $a^*$ at the current belief
$b$.
The agent executes the action $a^*$ and receives a new
observation $z$.  It updates the belief  using \eqref{eq:1}.
The process then repeats.

To search for an optimal action $a^*$, one way is to construct a \emph{belief
  tree} (\figref{fig:despot}\subfig a), with the current belief $b_0$ as the
initial belief at the root of the tree. The agent performs lookahead search on
the tree for a policy \pol that maximizes the value $V_\pol(b_0)$ at $b_0$,
and sets $a^* = \pol(b_0)$.
Each node of the tree represents a belief.
To simplify the notation, 
we use the same notation $b$ to represent
both the node and the associated belief.  A node branches into $|A|$ action
edges, and  each action edge further 
branches into $|Z|$ observation edges. If a node and its child represent beliefs
$b$ and $b'$, respectively, then $b' = \tau(b,a,z)$ for some $a\in A$ and
$z\in Z$.

To obtain an approximately optimal policy, 
we may truncate the tree at a maximum depth~\height and search for the optimal
policy on the truncated tree.  At each leaf node, we simulate a user-specified
\emph{default policy} to obtain a lower-bound estimate on its optimal value.
A default policy, for example, can be a random policy or a heuristic.
At each internal node $b$, we apply Bellman's principle of optimality:
\begin{eq}\label{eq:3}
  V^*(b) = \max_{a\in A}\Bigl\{\sum_{s \in S}{b(s)R(s, a)} + \gamma\sum_{z \in
          Z}\pr(z|b, a)V^*\bigl(\tau(b, a, z)\bigr) \Bigr\},
\end{eq}
which computes the maximum value of action branches and the
average value of observation branches weighted by the observation
probabilities. We then perform a post-order traversal on the belief tree and
use \eqref{eq:3} to compute recursively the maximum value at
each node and  obtain the best action at the root
node $b_0$ for execution.

Suppose that at each internal node $b$ in a belief tree, we retain only one
action branch, which represents the chosen action at $b$, and remove all other
branches.  This transforms a belief tree into a \emph{policy tree}.  Each
internal node~$b$ of a policy tree has a single out-going action edge, which
specifies the action at $b$. Each leaf node is associated with a default
policy for execution at the node.  Our \despot algorithm uses this policy tree
representation.
We define the \emph{size} of such a policy as the
number of \emph{internal} policy tree nodes. A singleton policy tree
thus has size~$0$.

\subsection{Related Work}
\label{sec:related}

There are two main approaches to POMDP planning: offline policy computation
and online search.  In offline planning, the agent computes beforehand a
policy contingent upon all possible future outcomes and executes the computed
policy based on the observations received. One main advantage of offline planning
is fast policy execution, as the policy is precomputed.  Early work on
POMDP planning often takes the offline approach. See, \eg,
the work of \citeA{KaeLit98} and \citeA{zhang2001speeding}.
Although offline planning algorithms have made major
  progress in recent
years \cite{pingor03,spavla05,smisim05,KurHsu08}, they still face
significant difficulty in scaling up to very large POMDPs, as they must
plan for all beliefs and future contingencies.

Online planning interleaves planning with
plan execution.  At each time step, it plans locally and
chooses an optimal action for
the \emph{current belief} only, by performing lookahead
search in the neighborhood of the current belief. It then executes
the chosen action immediately.
Planning for the current belief is computationally attractive. First,
it is simpler to search for an optimal action at a single belief than
to do so for all beliefs, as offline policy computation does.
Second, it allows us to exploit local
structure to reduce the search space size.
One limitation of the online approach is the constraint on planning time.  As it
interleaves planning and plan execution, the plan must be ready for execution
within short time in some applications.

A recent survey lists three main ideas for online planning via belief tree
search~\cite{RosPin08}: heuristic search, branch-and-bound pruning, and Monte
Carlo sampling.  Heuristic search employs a heuristic to guide the
belief tree search~\cite{RosCha07,RosPin08}. This idea dates
back to the early work of \citeA{SatLav73} .
Branch-and-bound pruning maintains upper and lower bounds on the value at each
belief tree node and use them to prune suboptimal subtrees and improve
computational efficiency~\cite{PaqTob05}. This idea is also present in earlier
work on offline POMDP planning~\cite{SmiSim04}.
Monte Carlo sampling explores only a randomly sampled subset of observation
branches at each node of the belief tree~\cite{BerCas99,yoonfern08,KeaMan02,SilVen10}.  
Our \despot algorithm contains all three ideas, but is most closely
associated with Monte Carlo sampling. 
Below we examine some of the earlier Monte Carlo sampling
algorithms and \despot's connection with them.

The rollout algorithm~\cite{BerCas99} is an early example of Monte Carlo
sampling for planning under uncertainty. It is originally designed for Markov decision processes (MDPs), but
can be easily adapted to solve POMDPs as well.  It estimates the value of a
default heuristic policy by performing \nsam simulations and then
chooses the best action by \emph{one-step} lookahead search over the estimated
values. Although a one-step lookahead policy improves over the default
policy, it may be far from the optimum because of the very short, one-step
search horizon. 

Like the rollout algorithm, the hindsight optimization algorithm
(HO)~\cite{chonggi00,yoonfern08} is intended for MDPs, but can be adapted for
POMDPs.  While both HO and \despot sample $\nsam$ scenarios for planning, HO
builds one tree with $\O(|A|^\height)$ nodes for \emph{each} scenario,
independent of others, and thus $\nsam$ trees in total.  It searches each tree
for an optimal plan and averages the values of these $K$ optimal plans to
choose a best action.  HO and related algorithms have
been quite successful in recent international probabilistic planning
competitions.  However, HO plans for each scenario independently; it
optimizes an upper bound on the value of a POMDP and not  the true value
itself.  In contrast, \despot captures all $\nsam$ scenarios in a single tree
of $\O(|A|^\height \nsam)$ nodes and hedges against all $\nsam$ scenarios
simultaneously during the planning.  It converges to the true optimal value of the POMDP as
$\nsam$ grows.

The work of \citeA{KeaMan99} and that of
\citeA{NgJor00} use sampled scenarios as well, but for
offline POMDP policy computation.
They provide uniform convergence bounds in terms of the complexity of
the policy class under consideration, \eg, the Vapnik-Chervonenkis dimension.
In comparison, \despot uses sampled scenarios for online instead of offline
planning.   Furthermore, our competitive analysis of DESPOT
compares our computed policy against an optimal policy and produces a bound
that depends on the size of the optimal policy. 
The bound benefits from the existence of a small near-optimal policy and
naturally leads to a regularized objective function for online planning; in
contrast, the algorithms by \citeA{KeaMan99} and \citeA{NgJor00} do not
exploit the existence of good small policies within the class of policies
under consideration.

The sparse sampling (SS) algorithm \cite{KeaMan02} and the \despot
algorithm both construct sparse approximations to a belief tree. SS samples a
constant number $C$ of observation branches for each action.  A sparse
sampling tree contains $\O(|A|^DC^D)$ nodes, while a \despot contains
$\O(|A|^D K)$ nodes. 
Our analysis shows that $K$ can be much smaller than $C^D$ when a POMDP admits
a small near-optimal policy (Theorem~\ref{th:despot}).
In such cases, the \despot algorithm is computationally more efficient.

POMCP~\cite{SilVen10} performs Monte Carlo tree search (MCTS) on a belief
tree. It combines optimistic action exploration and random observation
sampling, and relies on the UCT algorithm \cite{KocSze06} to trade off
exploration and exploitation.  POMCP is simple to implement and is one of the
best in terms of practical performance on large POMDPs.
However, it can be misguided by the upper confidence bound (UCB) heuristic of
the UCT algorithm and be overly greedy.  Although it
converges to an optimal action in the limit, its worst-case running time is
extremely poor: $\Omega(\overset{\sss
  D-1}{\overbrace{\exp(\exp(\ldots\exp}}(1)\ldots)))$~\cite{CoqMun07}.

Among the Monte Carlo sampling algorithms, one unique feature of \despot is
the use of regularization to avoid overfitting to sampled scenarios:
it balances the estimated performance value of a policy and the policy size
during the online search, improving overall performance for suitable tasks.

Another important issue for online POMDP planning is belief representation.
Most online POMDP algorithms, including the Monte Carlo sampling algorithms,
explicitly represent the belief as a probability distribution over the state
space. This severely limits their scalability on POMDPs with very large state
spaces, because a single belief update can take time quadratic in the number
of states.  Notable exceptions are POMCP and DESPOT. Both represent the belief
as a set of sampled states and do not perform belief update over the entire
state space during the online search.

Online search and offline policy computation are complementary and can be
combined,  by using approximate or partial policies computed offline as
the default policies at the leaves of the search tree for online 
planning~\cite{BerCas99,GelSil07}, or as macro-actions to shorten the search
horizon~\cite{HeBru11}.

This paper extends our earlier work~\cite{SomYe13}. 
It provides an improved anytime online planning algorithm, an analysis of this
algorithm, and new experimental results.

\section{Determinized Sparse Partially Observable Trees} 
\label{sec:despot}

A \despot is a sparse approximation of a standard belief
tree.  While a standard belief tree captures the execution of all policies
under all possible scenarios, a \despot captures the execution of all policies
under a set of randomly sampled scenarios (\figref{fig:despot}\subfig b).
A DESPOT contains all the
action branches, but only the observation branches encountered under the
sampled scenarios.

We define \despot constructively by applying a \emph{deterministic simulative
  model} to all possible action sequences under $K$~sampled scenarios.
A scenario is an abstract simulation trajectory
with some start state $s_0$.  Formally, a \emph{scenario} for a belief $b$
is an infinite random sequence $\seq= (s_0, \seqt_1, \seqt_2, \ldots)$, in which the
start state $s_0$ is sampled according to $b$ and each $\seqt_i$ is a real
number sampled independently and uniformly from the range $[0,1]$.  The
deterministic simulative model is a function $g\colon S \times A \times
\textrm{R}\mapsto S \times Z$, such that if a random number~$\seqt$ is
distributed uniformly over $[0,1]$, then $(s', z') = g(s, a, \phi)$ is
distributed according to $\pr(s', z' | s, a)= T(s,a,s')O(s', a, z')$.  When 
simulating this model for an action sequence $(a_1, a_2, \ldots)$ under a
scenario $(s_0, \seqt_1, \seqt_2, \ldots)$, we get  a simulation
trajectory $(s_0,a_1,s_1,z_1,a_2,s_2,z_2,\ldots)$, where $(s_t,z_t) =
g(s_{t-1}, a_t, \seqt_t)$ for $t=1,2, \ldots$\ .  The simulation trajectory
traces out a path $(a_1,z_1,a_2,z_2,\ldots)$ from
the root of the standard
belief tree. We add all the nodes and edges on this path to the \despot \dtree
being constructed. Each
 node \node of \dtree  contains a set $\seqset_\node$  of all scenarios
that it encounters.
We insert the scenario
$(s_0, \seqt_0, \seqt_1, \ldots)$ into the set $\seqset_{b_0}$ at the root
$b_0$ and insert the scenario
$(s_t, \seqt_{t+1}, \seqt_{t+2}, \ldots)$ into the set $\seqset_{b_t}$ at the
belief node $b_t$ reached at the end of the subpath $(a_1,z_1,a_2,z_2,\ldots,
a_t,z_t)$, for $t=1,2, \ldots$\ .  Repeating this process for \emph{every} action
sequence under every sampled scenario completes the construction of 
\dtree.

In summary, a \despot is a randomly sampled subtree of a standard
belief tree.  It is completely determined by the set of $K$ random sequences
sampled a priori, hence the name \emph{Determinized Sparse Partially Observable Tree}.
Each \despot node $b$ represents a belief and contains a
set $\seqset_{b}$ of scenarios.
The start states of the scenarios in $\seqset_\node$ form a particle set
that represents $b$ approximately.
While a standard belief tree of height \height has $\O(|A|^\height
|Z|^\height)$ nodes, a corresponding \despot has $\O(|A|^\height
\nsam)$ nodes for $|A|\geq 2$, because of reduced observation branching under the
sampled scenarios. 

It is possible to search for near-optimal policies using a \despot instead of a standard belief tree.
The \emph{empirical value} $\hat{V}_{\pol}(b)$ of a policy \pol under the
sampled scenarios encoded in a \despot
is the average total discounted reward obtained by
simulating the policy under each scenario.
Formally, let $V_{\pol, \seq}$ be the
total discounted reward of the trajectory obtained by simulating \pol under a
scenario $\seq\in \seqset_b$ for some node $b$ in a \despot, then
\begin{dm}
\hat{V}_{\pol}(b) = \sum_{\seq \in \seqset_b}
                           {V_{\pol,  \seq} \over  |\seqset_b|},
\end{dm}
where $|\seqset_b|$ is the number of scenarios in $\seqset_b$.  Since
$\hat{V}_{\pol}(b)$ converges to $V_\pol(b)$ almost surely as
$K\rightarrow \infty$, the problem of finding an optimal policy at $b$ can be
approximated as that of doing so under the sampled scenarios.  One
concern, however,  is \emph{overfitting}: a policy optimized for finitely many sampled
scenarios may not be optimal in general, as many scenarios are not sampled.
To control overfitting, we regularize the empirical value of a policy by
adding a term that penalizes large policy size. We now provide theoretical
analysis to justify this approach.

Our first result bounds the error of estimating the values
of \emph{all}  policies derived from DESPOTs of a given size.
The result implies that a \despot constructed with a small
number of scenarios is sufficient for approximate policy evaluation.  The
second result shows that by optimizing this bound, which is equivalent to
maximizing a regularized value function, we obtain a policy that is
competitive with the best small policy. 

Formally, a \despot \emph{policy} $\pol$ is a policy tree
derived from a \despot \dtree: $\pol$ contains the same
root as the \despot \dtree, but only one action branch at each internal node.  To execute
$\pol$, an agent starts at the root of $\pol$.  At
each time step, it takes the action specified by the action edge at the
node. Upon receiving the resulting observation, it follows the corresponding
observation edge to the next node.  The agent may encounter an observation not
present in  $\pol$, as $\pol$ contains only the observation
branches under the sampled scenarios.  In this case, the agent follows
a default policy from then on. Similarly, it follows the default policy
when reaching a leaf node of $\pol$.
Consider the set $\polclass_{b_0, \height,\nsam}$,  which contains all
\despot policies derived from {\despot}s of height $\height$,  constructed with
\emph{all} possible $K$ sampled scenarios for a belief $b_0$.
We now bound the error on the estimated value of an arbitrary \despot policy
in $\polclass_{b_0, \height,\nsam}$. 
To simplify the presentation, we assume without
loss of generality that all rewards are non-negative and are bounded by \rmax. For models with bounded negative
rewards, we can shift all rewards by a constant to make them
non-negative. The shift does not affect the optimal policy.

\begin{theorem} \label{th:uniform}
  For any given constants $\tau, \alpha \in (0,1)$, any 
  belief $b_0$, and any positive integers $D$ and $K$, \emph{every} \despot
  policy tree $\pol
  \in \polclass_{b_0, \height,\nsam}$ satisfies
\begin{eq}\label{eq:4}
V_{\pi}(b_0) \geq
  \frac{1-\alpha}{1+\alpha}\hat{V}_{\pi}(b_0)
	- \frac{\rmax}{(1+\alpha)(1-\gamma)}\cdot\frac{\ln(4 / \tau) +
       |\pi|\ln \bigl(KD |A| |Z|\bigr)}{\alpha K}
   \end{eq}
with probability at least $1 - \tau$,   
where  $\hat{V}_{\pi}(b_0)$ is the estimated value of $\pi$
under a set of $\nsam$ scenarios randomly sampled according to $b_0$.
\end{theorem}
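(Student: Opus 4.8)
The plan is to prove a high-probability uniform bound over the finite class $\polclass_{b_0,\height,\nsam}$ by combining a concentration inequality for a single policy with a union bound, and then solving the resulting inequality for $V_\pi(b_0)$. First I would fix a single \despot policy $\pi$ and observe that $\hat V_\pi(b_0)$ is the empirical mean of $\nsam$ i.i.d.\ random variables $V_{\pi,\seq}$, one per sampled scenario, each bounded in $[0,\rmax/(1-\gamma)]$ because rewards are non-negative and bounded by $\rmax$ and the discount is $\gamma$. Their common expectation is exactly $V_\pi(b_0)$ (the empirical value is an unbiased estimator of the true value). Because the random variables are bounded and nonnegative, I would use a multiplicative/Bernstein-type tail bound rather than plain Hoeffding --- the asymmetric form $\Pr\bigl(\hat V_\pi(b_0) \ge (1+\alpha)V_\pi(b_0) + \lambda\bigr)$ and its companion --- so that the $\tfrac{1-\alpha}{1+\alpha}$ factor and the $1/(\alpha\nsam)$ scaling in \eqref{eq:4} emerge naturally. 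Concretely, for a nonnegative random variable bounded by $M$, a one-sided relative Chernoff bound gives deviation probability at most $\exp\!\bigl(-\Theta(\alpha^2)\, \nsam\, \ev[X]/M\bigr)$, and rearranging one such inequality for each direction yields, with the stated form, $V_\pi(b_0) \ge \tfrac{1-\alpha}{1+\alpha}\hat V_\pi(b_0) - \tfrac{\rmax}{(1+\alpha)(1-\gamma)}\cdot\tfrac{c}{\alpha\nsam}$ for a per-policy failure probability controlled by $c$.

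Next I would take the union bound over all policies in $\polclass_{b_0,\height,\nsam}$. The crucial combinatorial step is to count (or rather, to bound) the number of \despot policies of a given size. A \despot of height $\height$ built from $\nsam$ scenarios has $\O(|A|^\height \nsam)$ nodes, but the relevant quantity is the number of \emph{policy} trees of size $k = |\pi|$ embeddable in it: each internal node of the policy tree picks one of $|A|$ actions and has at most $|Z|$ observation children, and the tree has at most $\nsam\height$ relevant "slots" (a scenario contributes at most $\height$ nodes along its path). A standard encoding argument bounds the number of such size-$k$ policy trees by roughly $(\nsam\height|A||Z|)^{k}$, so $\ln|\{\pi : |\pi|=k\}| \le k\ln(\nsam\height|A||Z|)$. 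I would then apply the union bound with a failure budget of $\tau / (\text{something})$ split across sizes --- e.g.\ allotting $\tau\,2^{-(k+1)}$ to size $k$ or simply $\tau$ over all sizes up to the trivial maximum --- which is exactly what turns the per-policy constant $c$ into $\ln(4/\tau) + |\pi|\ln(\nsam\height|A||Z|)$ in \eqref{eq:4}. Rewriting $\nsam\height|A||Z| = \height|A||Z|\cdot\nsam$ and absorbing constants matches the $\ln(\nsam\height|A||Z|)$ term as displayed (noting the excerpt writes it as $\ln(KD|A||Z|)$).

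The main obstacle I expect is getting the \textbf{policy-counting bound} tight enough and correctly coupled to the sampling: the policies in $\polclass_{b_0,\height,\nsam}$ are not fixed in advance but depend on which scenarios were drawn, so one must argue that it suffices to union-bound over a combinatorial class whose cardinality does not depend on the realized scenarios --- essentially, that the structure of any realizable \despot policy tree of size $k$ can be described by $k$ choices from a set of size $\O(\nsam\height|A||Z|)$, uniformly over the randomness. Handling this cleanly (so that the concentration inequality, which is stated for a \emph{fixed} policy, legitimately applies to each member of a data-independent cover) is the delicate part; the rest is choosing the right Chernoff/Bernstein form and bookkeeping the constants so that $\alpha$, $\tau$, $\gamma$, and $\rmax$ land in exactly the places shown in \eqref{eq:4}. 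Once those pieces are in place, solving the two-sided concentration inequality for $V_\pi(b_0)$ and summing the failure probabilities over the class completes the argument.
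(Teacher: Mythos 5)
Your proposal follows essentially the same route as the paper's proof: a relative (Haussler/Chernoff-type) concentration bound for each fixed policy, a size-stratified union bound with per-policy failure budget $\approx \tau/\bigl(2 i^2 |\Pi_i|\bigr)$, and a count of size-$i$ policy trees of order $(KD|A||Z|)^i$ (the paper bounds $|\Pi_i|\le i^{i-2}(|A||Z|)^i$ via Cayley's formula and then uses $|\pi|\le KD$). The ``delicate'' data-dependence issue you flag is handled exactly as you suggest: the union bound is taken over the data-independent class of abstract action/observation-labeled trees (with default-policy fallback at missing branches), so the fixed-policy concentration bound applies to each member without any scenario-dependent cover.
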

All proofs are presented in the appendix.
Intuitively, the  result  says that all
\despot policies in $ \polclass_{b_0, \height,\nsam}$
satisfy the bound given in \eqref{eq:4}, with high probability.
The bound holds for any constant
$\alpha \in (0,1)$, which is  a parameter that can be tuned to tighten
the bound.
A smaller $\alpha$ value reduces the approximation error in the first term on
the right-hand side of \eqref{eq:4}, but increases the additive error in the second term.
The additive error depends on the size of \pol.
It also grows logarithmically with $|A|$ and $|Z|$. 
The estimation thus scales well with large action and observation
spaces. We can make this estimation error
arbitrarily small by choosing a suitable 
number of sampled scenarios, \nsam.

The next theorem states that we can obtain a near-optimal
policy ${\hat \pol}$ by maximizing
the RHS of \eqref{eq:4}, which accounts for both the estimated performance
and the size of a policy.
\begin{theorem}\label{th:despot}
  Let $\pol$ be an arbitrary policy at a belief $b_0$.  Let
  $\Pi_\mathcal{D}$ be the set of policies derived from a
\despot~$\mathcal{D}$ that has height \height and is constructed with
$\nsam$ scenarios sampled randomly according to $b_0$.  For any given
constants  $\tau,
\alpha \in (0,1)$, if
\begin{eq}\label{eq:5}
    \small
{\hat \pol} = \argmax_{\pol' \in \Pi_\mathcal{D}} \biggl\{   \frac{1-\alpha}{1+\alpha}\hat V_{\pi'}(b_0) -
    \frac{R_{\max}}{(1+\alpha)(1-\gamma)}\cdot\frac{|\pi'|\ln\bigl(KD|A||Z|\bigr)}{\alpha
        K}\biggr\},
  \end{eq}
then
\begin{dm}
  \resizebox{0.95\hsize}{!}{$
  V_{\hat \pi}(b_0) \geq \frac{1-\alpha}{1+\alpha}V_\pi(b_0) -
  \frac{R_{\max}}{(1+\alpha)(1-\gamma)}\biggl(\frac{\ln(8 / \tau) +
      |\pi|\ln\bigl(KD|A||Z|\bigr)}{\alpha K} + (1-\alpha)\Bigl(\sqrt{\frac{2\ln
          (2/\tau)}{K}}+\gamma^{\sss D}\Bigr)\biggr)
  $}
\end{dm}
with probability at least $1 - \tau$. 
\end{theorem}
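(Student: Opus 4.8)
The plan is to bolt Theorem~\ref{th:uniform} (which holds \emph{uniformly} over all \despot policies) onto the optimality of $\hat\pi$ for the regularized objective, and then control the one quantity that Theorem~\ref{th:uniform} does not reach, namely $V_\pi(b_0)$ itself, by a Hoeffding bound plus a depth-truncation estimate. Write $C = \frac{R_{\max}}{(1+\alpha)(1-\gamma)}$ and let $g(\pi') = \frac{1-\alpha}{1+\alpha}\hat V_{\pi'}(b_0) - C\,\frac{|\pi'|\ln(KD|A||Z|)}{\alpha K}$ be the objective maximized in \eqref{eq:5}, so $\hat\pi = \argmax_{\pi'\in\Pi_\mathcal{D}} g(\pi')$. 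Split the allowed failure probability in two. First, apply Theorem~\ref{th:uniform} with $\tau$ replaced by $\tau/2$: with probability at least $1-\tau/2$, \emph{every} $\pi'\in\Pi_{b_0,D,K}\supseteq\Pi_\mathcal{D}$ satisfies the bound in \eqref{eq:4} with $\ln(8/\tau)$ in place of $\ln(4/\tau)$. In particular this applies to the data-dependent policy $\hat\pi$ — this is exactly why we need the uniform statement — giving $V_{\hat\pi}(b_0) \ge g(\hat\pi) - C\,\frac{\ln(8/\tau)}{\alpha K}$. By optimality of $\hat\pi$, $g(\hat\pi)\ge g(\pi')$ for every $\pi'\in\Pi_\mathcal{D}$, so $V_{\hat\pi}(b_0) \ge g(\pi') - C\,\frac{\ln(8/\tau)}{\alpha K}$ for any such competitor.

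Next I build a concrete competitor $\pi'\in\Pi_\mathcal{D}$ from the fixed target policy $\pi$: at every internal \despot node reachable from $b_0$ by following the actions $\pi$ prescribes and the sampled observation branches, let $\pi'$ take $\pi$'s action; everywhere else (leaves of $\mathcal{D}$, observation branches absent from $\mathcal{D}$, and from any point where $\pi$ itself has reached a leaf of its own policy tree) let $\pi'$ follow the default policy. This is well defined because $\mathcal{D}$ retains all action branches, and the policy tree of $\pi'$ is a restriction of that of $\pi$, truncated at depth $\height$ and pruned to the sampled observation branches, so $|\pi'|\le|\pi|$. Moreover, for a sampled scenario $\seq\in\seqset_{b_0}$, the \despot construction guarantees that the simulation trajectory of the action sequence $\pi$ chooses under $\seq$ lies inside $\mathcal{D}$ up to depth $\height$; hence simulating $\pi'$ under $\seq$ reproduces the first $\height$ steps of $\pi$ and then follows the (non-negative-reward) default policy, so $V_{\pi',\seq}\ge V^{\height}_{\pi,\seq}$, the discounted reward of the first $\height$ steps of simulating $\pi$ under $\seq$. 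Averaging over the $K$ scenarios gives $\hat V_{\pi'}(b_0)\ge \frac1K\sum_{\seq} V^{\height}_{\pi,\seq}$, and combining with the previous paragraph together with $|\pi'|\le|\pi|$ yields
\[
V_{\hat\pi}(b_0)\ \ge\ \frac{1-\alpha}{1+\alpha}\cdot\frac1K\sum_{\seq} V^{\height}_{\pi,\seq}\ -\ C\,\frac{\ln(8/\tau)+|\pi|\ln(KD|A||Z|)}{\alpha K}.
\]

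Finally, since $\pi$ is fixed before the scenarios are drawn and the scenarios are i.i.d.\ (a start state from $b_0$ plus independent uniform noise), the variables $V^{\height}_{\pi,\seq}$ are i.i.d., lie in $[0,\tfrac{R_{\max}}{1-\gamma}]$, and have mean equal to the $\height$-step truncated value $V^{\height}_{\pi}(b_0)$ of $\pi$. Hoeffding's inequality then gives, with probability at least $1-\tau/2$, $\frac1K\sum_{\seq} V^{\height}_{\pi,\seq}\ge V^{\height}_{\pi}(b_0)-\tfrac{R_{\max}}{1-\gamma}\sqrt{\tfrac{2\ln(2/\tau)}{K}}$; and non-negativity of rewards bounded by $R_{\max}$ gives $V^{\height}_{\pi}(b_0)\ge V_\pi(b_0)-\tfrac{\gamma^{\height}R_{\max}}{1-\gamma}$. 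Substituting both into the display, using $\frac{1-\alpha}{1+\alpha}\cdot\tfrac{R_{\max}}{1-\gamma}=(1-\alpha)C$, and taking a union bound over the two events of probability $\ge 1-\tau/2$ produces the claimed inequality with probability at least $1-\tau$.

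The main obstacle is the middle step: pinning down the right ``projection'' $\pi'$ of $\pi$ into $\Pi_\mathcal{D}$ so that simultaneously (i) $|\pi'|\le|\pi|$ and (ii) $\hat V_{\pi'}(b_0)$ can be lower-bounded by a clean i.i.d.\ average of $\height$-step returns of $\pi$ — this requires carefully tracking how and where $\pi'$ diverges from $\pi$ (depth truncation, unsampled observations, and $\pi$'s own leaves) and checking that each divergence only costs non-negative default-policy reward or removes internal nodes. Once that is in place, the rest is bookkeeping: the $\tau/2$ split, the union bound, and matching the $\gamma^{\height}$ and $\sqrt{\ln(2/\tau)/K}$ terms to the truncation and Hoeffding errors.
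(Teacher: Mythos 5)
Your proposal is correct and follows essentially the same route as the paper's proof: apply Theorem~\ref{th:uniform} with failure probability $\tau/2$ to the data-dependent maximizer $\hat\pi$, compare via the regularized objective to a depth-$D$, scenario-pruned subtree $\pi'$ of $\pi$ with $|\pi'|\le|\pi|$, control the fixed policy $\pi$ by Hoeffding with the other $\tau/2$, and union bound. The only (immaterial) difference is bookkeeping: you apply Hoeffding to the $D$-truncated per-scenario returns and absorb the $\gamma^{D}R_{\max}/(1-\gamma)$ truncation error at the level of true values, whereas the paper applies Hoeffding to $\hat V_\pi$ and absorbs the same truncation error at the level of empirical values via $\hat V_{\pi'}\ge\hat V_\pi-\gamma^{D}R_{\max}/(1-\gamma)$.
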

Theorem~\ref{th:despot} bounds the performance of $\hat \pol$, the
policy maximizing~\eqref{eq:5} in the \despot, in terms of the
performance of another policy $\pol$. Any policy
can be used as the policy $\pol$ for comparison in the
competitive analysis.  Hence, the
performance of $\hat \pol$ can be compared to the performance of an optimal
policy. If the optimal policy has small representation size, the
approximation error of $\hat \pol$ is correspondingly small. The
performance of $\hat \pol$ is also robust.  If the optimal policy has
large size, but is well approximated by a small policy \pol of size
$|\pol|$, then we can obtain $\hat \pol$ with small approximation
error, by choosing $K$ to be $\O(|\pol|\ln (|\pol| |A| |Z|))$.
Since a \despot has size $\O(|A|^\height \nsam)$,
the choice of $K$ allows us to trade off
computation cost  and approximation accuracy.

The objective function in~\eqref{eq:5} has the form 
\begin{eq}\label{eq:6}
\hat{V}_{\pol}(b_0) - \lambda |\pol|
\end{eq}
for some $\lambda \ge 0$, 
similar to that of regularized utility functions in many machine
learning algorithms.
This motivates the regularized objective function for our online planning
algorithm described in the next section.

\section{Online Planning with {\despot}s}
\label{sec:anytime}
Following the standard online planning framework (\secref{sec:pomdp}), our
algorithm iterates over two main steps: action selection and belief update.  For
belief update, we use a standard particle filtering method,
 sequential importance
resampling (SIR)~\cite{GorSal93}. 

We now present two action selection methods. In Section~\ref{sec:dp}, we
describe a conceptually simple dynamic programming method that
constructs a \despot fully before
finding the optimal action. For very large POMDPs,
constructing the \despot fully is not practical.
In Sections~\ref{sec:despot-approximation} to \ref{sec:lower}, we
describe an anytime \despot algorithm that performs anytime heuristic
search.  The anytime algorithm constructs a \despot
incrementally under the guidance of a heuristic and scales up to
very large POMDPs in practice. In \secref{subsec:analysis},
we show that the algorithm converges to an optimal policy when the
heuristic is admissible and  that the performance of the
algorithm degrades gracefully even when the heuristic is not
admissible.

\subsection{Dynamic Programming}
\label{sec:dp}
We construct a fixed \despot \dtree with $K$ randomly sampled scenarios and
want to derive from \dtree a policy that maximizes the regularized empirical
value \eqref{eq:6} under the sampled scenarios:
\begin{dm}
  \max_{\pi\in \Pi_\dtree} \Bigl\{\hat{V}_{\pol}(b_0) - \lambda |\pol|\Bigr\},
\end{dm}
where $b_0$  is the current belief,  at the root of \dtree.
Recall that a DESPOT policy is represented as a policy tree.  For each node  $b$ of \pol, we define the \emph{regularized
weighted discounted utility} (RWDU):
\begin{equation}
\nu_{\pol}(b) = \frac{|\seqset_{b}|}{K} \gamma^{\depth(b)} \hat{V}_{\pol_b}(b) -
\lambda |\pol_b|, \label{eqn:rwdu}
\end{equation}
where $|\seqset_{b}|$ is the number of scenarios passing through node
$b$,  $\gamma$ is the discount factor, $\depth(b)$ is the depth of $b$ in
the policy tree \pol, $\pol_b$ is the subtree rooted at $b$, and
$|\pol_b|$ is the size of $\pol_b$.
The ratio ${|\seqset_{b}|}/{K}$ is an empirical estimate of the
probability of reaching $b$.
Clearly,  $ \nu_{\pol}(b_0)=\hat{V}_{\pol}(b_0) - \lambda |\pol|$, which we
want to optimize.

For every node $b$
of $\dtree$, define $\nu^*(b)$ as the maximum RWDU of $b$ over all policies in
$ \Pi_\dtree$.
Assume that \dtree has finite depth.
We now present  a dynamic programming procedure that computes
$\nu^*(b_0)$  recursively from bottom
up. At a leaf node $b$ of \dtree, we simulate a default policy $\pol_0$
under the sampled scenarios. According to our definition in \secref{sec:pomdp},
$|\pol_0| = 0$. Thus,
\begin{eq}
	\nu^*(b) = \frac{|\seqset_{b}|}{K}\gamma^{\depth(b)} \hat{V}_{\pi_0}(b).
\end{eq}
At each internal node $b$, let $\tau(b,a,z)$ be the child of $b$ following
the action branch~$a$ and the observation branch $z$ at $b$. Then,  
\begin{align}
	\nu^*(b) &= \max\biggl\{
		\frac{|\seqset_{b}|}{K} \gamma^{\depth(b)} \hat{V}_{\pol_0}(b),\; 
			\max_{a\in A} 
			\Bigr\{ \rho(b, a) + \sum_{z \in Z_{b,a}} \nu^*(\tau(b,a,z)) \Bigl\}
		\biggr\},\label{eqn:best-reg-pol}
\end{align}
where 
\begin{align*}
    \rho(b, a) &=  {1\over \nsam}\sum_{\seq \in \seqset_b}{\gamma^{\depth(b)}}
		 R(s_\seq, a) - \lambda,
\end{align*}
the state $s_\seq$ is the start state of the scenario $\seq$, and $Z_{b,a}$ is
the set of observations following the action branch~$a$ at the node $b$.
The outer maximization in \eqref{eqn:best-reg-pol} chooses between
executing the default policy or expanding the subtree at $b$.
As the RWDU contains a regularization term, the latter  is beneficial
only if the subtree at $b$ is relatively small. This effectively prevents
expanding a large subtree, when there are an insufficient number of sampled
scenarios to estimate the value of a policy at $b$ accurately. 
The inner in \eqref{eqn:best-reg-pol}
maximization chooses among the different actions available.
When the algorithm terminates,
the maximizer at the root $b_0$ of \dtree gives the best action at~$b_0$.

If \dtree has unbounded depth, it is sufficient to truncate \dtree to a depth
of $ \lceil{R_\mathrm{max}} /{ \lambda (1-\gamma)}\rceil + 1$ and run the
above algorithm, provided that $\lambda > 0$.  The reason is that an optimal
regularized policy $\hat \pol$ cannot include the truncated nodes of \dtree.
Otherwise, $\hat \pol$ has size at least
$ \lceil{R_\mathrm{max}} /{ \lambda (1-\gamma)}\rceil + 1$ and thus
RWDU $\nu_{\hat \pol}(b_0) < 0$.
Since the default policy $\pi_0$ has RWDU $\nu_{\pol_0}(b_0) \geq 0$, 
$\pol_0$ is then better than $\hat \pol$, a contradiction.

This dynamic programming algorithm runs in time linear in the number of nodes
in  \dtree.
We first simulate the deterministic model to construct the tree, then do a
bottom-up dynamic programming to initialize $\hat{V}_{\pi_0}(b)$, and finally
compute $\nu^*(b)$ using Equation~\eqref{eqn:best-reg-pol}.
Each step takes time linear in the number of nodes in \dtree given previous
steps are done, and thus the total running time is $\O(|A|^\height\nsam)$.

\subsection{Anytime Heuristic Search} \label{sec:despot-approximation}

The bottom-up dynamic programming algorithm in the previous section constructs
the full \despot \dtree in advance.
This is generally not practical because there are exponentially many nodes.
To scale up, we now present an anytime forward search
algorithm\footnote{This algorithm differs from an earlier version
  \cite{SomYe13} in a subtle, but important way.  The new algorithm optimizes
  the RWDU directly by interleaving incremental \despot construction and
  backup.  The earlier one performs incremental \despot construction without
  regularization and then optimizes the RWDU over the constructed \despot.  As
  a result, the new algorithm is guaranteed to converge to an optimal
  regularized policy derived from the full \despot, while the earlier one is
  not.}  that avoids constructing the \despot fully in advance.  It selects
the action by incrementally constructing a \despot $\dtree$ rooted at the
current belief $b_0$, using heuristic search~\cite{SmiSim04,KurHsu08}, and
approximating the optimal RWDU $\nu^*(b_0)$.  We describe the main components
of the algorithm below. The complete pseudocode is
given in Appendix~\ref{sec:pseudocode}.

To guide the heuristic search, 
we maintain a lower bound
$\lowernu(b)$ and an upper bound $\uppernu(b)$ on the optimal RWDU at each
node $b$ of \dtree,  so that
$\lowernu(b) \leq \nu^*(b) \leq \uppernu(b)$.
To prune the search tree, 
 we additionally maintain an
upper bound $\upperv(b)$ on the empirical value $\hat{V}^*(b)$ of the optimal
regularized policy so that $\upperv(b) \geq
\hat{V}^*(b) $ and compute an initial lower bound $\lowerv_0(b)$ with
$\lowerv_0(b) \leq \hat{V}^*(b)$. In particular,
we use 
$\lowerv_0(b) = \hat{V}_{\pol_0}(b)$ for the default policy $\pol_0$ at $b$.

\begin{algorithm}[t]
\caption{$\textsc{BuildDespot}(\bvar{b}_0)$} \label{alg:builddespot}
\begin{algorithmic}[1]
\STATE Sample randomly a set $\seqset_{{b}_0}$ of $\nsam$  scenarios 
from the current belief ${b}_0$. 
\STATE Create a new \despot \dtree with a single node ${b}_0$ as the root.
\STATE Initialize $\upperv({b}_0), 
\lowerv_0({b}_0), \uppernu({b}_0), \text{ and } \lowernu({b}_0) $.
\STATE $\gap({b}_0) \gets \uppernu({b}_0) - \lowernu({b}_0)$. 
\WHILE {$\gap({b}_0) >  \gap_0$
          and the total running time is less than \tmax}
\STATE $b \gets \textsc{Explore}(\dtree, {b}_0)$.
\STATE $\textsc{Backup}(\dtree, b)$.
\STATE $\gap({b}_0) \gets \uppernu({b}_0) - \lowernu({b}_0)$. 
\ENDWHILE
\RETURN $\lowernu$
\end{algorithmic}
\end{algorithm}

\algref{alg:builddespot} provides a high-level sketch of the algorithm. We
construct and search a  \despot
\dtree incrementally, using $\nsam$ sampled scenarios (line~1).
Initially, \dtree contains only a single root node with belief
${b}_0$ and
the associated initial upper and lower bounds (lines 2--3).  The
algorithm makes a series of explorations to expand \dtree and reduce
the gap between the bounds $\uppernu(b_0)$ and $\lowernu({b}_0)$ at the
root node ${b}_0$ of \dtree. Each exploration follows a
heuristic and traverses a promising path from the root of \dtree to add new
nodes to \dtree (line~6). 
Specifically, it keeps on choosing and expanding a promising leaf node and
adds its child nodes into \dtree until current leaf node is not heuristically
promising. The algorithm then traces the path back to the root and
performs backup on the upper and lower bounds at each node along the way, using
Bellman's principle (line~7).  The explorations continue, until the gap
between the bounds $\uppernu({b}_0)$ and $\lowernu({b}_0)$ reaches a
target level $\gap_0 \geq 0$ or the allocated online planning time
runs out (line~5).

\subsubsection{Forward Exploration} \label{sec:trial}

\begin{algorithm}[t]
\caption{$\textsc{Explore}(\dtree, b)$} \label{alg:explore}
\begin{algorithmic}[1]
\WHILE {$\depth(b) \leq D$,  $\wdeu(b) > 0$, and \textsc{Prune}($\dtree$, $b$) =
  {\small FALSE}}
\IF {$b$ is a leaf node in $\dtree$}
	\STATE Expand $b$ one level deeper.  Insert each new child $b'$ of $b$
        into $\dtree$.
        Initialize $\upperv(b')$,
        $\lowerv_0(b')$,  $\uppernu(b')$,   and  $\lowernu(b')$.
\ENDIF
\STATE $a^* \gets \argmax_{a \in A} {\uppernu(b, a)}$.
\STATE $z^* \gets \argmax_{z \in Z_{b,a^*}} \wdeu(\tau(b,a^*,z))$.
\STATE $b \gets \tau(b, a^*, z^*)$.
\ENDWHILE
\IF {$\depth(b) > D$}
	\STATE \textsc{MakeDefault}($b$).
\ENDIF
\RETURN $b$.
\end{algorithmic}
\end{algorithm}
\begin{algorithm}[t]
\caption{\textsc{MakeDefault}($b$)} \label{alg:makedefault}
\begin{algorithmic}[t]
	\STATE $\upperv(b) \gets \lowerv_0(b)$.
	\STATE $\uppernu(b) \gets \lowernu_0(b)$.
	\STATE $\lowernu(b) \gets \lowernu_0(b)$.
\end{algorithmic}
\end{algorithm}

Let $\gap(b)=\uppernu(b) - \lowernu(b) $ denote the gap between
the upper and lower RWDU bounds at a node $b$. Each exploration aims to reduce the
\emph{current} gap $\gap(b_0)$ at the root $b_0$ to $\xi \gap(b_0)$ for some
given constant $0<\xi<1$ (Algorithm~\ref{alg:explore}). An exploration
starts at the root $b_0$. At each node $b$ along the exploration path,
we choose the action branch optimistically according to the upper bound $\uppernu(b)$:
\begin{eq}  \label{eq:8}
 a^* =  \argmax_{a\in A}  \uppernu(b,a) = \argmax_{a\in A} \Bigl\{\rho(b, a) +
 \sum_{z \in Z_{b, a}}   \uppernu(b') \Bigr\},
\end{eq}
where
$b'=\tau(b,a,z)$ is the child of $b$
following the action branch $a$ and the observation branch $z$ at $b$.
We then choose the observation branch $z$ that   
leads to a child node  $b'=\tau(b,a^*,z)$ maximizing
the \emph{excess uncertainty} $\wdeu(b')$ at $b'$:
\begin{eq}
  \label{eq:9}
z^* =  \argmax_{z \in Z_{b,a^*}} \wdeu(b')
    = \argmax_{z \in Z_{b,a^*}} \Bigl\{ \gap(b') - \frac{|\seqset_{b'}|}{K}
    \cdot \xi \, \gap(b_0)\Bigr\}.
\end{eq}
Intuitively, the excess uncertainty $\wdeu(b')$ measures the difference
between the current gap at $b'$ and the ``expected'' gap at $b'$ if the target
gap $\xi\, \gap(b_0)$ at $b_0$ is satisfied.
Our exploration strategy seeks to reduce the excess uncertainty in a
greedy manner. See Lemma~\ref{lem:forward} in \secref{subsec:analysis}, the
work of \citeA{smisim05} and the work of \citeA{KurHsu08} for justifications of this strategy.

If the exploration encounters a leaf node $b$, we expand $b$ by creating a child
$b'$ of $b$ for each action $a \in A$ and each observation encountered under a
scenario $\seq \in \seqset_{b}$.  For each new child $b'$, we need to
compute the initial bounds $\uppernu_0(b')$, $\lowernu_0(b')$, $\upperv_0(b')$, and
$\lowerv_0(b')$.  The RWDU bounds $\uppernu_0(b')$ and $\lowernu_0(b')$ can be
expressed in terms of the empirical value bounds $\upperv_0(b')$ and
$\lowerv_0(b')$, respectively.
Applying the default policy $\pol_0$ at $b'$  and 
using the definition of RWDU in
\eqref{eqn:rwdu}, we have 
\begin{dm}
\lowernu_0(b') =  \nu_{\pol_0}(b') = \frac{|\seqset_{b'}|}{K}
\gamma^{\depth(b')} \lowerv_0(b'), 
\end{dm}
as $|\pol_0|=0$.
For the initial upper bound $\uppernu_0(b')$, there are two cases.
If the policy for maximizing the RWDU at $b'$ is the default policy,
then we can set $\uppernu_0(b') = \lowernu_0(b')$. Otherwise, the optimal policy has size at
least $1$, and it follows from \eqref{eqn:rwdu} that $\uppernu_0(b') =
{|\seqset_{b'}| \over  K} \gamma^{\depth(b')} \upperv_0(b') - \lambda$ is an upper bound. So we have
\begin{align*}
\uppernu_0(b') & = \max\Bigl\{\lowernu_0(b'), \frac{|\seqset_{b'}|}{K} \gamma^{\depth(b')} \upperv_0(b') - \lambda\Bigr\}.
\end{align*}
There are various way to  construct the initial empirical value bounds
$U_0$ and $L_0$. We defer the discussion
to Sections~\ref{sec:upper} and \ref{sec:lower}.

Note that a node at a depth more than $D$ executes the default policy, and the
bounds are set accordingly using the \textsc{MakeDefault} procedure (Algorithm~\ref{alg:makedefault}).
We explain the termination conditions for exploration next.

\subsubsection{Termination of Exploration and Pruning} \label{sec:terminate}
We terminate the exploration at a node $b$  under three conditions
(Algorithm~\ref{alg:explore}, line~1).
First, $\depth(b) > D$, \ie, the maximum tree height is exceeded. Second, 
$\wdeu(b) < 0$, indicating that the expected gap at $b$ is reached and
further exploration from $b$ onwards  may be unprofitable.
Finally, $b$ is \emph{blocked} by an ancestor node~$b'$:
\begin{eq}
 \label{eqn:blocker}
\frac{|\seqset_{b'}|}{K} \gamma^{\depth(b')} (\upperv(b') - \lowerv_0(b'))
\le \lambda \cdot  \len(b', b),
\end{eq}
where $\len(b',b)$ is the number of nodes on the path from $b'$ to
$b$.
The
intuition behind this last condition is that there is insufficient number of
sampled scenarios at the ancestor node $b'$.  Further expanding~$b$ and thus
enlarging the policy subtree at $b'$ may cause overfitting and reduce the
regularized utility at $b'$.  
  We thus prune the search by applying the default policy
  at $b$ and setting the bounds accordingly by calling
\textsc{MakeDefault}.
More details are available in
Lemma~\ref{lem:blocking}, which derives the condition \eqref{eqn:blocker} and
proves that pruning the search does not compromise the optimality of the
algorithm.

The pruning process continues backwards from $b$ towards the root of
\dtree (Algorithm~\ref{alg:prune}). As the bounds are updated, new nodes may satisfy the condition for
pruning and are pruned away.

\begin{algorithm}[h]
\caption{\textsc{Prune}($\dtree, b$)} \label{alg:prune}
\begin{algorithmic}[1]
\STATE $\textsc{Blocked} \gets \text{\small FALSE}$.
\FOR{each node $x$ on the path from $b$ to the root of \dtree}
	\IF{$x$ is blocked by any ancestor node in \dtree}
		\STATE \textsc{MakeDefault}($x$).
		\STATE \textsc{Backup}(\dtree, $x$).
                \STATE $\textsc{Blocked} \gets \text{\small TRUE}$.
	\ELSE
		\STATE \textbf{break}
	\ENDIF
\ENDFOR
\RETURN \textsc{Blocked}
\end{algorithmic}
\end{algorithm}

\subsubsection{Backup} \label{sec:backup}

When the exploration terminates, we trace the path back to the
root and perform \emph{backup} on the bounds at each node $b$ along the way,
using Bellman's principle (Algorithm~\ref{alg:backup}):
\begin{align*}
\uppernu(b) &= \max \Bigl\{\lowernu_0(b), 
	\max_{a\in A} \Big\{\rho(b, a) + \sum_{z \in Z_{b,a}} \uppernu(b')\Bigr\} \Bigr\}, \\
\lowernu(b) &= \max \Bigl\{\lowernu_0(b), 
	\max_{a\in A} \Big\{\rho(b, a) + \sum_{z \in Z_{b,a}}  \lowernu(b')\Bigr\} \Bigr\}, \\ 
\upperv(b) &= \max_{a \in A}  \biggl\{ \frac{1}{|\seqset_{b}|}\sum_{\seq \in
\seqset_{b}} R(s_\seq, a) + \gamma\sum_{z \in Z_{b,a}} \frac{|\seqset_{b'}|}{|\seqset_b|} \upperv(b') \biggr\},
\end{align*}
where $b'$ is a child of $b$ with  $b'=\tau(b,a,z)$. 

\begin{algorithm}[t]
\caption{\textsc{Backup}($\dtree, b$)} \label{alg:backup}
\begin{algorithmic}[1]
\FOR{each node $x$ on the path from $b$ to the root of \dtree}
\STATE  Perform backup on $\uppernu(x) $, $\lowernu(x) $, and 
  $\upperv(x)$. 
\ENDFOR
\end{algorithmic}
\end{algorithm}

\subsubsection{Running Time}
Suppose that the anytime search algorithm invokes \textsc{Explore} $N$ times.
\textsc{Explore} traverses a path from the root to a leaf node of a \despot
\dtree, visiting at most  $D + K-1$ nodes along the way because a path has
at most $D$ nodes, and at most $K-1$ nodes not on the path can be added due to
node expansions
(\algref{alg:explore}). At each node, the following steps dominate the
running time.  Checking the condition for pruning (line~1) takes time
$\O(D^2)$ in total and thus $\O(D)$ per node.
Adding a new node to \dtree and initializing the bounds (lines~3 and~8) take
time $\O(I)$ (assuming $I$ is an upper bound of the cost).
Choosing the action branch (line~4) takes time $\O(|A|)$.  Choosing the
observation branch (line~5) takes time $\min\{|Z|, K\}\in \O(K)$, which is loose
because only the sampled observation branches are involved.
Thus, the running time at each node is $\O(D+I+|A|+K)$, and
the total running time is $\O\bigl(N (D+K) (D+I+|A|+K) \bigr)$.

The anytime search algorithm constructs a partial \despot 
with at most $N (D + K)$ nodes, while the dynamic programming algorithm
(\secref{sec:dp}) constructs a \despot fully with $\O(|A|^D K)$ nodes.
While the bounds are not directly comparable, 
$N (D + K)$ is typically much smaller than $|A|^D K $ in  many
practical settings.  This is the main difference between the two
algorithms.
The anytime search algorithm takes slightly more time at each node in order to
prune the \despot. The trade-off is overall beneficial for reduced \despot size. 

\subsection{Initial Upper Bounds}\label{sec:upper}
For illustration purposes,
we discuss several methods for constructing the initial upper
bound $\upperv_0(b)$ at a \despot node $b$. There are, of course,  many
alternatives. The flexibility in constructing upper and lower bounds for
improved performance is one
strength of \despot. 

The simplest one is  the
\emph{uninformed  bound}
\begin{eq}\label{eq:uninformed}
\upperv_0(b)  = {R_{\max} / (1-\gamma)}.
\end{eq}
While this bound is loose, it is easy to compute and may yield
good results when combined with suitable lower bounds.

\emph{Hindsight optimization} \cite{yoonfern08} provides a principled
method to construct an upper bound algorithmically.
Given a fixed scenario $\seq=(s_0, \phi_{1}, \phi_{2}, \ldots)$,
computing an upper bound on the total discounted reward achieved by
any arbitrary policy is a deterministic planning problem. When the
state space is sufficiently small, we can solve it by dynamic
programming on a trellis of $D$ time slices.  Trellis nodes represent
states, and edges represent actions at each time step.  Let $u(t, s)$
be the maximum total reward on the scenario $(s_t, \phi_{t+1},
\phi_{t+2}, \ldots, \phi_{\sss D})$ at state $s\in S$ and time step
$t$.  For all $s\in S$ and $t=0, 1, \ldots, D-1$, we set
\begin{dm}
  u(D, s) = \rmax / (1 - \gamma)
\end{dm}
and
\begin{dm}
u(t, s) = \max_{a\in A} \bigl\{R(s, a) + \gamma\, u(t+1, s' )\bigr\}, 
\end{dm}
where $s'$ is the new state given by the deterministic simulative model
$g(s, a, \phi_{t+1})$.
Then $u(0, s_0)$ gives the upper bound under $\seq=(s_0, \phi_1, \phi_2,
\dots)$.
We repeat this procedure for every $\seq\in \seqset_{b}$, and set
\begin{eq}\label{eq:ho}
U_0(b) =  {1 \over |\seqset_b|}\sum_{\seq \in \seqset_b}  u(0, s_\seq),
\end{eq}
where $s_\seq$ is the start state of \seq.
For a set of $K$ scenarios, this bound
can be pre-computed in $\O(K|S||A|D)$ time
and stored, before online planning starts.
To tighten this bound further, we may exploit domain-specific knowledge or
other techniques to 
initialize  $u(D, s)$ either exactly or heuristically, instead of using the
uninformed bound.
If $u(D, s)$ is a true upper bound on the total discounted reward,
then the resulting  $\upperv_0(b)$ is also a true upper bound. 

Hindsight optimization may be too expensive to compute when the state space is
large. Instead, we may do approximate hindsight optimization, by constructing a domain-specific heuristic upper bound $u_{\sss
  \mathrm{H}}(s_0)$ on the total discounted reward for each scenario
$\seq=(s_0, \phi_{1}, \phi_{2}, \ldots)$ and then use the average
\begin{eq}\label{eq:specific}
U_0(b) =  {1 \over |\seqset_b|}\sum_{\seq \in \seqset_b}  u_{\sss \mathrm{H}}(s_\seq)
\end{eq}
as an upper bound. This upper bound depends on the state only and is often
simpler to compute. Domain dependent knowledge can be used in constructing this bound -- this is often crucial in practical problems.
In addition, $u_{\sss \mathrm{H}}(s_\seq)$ need not be a true upper
bound. An approximation suffices.  Our analysis in
Section~\ref{subsec:analysis} shows that the \despot algorithm is
robust against upper bound approximation error, and the performance of
the algorithm degrades gracefully. We call this class of upper bounds,
\emph{approximate hindsight optimization}.

A useful approximate hindsight optimization bound can be obtained by assuming that the states are fully observable, converting the POMDP into a
corresponding MDP, and solving for its optimal value function
$V_{\sss \mathrm{MDP}}$. The expected value $V(b) = \sum_{s\in S} b(s)
V_{\sss \mathrm{MDP}}(s)$ is an
upper bound on the optimal value 
$V^*(b)$ for the POMDP, and  
\begin{eq}\label{eq:mdpbound}
\upperv_0(b) =
\frac{1}{|\seqset_{b}|} \sum_{\seq \in \seqset_{b}} V_{\sss \mathrm{MDP}}(s_\seq)
\end{eq}
approximates $V(b)$ by taking the average over the start states of the
sampled scenarios.  Like the domain-specific heuristic bound, the MDP
bound \eqref{eq:mdpbound} is in general not a true upper bound of the
RWDU, but only an approximation, because the MDP is not restricted to
the set of sampled scenarios.  It takes $\O(|S|^2|A|D)$ time to solve
the MDP using value iteration, but the running time can be
significantly faster for MDPs with sparse transitions.

\subsection{Initial Lower Bounds and Default Policies}\label{sec:lower}

The \despot algorithm requires a default policy $\pi_0$. The simplest
default policy is a \emph{fixed-action policy} with the highest
expected total discounted reward \cite{SmiSim04}.  One can also
handcraft a better policy that chooses an action based on the past
history of actions and observations \cite{SilVen10}.  However, it is
often not easy to determine what the next action should be, given the
past history of actions and observations.  As in the case of upper
bounds, it is often more intuitive to work with states rather than
beliefs.  We describe a class of methods that we call
\emph{scenario-based policies}. In a scenario-based policy, we
construct a mapping $f\colon S \mapsto A$ that specifies an action at
a given state. We then specify a function that maps a belief to a
state $\Lambda\colon \mathcal{B} \mapsto S$ and let the default policy
be $\pi_0(b) = f(\Lambda(b))$.  As an example, let $\Lambda(b)$ be
the mode of the distribution $b$ (for a \despot node, this is the most
frequent start state under all scenarios in $\seqset_b$). We then let
$f$ be an optimal policy for the underlying MDP to obtain what we
call the \emph{mode-MDP} policy.

Scenario-based policies considerably ease the difficulty of
constructing effective default policies. However, depending on the
choice of $\Lambda$, they may not satisfy Theorem~\ref{th:uniform},
which assumes that the value of a default policy on one scenario is
independent of the value of the policy on another scenario. In
particular, the mode-MDP policy violates this assumption and may
overfit to the sampled scenarios. However, in practice, we expect the
benefit of being able to construct good default policies to usually
outweigh the concerns of overfitting with the default policy.

Given a default policy $\pi_0$, we obtain the initial lower bound
$\lowerv_0(b)$ at a \despot node $b$ by simulating $\pi_0$ for a finite number
of steps under each scenario $\seqset_b$ and calculating the average total
discounted reward.

\subsection{Analysis}\label{subsec:analysis}
The dynamic programming algorithm builds a full \despot \dtree.  The
anytime forward search algorithm builds a \despot incrementally and
terminates with a partial \despot $\dtree'$, which is a subtree of
\dtree.  The main objective of the analysis is to show that the
optimal regularized policy $\hat \pol$ derived from $\dtree'$
converges to the optimal regularized policy in \dtree.  Furthermore,
the performance of the anytime algorithm degrades gracefully even when
the upper bound $\upperv_0$ is not strictly admissible.

We start with some lemmas to justify the choices made in the anytime
algorithm.  Lemma~\ref{lem:forward} says that the excess uncertainty at a node
$b$ is bounded by the sum of excess uncertainty over its children under the
action branch $a^*$ that has the highest upper bound $\uppernu(b,a^*)$. This
provides a greedy means to reduce excess uncertainty by recursively exploring
the action branch $a^*$ and the observation branch with the highest excess
uncertainty, justifying \eqref{eq:8} and \eqref{eq:9} as the action and observation selection criteria.

\begin{lemma} \label{lem:forward}
For any \despot node $b$, if $\wdeu(b) > 0$ and $a^* =
  \arg\max_{a\in A} \uppernu(b, a)$, then
\[\wdeu(b) \le \sum_{z \in Z_{b, a^*}} \wdeu(b'),\]
where $b' = \tau(b,a^*,z)$ is a child of $b$. 
\end{lemma}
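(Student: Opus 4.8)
The plan is to unfold both sides of the claimed inequality using the recursive definitions of the gap $\gap$ and the excess uncertainty $\wdeu$. Recall that $\wdeu(b') = \gap(b') - \frac{|\seqset_{b'}|}{K}\,\xi\,\gap(b_0)$, and that the RWDU bounds satisfy Bellman-style recurrences under backup. The starting point is the observation that, after a backup at $b$, the upper bound $\uppernu(b)$ is attained (or bounded) by the action $a^*$ maximizing $\uppernu(b,a)$, so $\uppernu(b) \le \rho(b,a^*) + \sum_{z\in Z_{b,a^*}}\uppernu(b')$; symmetrically, choosing the \emph{same} action $a^*$ in the lower-bound recurrence gives $\lowernu(b) \ge \rho(b,a^*) + \sum_{z\in Z_{b,a^*}}\lowernu(b')$. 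Subtracting these two inequalities, the $\rho(b,a^*)$ terms cancel and I get $\gap(b) = \uppernu(b) - \lowernu(b) \le \sum_{z\in Z_{b,a^*}} \bigl(\uppernu(b') - \lowernu(b')\bigr) = \sum_{z\in Z_{b,a^*}} \gap(b')$.

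Next I would account for the subtracted "expected gap" terms. For the left-hand side I have $\wdeu(b) = \gap(b) - \frac{|\seqset_b|}{K}\,\xi\,\gap(b_0)$, and for the right-hand side $\sum_{z} \wdeu(b') = \sum_{z}\gap(b') - \frac{\xi\,\gap(b_0)}{K}\sum_{z} |\seqset_{b'}|$. The key arithmetic fact is that the scenarios passing through $b$ are partitioned among its children along the $a^*$ branch: every scenario in $\seqset_b$ follows exactly one observation edge $z$ under action $a^*$, so $\sum_{z\in Z_{b,a^*}} |\seqset_{b'}| = |\seqset_b|$. Hence the two "expected gap" correction terms are exactly equal, and combining with $\gap(b) \le \sum_z \gap(b')$ from the previous paragraph yields $\wdeu(b) \le \sum_{z\in Z_{b,a^*}} \wdeu(b')$, as desired.

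I expect the main obstacle to be the bookkeeping at the boundary cases rather than the core inequality: I need to make sure the recurrences for $\uppernu$ and $\lowernu$ genuinely allow the same action $a^*$ to be plugged into both (the outer $\max$ with $\lowernu_0(b)$ requires a short argument, using $\wdeu(b) > 0$ to rule out the default-policy branch dominating in a way that breaks the subtraction), and that $b$ is an internal node where the children $\tau(b,a^*,z)$ actually exist in $\dtree$ — if $b$ were a leaf or a MakeDefault node the statement would be vacuous or need separate handling. The scenario-partition identity $\sum_z |\seqset_{b'}| = |\seqset_b|$ is immediate from the \despot construction (each sampled scenario traces a unique path), so once the Bellman-backup step is carefully justified, the rest is routine.
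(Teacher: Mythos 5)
Your proposal is correct and follows essentially the same route as the paper's proof: subtract the lower-bound backup from the upper-bound backup under the common action $a^*$ to get $\gap(b) \le \sum_{z\in Z_{b,a^*}}\gap(b')$, then use the scenario-partition identity $\sum_{z}|\seqset_{b'}| = |\seqset_b|$ to transfer the $\xi\,\gap(b_0)$ correction terms. The boundary step you defer is handled in the paper exactly as you anticipate: $\wdeu(b)>0$ gives $\uppernu(b)>\lowernu(b)\ge\lowernu_0(b)$, so the outer max in the $\uppernu$ backup cannot be attained by $\lowernu_0(b)$ and hence $\uppernu(b)=\uppernu(b,a^*)$, while the lower-bound side needs only $\lowernu(b)\ge\rho(b,a^*)+\sum_z\lowernu(b')$.
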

This lemma generalizes a similar result in \cite{SmiSim04} by taking
regularization into account.

Lemma~\ref{lem:blocking} justifies
$\textsc{Prune}$, which prunes the search when a node is
blocked by any of its ancestors. 
\begin{lemma} \label{lem:blocking}
Let $b'$ be an ancestor of $b$ in a \despot $\dtree$ and $\len(b',b)$ be the number of nodes on the path from $b'$ to
$b$. If
\[
\frac{|\seqset_{b'}|}{K} \gamma^{\depth(b')} (\upperv(b') - \lowerv_0(b'))
\le \lambda \cdot  \len(b', b),
\]
then $b$ cannot be a belief node in an optimal regularized policy derived from
$\dtree$.
\end{lemma}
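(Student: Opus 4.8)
The plan is to show that if the blocking condition holds at an ancestor $b'$ of $b$, then any regularized policy $\pol$ derived from $\dtree$ that uses $b$ as a genuine (internal) belief node is strictly worse than the modified policy $\pol'$ that is identical to $\pol$ everywhere except that it applies the default policy $\pol_0$ at $b'$ (so the entire subtree below $b'$ in $\pol$ is replaced by a default-policy leaf at $b'$). Concretely, I would compare the RWDUs $\nu_\pol(b_0)$ and $\nu_{\pol'}(b_0)$. Since $\pol$ and $\pol'$ agree outside the subtree rooted at $b'$, and RWDU is additive along the tree (the recursion in~\eqref{eqn:best-reg-pol} shows $\nu_\pol(b_0)$ decomposes into a sum of local reward terms minus $\lambda$ per internal node, weighted by scenario counts and discounting), the difference $\nu_{\pol'}(b_0) - \nu_\pol(b_0)$ equals the difference of the contributions of the subtree at $b'$, namely $\nu_{\pol'_{b'}}(b') - \nu_{\pol_{b'}}(b')$.

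\medskip\noindent\textbf{Key steps.} First I would write the contribution of $\pol'$ at $b'$: since $\pol'$ plays $\pol_0$ there and $|\pol_0| = 0$, this is $\nu_{\pol_0}(b') = \frac{|\seqset_{b'}|}{K}\gamma^{\depth(b')}\hat V_{\pol_0}(b') \ge \frac{|\seqset_{b'}|}{K}\gamma^{\depth(b')}\lowerv_0(b')$ (in fact equality when $\lowerv_0$ is the default-policy value). Second, I would upper-bound the contribution of $\pol$ at $b'$. The empirical value term is at most $\frac{|\seqset_{b'}|}{K}\gamma^{\depth(b')}\hat V^*(b') \le \frac{|\seqset_{b'}|}{K}\gamma^{\depth(b')}\upperv(b')$, using that $\upperv(b')$ is a valid upper bound on $\hat V^*(b')$. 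For the regularization penalty, the key combinatorial observation is that if $b$ is an internal node of $\pol$ and $b'$ is an ancestor of $b$, then the subtree $\pol_{b'}$ contains at least $\len(b',b)$ internal nodes (the nodes on the path from $b'$ down to $b$, inclusive of $b$ but counting appropriately — I need to be careful about whether $b$ itself must be internal; the statement "$b$ cannot be a belief node in an optimal regularized policy" means exactly that assuming $b$ is a belief node of $\pol$ leads to a contradiction, and a belief node that is a proper internal node of a policy tree path contributes to the size). So $|\pol_{b'}| \ge \len(b',b)$, giving a penalty of at least $\lambda\cdot\len(b',b)$. Third, I would combine: $\nu_{\pol_{b'}}(b') \le \frac{|\seqset_{b'}|}{K}\gamma^{\depth(b')}\upperv(b') - \lambda\,\len(b',b)$, while $\nu_{\pol'_{b'}}(b') \ge \frac{|\seqset_{b'}|}{K}\gamma^{\depth(b')}\lowerv_0(b')$. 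Subtracting, $\nu_{\pol'_{b'}}(b') - \nu_{\pol_{b'}}(b') \ge \lambda\,\len(b',b) - \frac{|\seqset_{b'}|}{K}\gamma^{\depth(b')}(\upperv(b') - \lowerv_0(b')) \ge 0$ by the hypothesized blocking inequality. Hence $\pol'$ is at least as good as $\pol$, and by making the inequality strict (e.g., arguing $b$ being a strict descendant forces $|\pol_{b'}| > \len(b',b)$ or handling the boundary case directly), $\pol$ is not optimal, so $b$ is not a belief node of any optimal regularized policy derived from $\dtree$.

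\medskip\noindent\textbf{Main obstacle.} The delicate point is the bookkeeping for the policy-size lower bound $|\pol_{b'}| \ge \len(b',b)$ and making the comparison strict rather than merely weak. I need to pin down exactly which nodes on the path from $b'$ to $b$ count as internal nodes of the policy tree $\pol_{b'}$ — this depends on the convention that a policy-tree node is "internal" iff it has an outgoing action edge, and on whether $b$ itself being "a belief node in the policy" forces it to be internal. A secondary subtlety is that $\upperv(b')$ and $\lowerv_0(b')$ as maintained during the anytime search are backed-up quantities, so I should confirm the bound $\lowerv_0(b') \le \hat V^*(b')$ and $\upperv(b') \ge \hat V^*(b')$ are invariants preserved by the backup operations (this is standard branch-and-bound reasoning and can be cited from the algorithm's construction). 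Everything else is a routine additive-decomposition argument on the policy tree.
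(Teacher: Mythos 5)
Your exchange argument is essentially the paper's own proof read in the contrapositive direction: the paper assumes $b$ lies in an optimal regularized policy $\pi$, then combines $\upperv(b') \ge \hat V_{\pi_{b'}}(b')$, the fact that the subtree $\pi_{b'}$ must beat the default policy's RWDU at $b'$, and the count $|\pi_{b'}| \ge \len(b',b)$ to conclude the blocking inequality is violated --- exactly the three ingredients of your swap-in-the-default comparison via the additive RWDU decomposition. The strictness/endpoint bookkeeping you flag as the main obstacle is treated no more carefully in the paper (it simply asserts the strict inequality and the bound $\len(b',b) \le |\pi_{b'}|$), so this is not a gap specific to your route.
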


We proceed to analyze the performance of the optimal regularized policy $\hat{\pol}$
derived from the partial \despot constructed.
The action output by the anytime \despot algorithm is the action
$\hat{\pol}(b_0)$, because the initialization and the computation of the lower
bound $\lowernu$ via the backup equations are exactly that for finding an
optimal regularized policy value in the partial \despot.

We now state the main results in the next two theorems. Both assume that
the initial upper bound $\upperv_0$ is \emph{$\delta$-approximate}:
\begin{dm}
  \upperv_0(b) \ge \hat{V}^*(b) - \delta,
\end{dm}
for every \despot node $b$.
If the initial upper bound is $0$-approximate, that is, it is indeed an upper
bound for $\hat{V}^*(b)$, then we say the heuristic is \emph{admissible}.
First, consider the case in which the maximum online planning time per
step \tmax is bounded. 

\begin{theorem}\label{th:bounded}
Suppose that \tmax is bounded and that the anytime \despot algorithm terminates
with a partial \despot $\dtree'$ that has gap $\gap(b_0)$ between the upper
and lower bounds at the root $b_0$. 
The optimal regularized policy $\hat{\pol}$ derived from $\dtree'$
satisfies
\begin{dm}
  \nu_{\hat{\pol}}(b_0) \ge \nu^*(b_0) - \gap(b_0) - \delta,
\end{dm}
where $\nu^*(b_0)$ is the value of an optimal regularized policy derived from the full
\despot \dtree at $b_0$. 
\end{theorem}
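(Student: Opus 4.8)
The plan is to collapse the statement to a single inequality about the upper bound at the root, $\uppernu(b_0) \ge \nu^*(b_0) - \delta$, and then read off the theorem from the definitions. First I would recall that the lower‑bound backup performed by the algorithm is, node for node, exactly the dynamic program \eqref{eqn:best-reg-pol} run over the partial tree $\dtree'$ with the default policy $\pol_0$ applied at every leaf and every \textsc{MakeDefault} node; hence $\lowernu(b_0)$ is the regularized value of a legitimate policy derived from $\dtree'$, and in fact $\lowernu(b_0)=\nu_{\hat\pol}(b_0)$ as already noted before the theorem. Since $\gap(b_0)=\uppernu(b_0)-\lowernu(b_0)$ by definition, the claimed bound is equivalent to $\uppernu(b_0)\ge\nu^*(b_0)-\delta$, because then $\nu_{\hat\pol}(b_0)=\lowernu(b_0)=\uppernu(b_0)-\gap(b_0)\ge\nu^*(b_0)-\gap(b_0)-\delta$.

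To prove $\uppernu(b_0)\ge\nu^*(b_0)-\delta$ I would first dispose of pruning using Lemma~\ref{lem:blocking}. Let $\nu^\dagger(b)$ denote the maximal RWDU over policies derived from the \emph{full} \despot $\dtree$ rooted at $b$, subject to the constraint that $\pol_0$ is used at every node the algorithm has turned into a \textsc{MakeDefault} node (every pruned node and every node beyond the horizon). By Lemma~\ref{lem:blocking}, no such node can be an internal node of an optimal regularized policy of $\dtree$, so such an optimal policy is feasible for the constrained problem; consequently $\nu^\dagger(b_0)=\nu^*(b_0)$. It therefore suffices to establish the invariant
\[
\uppernu(b)\ \ge\ \nu^\dagger(b)-\frac{|\seqset_b|}{K}\,\gamma^{\depth(b)}\,\delta
\qquad\text{for every node }b\text{ of }\dtree',
\]
and then specialize to $b=b_0$, where $\depth(b_0)=0$ and $|\seqset_{b_0}|=K$, giving $\uppernu(b_0)\ge\nu^\dagger(b_0)-\delta=\nu^*(b_0)-\delta$.

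I would prove this invariant by showing it holds for the initial tree $\{b_0\}$ and is preserved by each \textsc{Explore}–\textsc{Backup}–\textsc{Prune} iteration (so it holds at termination). There are three pieces. (i) \emph{Unexpanded leaf} $b$: here $\uppernu(b)=\uppernu_0(b)=\max\{\lowernu_0(b),\ \tfrac{|\seqset_b|}{K}\gamma^{\depth(b)}\upperv_0(b)-\lambda\}$ and $\nu^\dagger(b)=\nu^*(b)$; split on whether the optimal subpolicy $\pol^*_b$ at $b$ is the default (then $\nu^*(b)=\nu_{\pol_0}(b)=\lowernu_0(b)\le\uppernu_0(b)$) or has size $\ge 1$ (then $\nu^*(b)\le\tfrac{|\seqset_b|}{K}\gamma^{\depth(b)}\hat V^*(b)-\lambda\le\tfrac{|\seqset_b|}{K}\gamma^{\depth(b)}(\upperv_0(b)+\delta)-\lambda$, the last step being the $\delta$‑approximateness of $\upperv_0$), so the invariant follows. (ii) \emph{\textsc{MakeDefault} node} $b$: both $\uppernu(b)$ and $\nu^\dagger(b)$ equal $\nu_{\pol_0}(b)$, so it is immediate. (iii) \emph{Backup at an internal node} $b$: the backup sets $\uppernu(b)=\max\{\lowernu_0(b),\ \max_{a}\{\rho(b,a)+\sum_{z\in Z_{b,a}}\uppernu(\tau(b,a,z))\}\}$; applying the invariant to the children and using $\depth(\tau(b,a,z))=\depth(b)+1$, $\sum_{z\in Z_{b,a}}|\seqset_{\tau(b,a,z)}|=|\seqset_b|$, and $\gamma<1$, the per‑child error terms $\tfrac{|\seqset_{b'}|}{K}\gamma^{\depth(b')}\delta$ collapse to a single $\tfrac{|\seqset_b|}{K}\gamma^{\depth(b)}\delta$, and since $\lowernu_0(b)=\nu_{\pol_0}(b)$ this matches the recursion defining $\nu^\dagger$ at a non‑\textsc{MakeDefault} internal node, closing the induction; expansion of a leaf only introduces new leaves (case (i)) and re‑labels the old leaf as internal without changing its stored bound.

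The step I expect to be the main obstacle is making the pruning argument airtight. Lemma~\ref{lem:blocking} is stated for a single ancestor–descendant pair, whereas \textsc{Prune} cascades (a \textsc{Backup} triggered by one \textsc{MakeDefault} can newly block further ancestors) and the quantities $\upperv(b')$ in the blocking test \eqref{eqn:blocker} drift during the run. I would argue that once the test holds it continues to hold (its left side only shrinks as $\upperv$ decreases), so the lemma applies to the \emph{final} set of pruned nodes and legitimizes $\nu^\dagger(b_0)=\nu^*(b_0)$; one also has to check that the depth‑$D$ truncation used to define $\nu^*$ agrees with how \textsc{MakeDefault} treats nodes beyond the horizon, i.e.\ reconcile the while‑condition $\depth(b)\le D$ with the full tree having height $D$. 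The remaining bookkeeping — monotone non‑increase of $\uppernu$ and non‑decrease of $\lowernu$, and the fact that on each iteration the nodes whose bounds or \textsc{MakeDefault} status change all lie on the single explored/pruned path so the invariant can be re‑established bottom‑up along it — is routine.
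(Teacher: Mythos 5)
Your proposal is correct and follows essentially the same route as the paper's proof: the paper likewise reduces the claim to $\uppernu(b_0)+\delta \ge \nu^*(b_0)$ --- by comparing the algorithm's bounds with those induced by the exact initial bound $U_0+\delta$ and showing by induction that $\uppernu(b)+\gamma^{\depth(b)}\frac{|\seqset_b|}{K}\delta \ge \uppernu'(b) \ge \nu^*(b)$ --- and then concludes via $\nu_{\hat\pol}(b_0)=\lowernu(b_0)$ exactly as you do. Your constrained optimum $\nu^\dagger$ and explicit appeal to Lemma~\ref{lem:blocking} merely spell out the pruning/\textsc{MakeDefault} bookkeeping that the paper leaves implicit in its assertion that the backup equations preserve validity of the upper bound.
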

Since $\epsilon(b_0)$ decreases monotonically as \tmax grows, the above result
shows that the performance of $\hat \pol$ approaches that of an optimal
regularized policy as the running time increases. Furthermore,
the error in initial upper bound approximation affects the final result by at most $\delta$.
Next we consider unbounded maximum planning time \tmax.
The objective here is to show that despite unbounded \tmax,
the anytime algorithm terminates in finite time with a near-optimal or
\emph{optimal} regularized policy.  
\begin{theorem}\label{th:unbounded}
Suppose that \tmax is unbounded and $\gap_0$ is the target gap between the
upper and lower bound at the root of the partial \despot constructed by the
anytime \despot algorithm. Let $\nu^*(b_0)$ be
the value of an optimal regularized policy derived from the full
\despot \dtree at $b_0$.
\begin{itemize}
\item[(1)] If $\gap_0 > 0$, then the algorithm terminates in finite
  time with a near-optimal regularized policy $\hat \pol$ satisfying
\begin{dm}
\nu_{\hat{\pol}}(b_0) \ge \nu^*(b_0) - \gap_0 - \delta.
\end{dm}
\item[(2)] If $\gap_0 = 0$, $\delta=0$, and
  the regularization constant $\lambda>0$, then the 
  algorithm terminates in finite time with an optimal regularized policy $\hat \pol$, \ie, 
   $\nu_{\hat{\pol}}(b_0) = \nu^*(b_0)$.
\end{itemize}
\end{theorem}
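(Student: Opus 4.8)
The plan is to prove Theorem~\ref{th:unbounded} by analyzing the anytime search in two stages: first establish that each call to \textsc{Explore} followed by \textsc{Backup} makes quantifiable progress, so the loop terminates; then characterize what the final partial \despot $\dtree'$ guarantees. The key invariants I would maintain throughout are: (i) $\lowernu(b) \le \nu^*(b) \le \uppernu(b) + \delta$ for every node $b$ of $\dtree'$, where $\nu^*(b)$ is the optimal RWDU in the \emph{full} \despot rooted at $b$ — the $\delta$ slack arises because $\upperv_0$ is only $\delta$-approximate and propagates through the Bellman backup without amplification since backups are averages and maxima of the bounds; and (ii) the excess uncertainty $\wdeu(b)$ at explored nodes behaves consistently with Lemma~\ref{lem:forward}. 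Invariant (i) immediately gives part (1)'s conclusion: at termination $\uppernu(b_0) - \lowernu(b_0) \le \gap_0$, hence $\nu_{\hat\pol}(b_0) = \lowernu(b_0) \ge \uppernu(b_0) - \gap_0 \ge \nu^*(b_0) - \gap_0 - \delta$, using that the backed-up lower bound $\lowernu(b_0)$ equals the RWDU of the policy $\hat\pol$ actually extracted.

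\textbf{For finite termination}, the heart of the argument is that excess uncertainty cannot stay positive forever along explored paths. I would show that after an \textsc{Explore}/\textsc{Backup} cycle, either the root gap $\gap(b_0)$ shrinks by a definite amount, or a new node gets added to $\dtree'$ — and the number of nodes is bounded. The node count is bounded because: depth is capped at $D$; the branching under sampled scenarios is finite (at most $|A|$ actions and at most $\min\{|Z|,K\}$ observations per action, since only observations appearing in $\seqset_b$ create children); and \textsc{Prune}, justified by Lemma~\ref{lem:blocking}, caps how deep any policy-relevant subtree can go — specifically a node blocked by an ancestor is made a default-policy leaf. So the reachable, non-pruned portion of $\dtree$ is a \emph{finite} tree. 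Since \textsc{Explore} terminates each cycle (the \textbf{while} condition fails when depth exceeds $D$, or $\wdeu(b) \le 0$, or the node is pruned), and each cycle that does not expand a new node must strictly reduce excess uncertainty at the deepest visited node to non-positive — which via Lemma~\ref{lem:forward}'s subadditivity propagates upward to reduce $\gap(b_0)$ — the algorithm cannot loop indefinitely; after finitely many cycles every leaf of the current partial tree has been either expanded to a pruned/default leaf or has $\wdeu \le 0$, forcing $\gap(b_0) \le \gap_0$ and termination. This is the step I expect to be the main obstacle: making the "progress per cycle" claim airtight requires carefully tracking how $\wdeu$ at a child relates to the target $\xi\,\gap(b_0)$ and showing the $\xi$-geometric shrinkage of the target gap combines with finiteness of the tree.

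\textbf{For part (2)}, with $\gap_0 = 0$ and $\delta = 0$, the $\delta$-slack vanishes so invariant (i) becomes $\lowernu(b) \le \nu^*(b) \le \uppernu(b)$ exactly, and termination requires $\uppernu(b_0) = \lowernu(b_0)$, i.e.\ $\nu_{\hat\pol}(b_0) = \nu^*(b_0)$. It remains to show termination \emph{does} occur in finite time even with target gap zero — this is where $\lambda > 0$ is essential. The point is that $\lambda > 0$ forces a depth bound on any RWDU-optimal subtree, exactly as argued in Section~\ref{sec:dp}: a policy subtree of size exceeding $\lceil R_{\max}/(\lambda(1-\gamma))\rceil$ has negative RWDU and is dominated by the default policy, so \textsc{Prune}'s blocking condition \eqref{eqn:blocker} eventually fires on every sufficiently deep branch. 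Hence the relevant portion of $\dtree$ that the algorithm ever needs to explore is a finite tree of bounded depth, and once fully expanded the exact Bellman backup yields $\uppernu = \lowernu = \nu^*$ at the root, triggering termination with $\gap(b_0) = 0$. I would assemble this by combining the finite-tree observation from part (1) with the $\lambda>0$ depth cut, then invoking exact backup on a fully expanded finite tree — at which point $\wdeu(b) = \gap(b) - (|\seqset_b|/K)\cdot 0 = \gap(b) = 0$ at every node, so exploration necessarily halts.
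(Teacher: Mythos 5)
Your overall architecture matches the paper's proof — part (1) follows from the invariant $\lowernu(b_0) \le \nu^*(b_0) \le \uppernu(b_0)+\delta$ (this is exactly Theorem~\ref{th:bounded}, which the paper simply invokes), termination comes from a ``each exploration makes progress on a finite set of nodes'' argument, and part (2) uses the $\lambda$-based depth cut enforced by \textsc{Prune}. However, there is a genuine gap in your termination argument for part (1): you bound the number of reachable nodes by appealing to ``depth is capped at $D$.'' The theorem is not meant to lean on a finite depth cap — the paper's own proof of part (2) explicitly speaks of the optimal policy ``derived from the infinite horizon \despot,'' and the whole point of requiring $\lambda>0$ in part (2) is that finiteness of the relevant tree must be established without assuming a finite $D$ (if $D$ were the source of finiteness, part (2) would need no extra hypotheses). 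The missing idea for part (1) is the paper's $d_0$ argument: pick $d_0$ with $\gamma^{d_0}\rmax/(1-\gamma) < \xi\gap_0$; then for any node $b$ deeper than $d_0$, $\gap(b) \le \gamma^{\depth(b)}\frac{|\seqset_b|}{K}\frac{\rmax}{1-\gamma} < \frac{|\seqset_b|}{K}\xi\gap_0 < \frac{|\seqset_b|}{K}\xi\gap(b_0)$ as long as the algorithm has not terminated, so $\wdeu(b)<0$ and such nodes are never expanded. This is what makes the set of nodes the search can ever touch finite, independently of $D$; your proposal never supplies a substitute for it.

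A second, smaller problem is your ``progress per cycle'' claim: you assert that a cycle which adds no new node ``must strictly reduce the root gap by a definite amount,'' propagated via Lemma~\ref{lem:forward}. That is neither what happens nor what is needed. The correct accounting (and the paper's) is: by Lemma~\ref{lem:forward}, from any node with $\wdeu>0$ the selected child also has $\wdeu>0$, so each call to \textsc{Explore} either reaches an unexpanded leaf and expands it, or terminates by exceeding depth $D$ or by pruning, in which case \textsc{MakeDefault} closes the gap of that node; since each node in the finite relevant set can be expanded at most once and closed at most once, only finitely many explorations can occur before $\gap(b_0)\le\gap_0$. No per-cycle geometric shrinkage of $\gap(b_0)$ is claimed or required. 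Your part (2) assembly (the $\lambda$-based depth bound $\lceil \rmax/(\lambda(1-\gamma))\rceil+1$ from the blocking condition, plus exactness of the bounds when $\delta=0$ forcing $\uppernu(b_0)=\lowernu(b_0)=\nu^*(b_0)$) is essentially the paper's argument and is fine once the progress accounting above is fixed.
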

In the case $\gap_0 > 0$, the algorithm aims for an approximately
optimal regularized policy. Compared with Theorem~\ref{th:bounded}, here the
maximum planning time limit \tmax is removed, and the algorithm
achieves the target gap $\gap_0$ exactly, after sufficient computation
time.  In the case $\gap_0 = 0$, the algorithm aims for an optimal
regularized policy. Two additional conditions are required to guarantee
finite-time termination and optimality. Clearly one is a true upper
bound with no approximation error, \ie, $\delta=0$. The other is a
strictly positive regularization constant. This assumption implies
that there is a finite optimal regularized policy, and thus allows the
algorithm to terminate in finite time.

\section{Experiments} 
\label{sec:experiments}
We now compare the anytime \despot algorithm with three
state-of-the-art POMDP algorithms (\secref{subsec:benchmark}). We also
study the effects of regularization (\secref{subsec:reg}) and initial
bounds (\secref{subsec:bounds}) on the performance of our algorithm.

\subsection{Performance Comparison}
\label{subsec:benchmark}
We compare \despot with SARSOP \cite{KurHsu08}, AEMS2
\cite{RosCha07,RosPin08}, and POMCP \cite{SilVen10}. SARSOP is one of
the fastest \emph{off-line} POMDP algorithms. While it cannot compete
with online algorithms on scalability, it often provides better
results on POMDPs of moderate size and helps to calibrate the
performance of online algorithms.  AEMS2 is an early successful online
POMDP algorithm. Again it is not designed to scale to very large state
and observation spaces and is used here as calibration on moderate
sized problems. POMCP scales up extremely well in practice
\cite{SilVen10} and allows us to calibrate the performance of \despot
on very large problems.

We implemented DESPOT and AEMS2 ourselves.
We used the authors' implementation of POMCP \cite{SilVen10},
but improved the implementation to support a very large number of observations and strictly
adhere to the time limit for online planning. We used the APPL 
package for SARSOP \cite{KurHsu08}.
All algorithms were implemented in C++.

For each algorithm, we tuned the key parameters on each domain through
offline training, using a data set distinct from the online test data
set, as we expect this to be the common usage mode for online planning.
Specifically, the regularization parameter $\lambda$ for
\despot was selected offline from the set $\{0, 0.01, 0.1, 1, 10\}$ by
running the algorithm with a training set distinct from the online
test set.  Similarly,  the exploration constant $c$ of POMCP
was  chosen from the set $\{1, 10, 100, 1000, 10000\}$ for the best
performance.  Other parameters of the algorithms are set to reasonable
values independent of the domain being considered.  Specifically, we
chose $\xi = 0.95$ as in SARSOP \cite{KurHsu08}.  We chose $D=90$ for
\despot because $\gamma^D \approx 0.01$ when $\gamma = 0.95$, which is
the typical discount factor used.  We chose $K = 500$, but a smaller
value may work as well.

All the algorithms were evaluated on the same experimental  platform.
The online POMDP
algorithms were given exactly 1 second per step to choose an action.

The test domains  range in size from small to extremely
large.  The results are reported in Table~\ref{table:results}.  In summary,
SARSOP and AEMS2 have good performance on the smaller domains, but cannot scale
up. POMCP scales up to very large domains, but has poor performance on some
domains. \despot has strong overall performance.  On the smaller domains, it
matches with SARSOP and AEMS2 in performance. On the large domains, it matches
and sometimes outperforms POMCP. The details on each domain are described
below.

\begin{table}
{\scriptsize
\caption{
Performance comparison. We  report the average total discounted reward
achieved. For Pocman, we follow  \cite{SilVen10} and report the average total reward without discounting.
A dash ``--'' indicates that an algorithm fails to run successfully on a
domain, because the state space or the observation space is too large, and
memory limit was exceeded.
For AEMS2 and POMCP, our experimental results sometimes differ from those
reported in earlier work, possibly due to
differences in experimental settings and platforms. 
We thus report both, with the results in earlier work
\cite{RosPin08,SilVen10}  in parentheses. 
}
\label{table:results}
\vspace*{-10pt}
\begin{center}
\begin{tabular}{l*{7}{@{\hspace{1.5ex}}c}}
\toprule
& \emph{Tag}
  & \emph{Laser Tag} & \emph{RS(7,8)}
  & \emph{RS(11,11)} & \emph{RS(15,15)}
  & \emph{Pocman} & \emph{Bridge Crossing} \\
\midrule
 $|S|$
   & 870     & ~~4,830                  &
   12,544         & 247,808            & 7,372,800         & $\sim 10^{56}$
   &  10 \\
 $|A|$    & ~~~~5    & ~~~~~~~~~5                      & ~~~~~~~13                 & ~~~~~~~~~16                 & ~~~~~~~~~~~~20                & ~~~~~~~4                              & ~~3     \\
 $|Z|$     & ~~30   & $\sim 1.5 \times 10^6$ & ~~~~~~~~~3     & ~~~~~~~~~~~3                  & ~~~~~~~~~~~~~~3                 & 1,024                 & ~~1         \\ \hline
SARSOP           & $-6.03\pm0.12$    & --                     & $21.47\pm
0.04$    & $21.56 \pm 0.11$   & --                & --  & $-7.40 \pm 0.0$                    \\ \hline
AEMS2        & $-6.41 \pm 0.28$  & --                     & $20.89 \pm 0.30$   & --                 & --                & --   & $-7.40 \pm 0.0$          \\
~                                        & ($-6.19\pm 0.15$) &                     & ($21.37 \pm 0.22$) &                  &                 &                   \\\hline
POMCP                                         & $-7.14\pm 0.28$   & $-19.58 \pm 0.06$      & $16.80 \pm 0.30$   & $18.10 \pm 0.36$   & $12.23 \pm 0.32$  & $294.16 \pm 4.06$  & $-20.00 \pm 0.0$           \\
~                                        &                 &                      & ($20.71\pm 0.21$)  & ($20.01 \pm 0.23$) & ($15.32\pm 0.28$) &   \\ \hline
DESPOT                                   & $-6.23 \pm 0.26$  & $-8.45 \pm 0.26$
                                         & $20.93  \pm 0.30$  & $21.75 \pm
                                         0.30$  & $18.64 \pm 0.28$       &
                                         $317.78 \pm 4.20$ & ~~$-7.40 \pm 0.0$  \\
\bottomrule
\end{tabular}
\end{center}
}
\end{table}

\begin{figure}
\begin{center}
  \begin{tabular}{c@{\hspace*{0.5in}}c}
\includegraphics[height=1.2in]{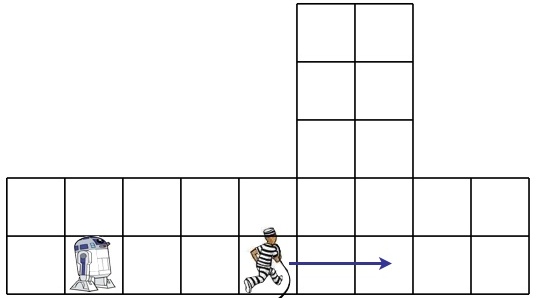} &
\includegraphics[height=1.4in]{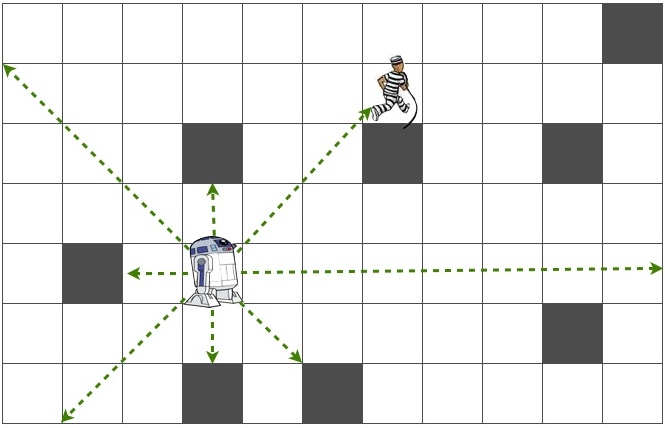} \\
(\subfig a) & (\subfig b) \\
\mbox{}\\
\includegraphics[height=1.5in]{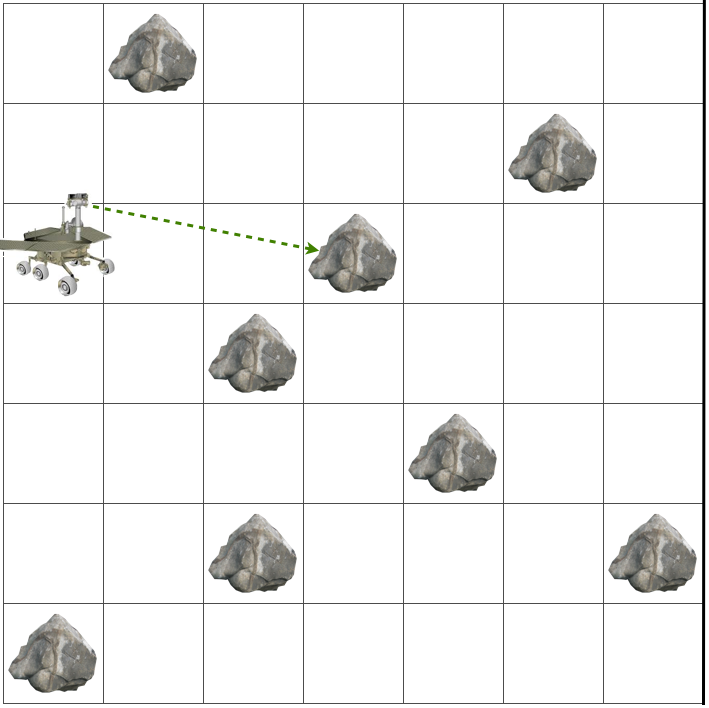} &
\includegraphics[height=1.6in]{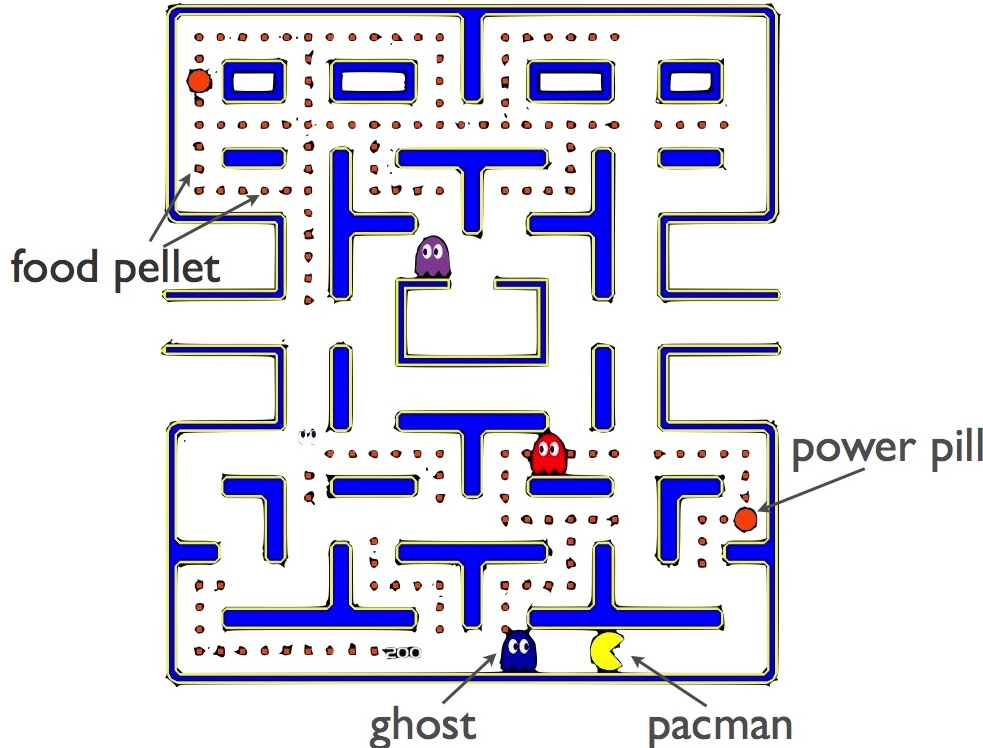} \\
(\subfig c) & (\subfig d) \\
  \end{tabular}
	\caption{Four test domains. (\subfig a) Tag. A robot chases
          an unobserved target that runs away. (\subfig b) Laser Tag.
          A robot chases
          a target in a $7 \times 11$ grid environment populated with
         obstacles. The robot is equipped 
          with a laser range finder for self-localization.  (\subfig c)
          Rock Sample. A robot rover senses rocks to identify ``good'' ones
          and samples them. Upon completion, it exits the east boundary.
            (\subfig d) The original Pacman game. }
	\label{fig:tests}
\end{center}
\end{figure}

\subsubsection{Tag}
  \label{sec:tag}
\emph{Tag} is a standard POMDP benchmark introduced by \citeA{pingor03}.
A robot and a
target operate in a grid with 29
possible positions (\figref{fig:tests}\subfig a).
The robot's goal is to find and tag the target that
intentionally runs away. They start at random initial positions. 
The robot knows its own position, but can observe the
target's position only if they are in the same grid cell. The robot can either
stay in the same position or move to the four adjacent positions, paying a
cost of $-1$ for each move. It can also attempt to tag the target.
It is rewarded
$+10$, if  the attempt is successful, and is penalized $-10$ otherwise.
To complete the task successfully, a good policy exploits the target's
dynamics to ``push'' it against a corner of the environment.

For \despot, we use the hindsight optimization bound for the initial upper
bound $\upperv_0$ and  initialize  hindsight optimization by setting
$U(D, s)$ to be the optimal MDP value (\secref{sec:upper}).
We use the mode-MDP policy for the default policy $\pol_0$
(\secref{sec:lower}).

POMCP cannot use the mode-MDP policy, as it requires default policies
that depend on the history only.  We use the Tag implementation that
comes as part of the authors' POMCP package, but improved its default
policy.  The original default policy tags when both the robot and the
target lie in a corner.  Otherwise the robot randomly chooses an
action that avoids doubling back or going into the walls.  The
improved policy tags whenever the agent and the target lie in the same
grid cell, otherwise avoids doubling back or going into the walls, yielding
better results based on our experiments.

On this moderate-size domain, SARSOP achieves the best result.
AEMS2  and \despot have comparable  performance. POMCP's performance is
much weaker, partly because of the limitation on its default policy.

\subsubsection{Laser Tag}
\label{sec:lasertag}

Theorem~\ref{th:uniform} suggests that \despot may perform well even when
the observation space is large, provided that
a small good policy exists.
We now consider \textit{Laser Tag}, an expanded version of Tag
with a large observation space. In Laser Tag, the robot
moves in a $7\times11$ rectangular grid with obstacles placed randomly in eight 
grid cells (\figref{fig:tests}\subfig b).
The  robot's and target's behaviors remain the same as before.
However, the robot does not know its own position exactly and is
distributed uniformly over the grid initially. To localize, it is
equipped with a laser range finder that measures the distances in
eight directions. The side length of each cell is 1. The laser
reading in each direction is generated from a normal distribution
centered at the true distance of the robot to the nearest obstacle in
that direction, with a standard deviation of $2.5$.  The readings are
rounded to the nearest integers.  So an observation comprises a set of
eight integers, and the total number of observations is roughly $1.5
\times 10^6$.

\despot uses a domain-specific method, which we
call \emph{Shortest Path} (SP) for the upper bound.  For every
possible initial target position, we compute an upper bound by
assuming that the target stays stationary and that the robot follows a
shortest path to tag the target.  We then take the average over the
sampled scenarios. \despot's default policy is similar to the one used
by POMCP in Tag, but it uses the most likely robot position to choose
actions that avoid doubling back and running into walls.  So it is 
not a scenario-based policy but a hybrid policy that makes use of both
the belief and the history.  

As the robot does not know its exact location, it is more difficult for POMCP's default policy to avoid going into walls and doubling back. Hence, we only implemented the action of tagging whenever the robot and target are in the same location, and did not implement wall and doubling back avoidance.

With the very large observation space, we are not able to successfully run SARSOP and AEMS2.  \despot achieves substantially better result than 
POMCP on this task.

\subsubsection{Rock Sample}

Next we consider \textit{Rock Sample}, a well-established benchmark
with a large state space \cite{SmiSim04}.  In {RS}$(n, k)$, a robot
rover moves on an $n\times n$ grid containing $k$ rocks, each of which
may be \emph{good} or \emph{bad} (\figref{fig:tests}\subfig c).  The
robot's goal is to visit and sample the good rocks, and exit the east
boundary upon completion. At each step, the robot may move to an
adjacent cell, sense a rock, or sample a rock. Sampling gives a reward
of $+10$ if the rock is good and $-10$ otherwise. Moving and sensing
have reward $0$.  Moving or sampling do not produce any observation,
or equivalently, null observation is produced. Sensing a rock produces an
observation, {\small GOOD} or {\small BAD}, with probability of being
correct decreasing exponentially with the robot's distance from the
rock.
To obtain high total reward, the robot navigates the environment and senses
rocks to identify the good ones; at the same time, it exploits the
information gained to visit and sample the good rocks.

For upper bounds, \despot uses the MDP upper bound, which is a true upper
bound in this case. For default policy, it uses a simple fixed-action default policy that always moves to the east.

POMCP uses the default policy described in \cite{SilVen10}.  The robot travels
from rock location to rock location.  At each rock location, the robot samples
the rock if there are more {\small GOOD} observations there than {\small BAD}
observations.
If all remaining rocks have a greater number of {\small BAD} observations,
the robot moves to the east boundary and exits.

On the smallest domain {RS}$(7,8)$, the offline algorithm SARSOP obtains the
best result overall. Among the three online algorithms, \despot has the best
result. On {RS}$(11,11)$, \despot matches SARSOP in performance and is better
than POMCP. On the largest instance {RS}$(15,15)$, SARSOP and AEMS2 cannot
be completed successfully. 
It is also interesting to note that although the default policy for \despot is
weaker than that of POMCP, \despot still achieves better results.

\subsubsection{Pocman}\label{sec:pocman}

\emph{Pocman} \cite{SilVen10} is a partially observable variant of the
popular video game \emph{Pacman} (\figref{fig:tests}\subfig d).
In Pocman, an agent and four ghosts move in
a $17\times 19$ maze populated with food pellets.  Each agent move incurs a
cost of $-1$. Each food pellet provides a reward of $+10$.  If the agent is
captured by a ghost, the game terminates with a penalty of $-100$. In
addition, there are four power pills.
Within the next 15 time steps
after eating a power pill, the agent can eat a ghost encountered and
receives a reward of $+25$.  A ghost chases the agent if the agent is within a
Manhattan distance of $5$, but runs away if the agent possesses a power pill.
The agent does not know the exact ghost locations, but receives information on
whether it sees a ghost in each of the cardinal directions, on whether it
hears a ghost within a Manhattan distance of $2$, on whether it feels a wall
in each of the four cardinal directions, and on whether it smells food pellets
in adjacent or diagonally adjacent cells.  Pocman has an extremely large
state space of roughly $10^{56}$ states.

For \despot, we compute an approximate hindsight optimization bound at
a \despot node $b$ by summing the following quantities for each
scenario in $\seqset_b$ and taking the average over all scenarios: the
reward for eating each pellet discounted by its distance from pocman,
the reward for clearing the level discounted by the maximum distance
to a pellet, the default per-step reward of $-1$ for a number of steps
equal to the maximum distance to a pellet, the penalty for eating a
ghost discounted by the distance to the closest ghost being chased if
any, the penalty for dying discounted by the average distance to the
ghosts, and half the penalty for hitting a wall if the agent tries to
double back along its direction of movement. Under the default policy,
when the agent detects a ghost, it chases a ghost if it possess a
power pill and runs away from the ghost otherwise.  When the agent
does not detect any ghost, it makes a random move that avoids doubling
back or going into walls.  Due to the large scale of this domain, we
used $K=100$ scenarios in the experiments in order to stay within the
allocated $1$ second online planning time. For this problem, POMCP
uses the same default policy as \despot.

On this large-scale domain, \despot has slightly better performance than POMCP,
while SARSOP and AEMS2 cannot run successfully. 

\subsubsection{Bridge Crossing}
\label{subsec:bridge}

The experiments above indicate that both POMCP and \despot can handle very
large POMDPs. However, the UCT search strategy \cite{KocSze06} which POMCP
depends on has very poor worst-case behavior \cite{CoqMun07}.  We designed
\emph{Bridge Crossing}, a very simple domain, to illustrate this. 

In Bridge Crossing, a person attempts to cross a narrow bridge over a
mountain pass in the dark. He starts out at one end of bridge,
but is uncertain of his exact initial position because of the
darkness. At any time, he may call for rescue and terminate the attempt.
In the POMDP model, the man has
$10$ discretized positions $x \in \{0, 1, \ldots, 9\}$ along the
bridge, with the person at the end of $x=0$. He is uncertain about his initial position, with a maximum
error of $1$.  He can move forward or backward, with cost
$-1$. However, moving forward at $x=9$ has cost $0$, indicating
successful crossing.  For simplicity, we assume no movement noise.
The person can call for rescue, with cost $-x-20$, where $x$ is his
current position.  The person has no observations while in the middle of
the bridge.  His attempt terminates when he successfully crosses the
bridge or calls for rescue.

\despot uses the uninformed upper bound and the trivial default policy
of calling for rescue immediately. POMCP uses the same default policy. 

This is an open-loop planning problem. A policy is simply a sequence
of actions, as there are no observations.  There are $3$ possible
actions at each step, and thus when considering policies of length at most 10,
there are at most \emph{$3^{1} + 3^{2} + \ldots + 3^{10} < 3^{11}$} policies.
A simple breadth-first enumeration of these policies is sufficient to identify
the optimal one: keep moving forward for a maximum of $10$ steps.
Indeed, both SARSOP and AEMS2 obtain this optimal policy, as does
\despot. While the initial upper bound and the default policy for
\despot are uninformed, the backup operations improve the bounds and
guide the search towards the right direction to close the gap between
the upper and lower bounds.  In contrast, POMCP's performance on this
domain is poor, because it is misled by its default policy and has an
extremely poor convergence rate for cases such as this. While the optimal
policy is to move forward all the way, the Monte
Carlo simulations employed by POMCP suggests doing exactly the
opposite at each step---move backward and call for rescue---because calling for
rescue early near the starting point  incurs lower
cost. Increasing the  exploration constant $c$ may somewhat
alleviate this difficulty, but does not resolve it substantively.

\subsection{Benefits of Regularization}
\label{subsec:reg}
\begin{table}
\caption{Performance of \despot, with and without regularization.
  The table reports the average total discounted reward achieved. For Pocman,
  it reports the total undiscounted reward without discounting.
	} 
\label{tbl:reg}
\vspace*{-15pt}
\begin{center}
{\scriptsize
\begin{tabular}{l*{8}{@{\hspace{1.5ex}}c}}
  \toprule
  & \emph{Tag} & \emph{Laser Tag} & \emph{RS(7,8)} & \emph{RS(11,11)} &
  \emph{RS(15,15)} & \emph{Pocman} & \emph{Bridge Crossing}\\ \midrule
 $|Z|$     & 30   & $\sim 1.5 \times 10^6$ & ~~~3     & ~~3  & ~~~3                 & 1,024                 & ~~1         \\ \midrule
$\lambda$ & 0.01 & 0.01 & 0.0 & 0.0 & 0.0 & 0.1 & 0.0 \\
  Regularized  & $-6.23 \pm 0.26$ & $-8.45 \pm 0.26$ & $20.93
  \pm 0.30$ & $21.75 \pm 0.30$ & $18.64 \pm 0.28$ & $317.78 \pm 4.20$
  & $-7.40 \pm 0.0$ \\ \midrule
  Unregularized & $-6.48 \pm 0.26$ & $-9.95 \pm 0.26$ & $20.90
  \pm 0.30$ & $21.75 \pm 0.30$ & $18.15 \pm 0.29$ & $269.64 \pm 4.33$
  & $-7.40 \pm 0.0$ \\

  \bottomrule
\end{tabular}
}
\end{center}
\end{table}

We now study the effect of regularization on the performance of
\despot. 
If a POMDP has a large number of observations, the size of the corresponding
belief tree and the optimal policy may be large as well. 
Overfitting to the sampled scenarios is thus more likely to occur, and we would
expect that regularization may help.
Indeed, Table~\ref{tbl:reg}
shows that
 Tag, Rock
Sample, and Bridge Crossing, which all have a small or moderate
number of observations, do not benefit much from 
regularization.  The remaining two, Laser Tag and Pocman, which have
a large number of observations, benefit more significantly from
regularization.

To understand better the underlying cause, we designed another simple
domain, \emph{Adventurer}, with a variable number of observations.
An adventurer explores an ancient ruin modeled as a $1 \times 5$ grid.
He is initially located in the leftmost cell. The treasure is located in the
rightmost cell, with value uniformly distributed over a finite set
$X$.  If the adventurer reaches the rightmost cell and stays there for one
step to dig up the treasure, he receives the value of the treasure as the
reward.  The adventurer can drive left, drive right, or stay in the same
place.  Each move may severely damage his vehicle with probability $0.5$,
because of the rough terrain.  The damage incurs a cost of $-10$ and
terminates the adventure. The adventurer has a noisy sensor that reports the
value of the treasure at each time step.  The sensor reading is accurate with
probability $0.7$, and the noise is evenly distributed over other possible
values in $X$.  At each time step, the adventurer must decide, based on the
sensor readings received so far, whether to drive on in hope of getting the
treasure or to stay put and protect his vehicle.
With a discount factor of 0.95, the optimal policy is in fact to stay in the
same place.

We studied two settings empirically: $X= \{101, 150\}$ and
$X = \{101, 102, \ldots, 150\}$, which result in $2$ and $50$ observations,
respectively. For each setting, we constructed 1,000 full DESPOTs from $K=500$
randomly sampled scenarios. 
We compute the optimal action without regularization for each DESPOT.
In the first setting with 2 observations, the
computed action is always to stay in the same place. This is indeed optimal.
In the second setting with 50 observations, about half of the computed actions
are to move right. This is suboptimal. Why did this happen?

Recall that the sampled scenarios are split among the observation
branches. With $2$ observations, each observation branch has $250$ scenarios
on average, after the first step. This is sufficient to represent the
uncertainty well. With $50$ observations, each observation branch has only
about $5$ scenarios on average. Because of the sampling variance, the
algorithm easily underestimates the probability and the expected cost of
damage to the vehicle. In other words, it overfits to the sampled scenarios.
Overfitting thus occurs much more often with a large number of observations.

Regularization helps to alleviate overfitting. For the second setting, we ran
the algorithm with and without regularization.  Without regularization,
the average total discounted reward is $-6.06 \pm 0.24$.
With regularization, the average total discounted reward is 
$0\pm 0$, which is optimal.

\subsection{Effect of Initial Bounds} 
\label{subsec:bounds}

Both \despot and POMCP rely on Monte Carlo simulation as a key algorithmic
technique. They differ in two main aspects. One is their search strategies. We
have seen in \secref{subsec:benchmark} that POMCP's search strategy is more
greedy, while \despot's search strategy is more robust. Another difference is
their default policies. While POMCP requires history-based default policies,
\despot provides greater flexibility. Here we look into the benefits of this
flexibility and also illustrate several additional techniques for constructing
effective default policies and initial upper bounds.  The benefits of
cleverly-crafted default policies and initial upper bounds are
domain-dependent.  We examine three domains --- Tag, Laser Tag, and Pocman --- to
give some intuition.  Tag and Laser Tag are closely related for easy
comparison, and both Laser Tag and Pocman are large-scale domains.
The results for Tag, Laser Tag, and Pocman are shown on Tables~\ref{tab:tag}
and \ref{tab:pocman}.

For  Tag and Laser Tag, 
we considered the following default policies for both problems.
\begin{itemize}
\item NORTH is a domain-specific,  fixed-action policy that always moves the robot to the
  adjacent grid cell to the north at every time step.
\item HIST is the improved history-based policy used by POMCP
  in the Tag experiment (\secref{sec:tag}). 
\item HYBRID is the hybrid policy used by DESPOT in the Laser Tag experiment
  (\secref{sec:lasertag}). 
\item The mode-MDP policy is the scenario-based policy described in
  \secref{sec:lower} and used by \despot in the Tag experiment (\secref{sec:tag}).
  It first estimates the most likely state and then applies the optimal MDP
  policy accordingly.
\item MODE-SP is a variant of the MODE-MDP policy. Instead of
  the optimal MDP policy, it applies a handcrafted domain-specific
  policy, which takes the action that moves the robot towards the target along
  a shortest path in the most likely state.
\end{itemize}
Next, consider the initial upper bounds.  UI is the uninformed upper
bound \eqref{eq:uninformed}.  MDP is the MDP upper bound \eqref{eq:mdpbound}.
SP is the domain-specific bound used in Laser Tag (\secref{sec:lasertag}).
We can use any of these three bounds, UI, MDP, or SP, to
initialize $U(D,s)$ and obtain a corresponding hindsight optimization bound.

\begin{table}
  \caption{
    Comparison of different initial upper and default policies on Tag and
    Laser Tag. 
    The table reports the average total discounted rewards of the default
    policies (column 2) and  of DESPOT
    when used with
    different combinations of initial upper and default policies (columns 3--8).
  } \label{tab:tag}
\begin{center}
{\scriptsize
\begin{tabular}{lrc*{6}{@{\hspace{1.5ex}}r}}
\toprule
\multicolumn{2}{c}{Default Policy} &&
\multicolumn{6}{c}{Initial Upper Bound}\\  \cline{4-9}
&   &&
 \multicolumn{1}{c}{UI}  &
 \multicolumn{1}{c}{MDP}  &
 \multicolumn{1}{c}{SP}  &
 \multicolumn{1}{c}{HO-UI}  &
 \multicolumn{1}{c}{HO-MDP}  &
 \multicolumn{1}{c}{HO-SP}  \\
\midrule
\emph{\textbf{Tag}}\\
 ~~NORTH & -19.80 $\pm$ 0.00   && -15.29 $\pm$ 0.35 & -6.27 $\pm$ 0.26 & -6.31
$\pm$ 0.26 & -6.39 $\pm$ 0.27 &  -6.49 $\pm$ 0.26 & -6.45 $\pm$ 0.26 \\
~~HIST & -14.95 $\pm$ 0.41 && -8.29 $\pm$ 0.28  & -7.36 $\pm$ 0.26 & -7.25 $\pm$ 0.26 & -7.34 $\pm$ 0.26 & -7.41 $\pm$ 0.26 & -7.41 $\pm$ 0.26 \\
~~MODE-MDP & -9.31 $\pm$ 0.29  && -6.29 $\pm$ 0.27  & -6.27 $\pm$ 0.26 & -6.24 $\pm$ 0.27 & -6.28 $\pm$ 0.26 & -6.23 $\pm$ 0.26 & -6.32 $\pm$ 0.26  \\
~~MODE-SP & -12.57 $\pm$ 0.34  && -6.53 $\pm$ 0.27  & -6.51 $\pm$ 0.27 & -6.38 $\pm$ 0.26 & -6.52 $\pm$ 0.27 & -6.56 $\pm$ 0.27 & -6.55 $\pm$ 0.26 \\
  \midrule
\emph{\textbf{Laser Tag}}\\
~~NORTH & -19.80 $\pm$ 0.00  &  & -15.80 $\pm$ 0.35 & -11.18 $\pm$ 0.28& -10.35
$\pm$ 0.26 & -10.91 $\pm$ 0.27 & -10.91 $\pm$ 0.27  & -10.91 $\pm$ 0.27 \\
~~HYBRID & -19.77 $\pm$ 0.27    &   & -8.76 $\pm$ 0.25  & -8.59 $\pm$ 0.26 & -8.45 $\pm$ 0.26  & -8.52 $\pm$ 0.25  & -8.66 $\pm$ 0.26   & -8.54 $\pm$ 0.26  \\
~~MODE-MDP & -9.97 $\pm$ 0.25 &   & -9.83 $\pm$ 0.25  & -9.84 $\pm$ 0.25 & -9.83 $\pm$ 0.25  & -9.83 $\pm$ 0.25  & -9.83 $\pm$ 0.25   & -9.83 $\pm$ 0.25   \\
~~MODE-SP & -10.07 $\pm$ 0.26  & & -9.63 $\pm$ 0.25  & -9.63 $\pm$ 0.25 & -9.63 $\pm$ 0.25  & -9.63 $\pm$ 0.25  & -9.63 $\pm$ 0.25   & -9.63 $\pm$ 0.25  \\
  
 \bottomrule
\end{tabular}
}
\end{center}
\end{table}

For Pocman, we considered three default policies.  NORTH is the policy of always
moving to cell to the north.  RANDOM is the policy which moves randomly to a
legal adjacent cell. REACTIVE is the policy
described in Section~\ref{sec:pocman}. 
We also considered two initial upper bounds.
UI is the uninformed upper bound \eqref{eq:uninformed}. AHO is the
approximate hindsight optimization bound described in Section~\ref{sec:pocman}.

\begin{table}
  \caption{
    Comparison of different initial upper and default policies on Pocman.
    The table reports the average total undiscounted rewards of the default
    policies (column 2) and  of DESPOT
    when used with
    different combinations of initial upper and default policies (columns 3--4).
}\label{tab:pocman} 
\begin{center}
{\scriptsize

\begin{tabular}{lrrrr}
\toprule
\multicolumn{2}{c}{Default Policy} &&
\multicolumn{2}{c}{Initial Upper Bound}\\  \cline{4-5}
&  &&
  \multicolumn{1}{c}{UI} &
  \multicolumn{1}{c}{AHO} \\
\midrule
  NORTH    & $-1253.91 \pm 24.03$ &  & $-82.09 \pm 1.72$ & $-8.04 \pm 3.54$ \\
  RANDOM   & $-72.68 \pm 1.42$    &  & $284.70 \pm 4.45$ & $202.21 \pm 4.71$ \\
  REACTIVE & $80.93 \pm 5.40$     &  & $315.62 \pm 4.16$ & $317.78 \pm 4.20$  \\
\bottomrule
\end{tabular}
}
\end{center}
\end{table}

The results in Tables~\ref{tab:tag} and \ref{tab:pocman} offer several
observations.  First, \despot's online search considerably improves the
default policy that it starts with, regardless of the specific default policy
and upper bound used.  This indicates the importance of online search.
Second, while some initial upper bounds are approximate, they nevertheless
yield very good results, even compared with true upper bounds.  This is
illustrated by the approximate MDP bounds for Tag and Laser Tag as well as the
approximate hindsight optimization bound for Pocman.  Third, the simple
uninformed upper bound can sometimes yield good results, when paired with
suitable default policies. Finally, these observations are consistent across
domains of different sizes, \eg, Tag and Laser Tag.

In these examples, default policies seem to have much more impact than initial
upper bounds do.
It is also interesting to note that for LaserTag, HYBRID is one of the
worst-performing default policies, but leads to the best performance
ultimately.  So a stronger default policy does not always lead to better
performance. There are other factors that may affect the performance, and
flexibility in constructing both the upper bounds and initial policies can be
useful.

\section{Discussion}
\label{sec:discussion}

One key idea of \despot is to use a set of randomly sampled
scenarios as an approximate representation of uncertainty and plan over the
sampled scenarios. This works well if there exists a compact, near-optimal
policy. The basic idea extends beyond POMDP planning and applies to other
planning tasks under uncertainty, \eg, MDP planning and belief-space MDP planning.

The search strategy of the anytime \despot algorithm has several
desirable properties. It is asymptotically sound and complete (Theorem~\ref{th:unbounded}).
It is robust against imperfect heuristics (Theorem~\ref{th:unbounded}).  It is
also flexible and allows easy incorporation of domain knowledge.  However,
there are many alternatives, \eg, the UCT strategy used in POMCP. While UCT
has very poor performance in the worst case, it is simpler to implement, an
important practical consideration. Further, it avoids the overhead of
computing the upper and lower bounds during the search and thus could be
 more efficient in some cases. 
In practice, there is trade-off between simplicity and robustness. 
 
 Large observation spaces may cause  \despot to overfit to the sampled
 scenarios (see \secref{subsec:reg}) and pose a major challenge.
 Unfortunately, this may happen with many common sensors, such as cameras and
 laser range finders.  Regularization alleviates the difficulty. We are
 currently investigating several other approaches: structure the observation
 space hierarchically and importance sampling.

 The default policy plays an important role in  \despot.
A good default policy reduces the size of the optimal policy.
It also helps guide the heuristic search in the anytime search algorithm.
In practice, we want the default policy to incorporate as much domain
knowledge as possible for good performance.  Another interesting direction is
to learn the default policy during the search.

\section{Conclusions}
\label{sec:conclusions}

This paper presents \despot, a new approach to online POMDP planning.  The
main underlying
idea is to plan according to a set of sampled scenarios while avoiding
overfitting to the samples. Theoretical analysis shows that a \despot
compactly captures the ``execution'' of all policies under the sampled
scenarios and yields a near-optimal policy, provided that there is a small
near-optimal policy.  The analysis provides the justification for
our overall planning approach based on sampled scenarios and the need for
regularization.  Experimental results indicate strong performance of the
anytime \despot algorithm in practice.  On moderately-sized POMDPs, \despot is
competitive with SARSOP and AEMS2, but it scales up much better. On
large-scale POMDPs with up to $10^{56}$ states, \despot matches and sometimes
outperforms POMCP. 

\newpage
\bigskip\noindent{\bf Acknowledgments}

\medskip\noindent
The work was mainly performed while N. Ye was with the
Department of Computer Science, National University of Singapore.  We are
grateful to the anonymous reviewers for carefully reading the manuscript and
providing many suggestions which helped greatly in improving the paper.  The
work is supported in part by National Research Foundation Singapore through
the SMART IRG program, US Air Force Research Laboratory under agreements
FA2386-12-1-4031 and FA2386-15-1-4010, a Vice Chancellor's Research Fellowship
provided by Queensland University of Technology and an Australian Laureate
Fellowship (FL110100281) provided by Australian Research Council.

\appendix
\section{Proofs}
\subsection*{Proof of Theorem~\ref{th:uniform}}
We will need the following lemma from \cite[p.~103]{Hau92} Lemma 9, part (2).

\begin{lemma}
\emph{(Haussler's bound)}
\label{lem:haussler}
Let $Z_1,\ldots,Z_n$ be i.i.d random variables with range $0\leq Z_i\leq M$,
$\mathbb{E}(Z_i) = \mu$, and $\hat{\mu} = \frac{1}{n}\operatorname*{\sum}_{i=1}^{n}Z_i$, $1 \leq i \leq n$. Assume $\nu > 0$ and $0 < \alpha < 1$. Then
\begin{equation}
	\Pr\left(d_\nu(\hat{\mu}, \mu) > \alpha\right) < 2e^{-\alpha^2{\nu}n/M} \nonumber
\end{equation}
\noindent where $d_\nu(r, s) = \frac{|r - s|}{\nu + r + s}$.
As a consequence,
\begin{equation}
	\Pr\left(\mu < \frac{1-\alpha}{1+\alpha}\hat{\mu} -
	  \frac{\alpha}{1+\alpha}\nu \right) < 2e^{-\alpha^2{\nu}n/M}.\nonumber
\end{equation}
\end{lemma}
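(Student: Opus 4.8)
The statement is the classical relative (multiplicative) Chernoff bound of Haussler; assuming $\mu>0$ (the case $\mu=0$ forces $\hat\mu=0$ and is trivial), the plan is to prove the two-sided bound on $\Pr\bigl(d_\nu(\hat\mu,\mu)>\alpha\bigr)$ directly and then read off the ``consequence'' as its one-sided half. First I would unfold the definition: $d_\nu(\hat\mu,\mu)>\alpha$ means $|\hat\mu-\mu|>\alpha(\nu+\hat\mu+\mu)$, which splits into an upper deviation $\hat\mu>a_+$ and a lower deviation $\hat\mu<a_-$, where solving the two linear relations gives $a_+=\frac{(1+\alpha)\mu+\alpha\nu}{1-\alpha}$ and $a_-=\frac{(1-\alpha)\mu-\alpha\nu}{1+\alpha}$. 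The identical algebra shows that the event in the consequence, $\mu<\frac{1-\alpha}{1+\alpha}\hat\mu-\frac{\alpha}{1+\alpha}\nu$, is exactly the single upper-deviation event $\hat\mu>a_+$; so once the two-sided inequality is in hand, the consequence follows by discarding the lower tail (while harmlessly retaining the factor $2$).

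Second, I would bound each one-sided tail by Chernoff's method. For $0\le Z_i\le M$, convexity of $t\mapsto e^{\lambda t}$ on $[0,M]$ gives $\mathbb{E}\bigl[e^{\lambda Z_i}\bigr]\le 1+\tfrac{\mu}{M}(e^{\lambda M}-1)\le\exp\bigl(\tfrac{\mu}{M}(e^{\lambda M}-1)\bigr)$, so that $\Pr(\hat\mu\ge a)\le\exp\bigl(-n[\lambda a-\tfrac{\mu}{M}(e^{\lambda M}-1)]\bigr)$ for every $\lambda>0$. Choosing $e^{\lambda M}=a/\mu$ optimizes the exponent and yields the Poisson-type bound $\Pr(\hat\mu\ge a)\le\exp\bigl(-\tfrac{n}{M}\psi(a,\mu)\bigr)$, where $\psi(a,\mu)=a\ln\tfrac{a}{\mu}-(a-\mu)$. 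Applying the same argument to the variables $M-Z_i$ gives the matching lower-tail bound $\Pr(\hat\mu\le a)\le\exp\bigl(-\tfrac{n}{M}\psi(a,\mu)\bigr)$ for $a<\mu$, and the lower-deviation event is vacuous whenever $a_-<0$ since $\hat\mu\ge0$.

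The crux, which I expect to be the main obstacle, is the analytic inequality $\psi(a_\pm,\mu)\ge\alpha^2\nu$ at the two thresholds: granting it, each tail is at most $e^{-\alpha^2\nu n/M}$ and a union bound produces the claimed $2e^{-\alpha^2\nu n/M}$. To establish it I would set $x=(a-\mu)/\mu$, writing $\psi(a,\mu)=\mu\,h(x)$ with $h(x)=(1+x)\ln(1+x)-x$, and substitute the threshold relations $a_\pm-\mu=\pm\alpha(\nu+a_\pm+\mu)$ to express $x$ through $\alpha,\mu,\nu$. The Bennett-type bound $h(x)\ge\frac{x^2}{2(1+x/3)}$ for $x\ge0$ handles the upper tail and $h(x)\ge x^2/2$ for $x\le0$ handles the lower tail, reducing $\psi(a_\pm,\mu)\ge\alpha^2\nu$ to a rational inequality in $\alpha\in(0,1)$, $\mu\ge0$, $\nu>0$ that I would clear of denominators and verify termwise. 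A first-order expansion at small $\alpha$ already gives $\psi(a_+,\mu)\approx\frac{\alpha^2(\nu+2\mu)^2}{2\mu}\ge\alpha^2\nu$ with strict slack, so the only real difficulty is confirming there is no crossover across the full range $\alpha\in(0,1)$.

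Finally I would assemble the pieces: the two Poisson tails, each controlled by $\psi(a_\pm,\mu)\ge\alpha^2\nu$, combine through the union bound into $\Pr\bigl(d_\nu(\hat\mu,\mu)>\alpha\bigr)<2e^{-\alpha^2\nu n/M}$, and the consequence is obtained by noting its event coincides with the upper tail $\{\hat\mu>a_+\}$ and bounding it by the full two-sided probability.
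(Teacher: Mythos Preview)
The paper does not prove this lemma: it is quoted directly from Haussler (1992, p.~103, Lemma~9(2)) and used as a black box in the proof of Theorem~\ref{th:uniform}. Your proposal therefore goes beyond what the paper does, sketching an actual derivation via the Chernoff method---which is in fact the route Haussler himself takes.

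Your decomposition is sound: $d_\nu(\hat\mu,\mu)>\alpha$ splits into $\{\hat\mu>a_+\}\cup\{\hat\mu<a_-\}$ with the thresholds you state, and the ``consequence'' is precisely the upper-tail event $\{\hat\mu>a_+\}$. The Poisson-type Chernoff bound and the identification of $\psi(a_\pm,\mu)\ge\alpha^2\nu$ as the analytic crux are both correct, and your plan to handle that inequality via Bennett-type lower bounds on $h(x)=(1+x)\ln(1+x)-x$ is the standard strategy. One minor slip: the matching lower-tail bound $\Pr(\hat\mu\le a)\le\exp\bigl(-\tfrac{n}{M}\psi(a,\mu)\bigr)$ is not obtained by applying the argument to $M-Z_i$ (that would put $\psi(M-a,M-\mu)$, a different quantity, in the exponent) but simply by running the same moment-generating-function bound with $\lambda<0$; the convexity estimate and $1+y\le e^y$ hold for all real $\lambda$, and optimizing at $e^{\lambda M}=a/\mu<1$ gives the same exponent $\psi(a,\mu)$.
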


Let $\Pi_i$ be the class of policy trees in $\Pi_{b_0, \height, \nsam}$ and having
size $i$. The next lemma bounds the size of $\Pi_i$.
\begin{lemma} \label{lem:size}
$|\Pi_i| \leq i^{i-2} (|A||Z|)^{i}$. 
\end{lemma}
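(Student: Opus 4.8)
The plan is to injectively encode each size-$i$ DESPOT policy tree as a triple $(T,\mathbf z,\mathbf a)$ — a rooted labeled tree, a tuple of observation labels, and a tuple of action labels — and then to count such triples using Cayley's formula. Given $\pi\in\Pi_i$, its set of internal nodes has exactly $i$ elements, and within any sibling set the children are distinguished by distinct observation labels on their incoming edges; so, fixing an arbitrary total order on $Z$, a depth-first traversal that visits the children of each node in increasing observation-label order assigns the internal nodes a canonical indexing $1,\dots,i$, with the root as node $1$. I would then observe that $\pi$ is completely determined by: the rooted tree $T$ on $\{1,\dots,i\}$ with root $1$ describing the ancestry among internal nodes; for each non-root index $j$, the observation $z_j\in Z$ on the edge into $j$; and for each index $j$, the action $a_j\in A$ chosen at $j$. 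Writing $\mathbf z=(z_2,\dots,z_i)$ and $\mathbf a=(a_1,\dots,a_i)$, this gives the encoding.

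I would argue that the map $\pi\mapsto(T,\mathbf z,\mathbf a)$ is injective. The key point is that the leaves of a DESPOT policy tree carry no information about the policy: by the execution rule, reaching a leaf and reaching an observation not present in the tree both trigger the default policy, so $(T,\mathbf z,\mathbf a)$ already suffices to recover the policy tree — it can be read back directly as a size-$i$ policy tree. Hence distinct elements of $\Pi_i$ yield distinct triples.

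It then remains to count the triples. By Cayley's formula there are exactly $i^{i-2}$ labeled trees on $i$ vertices (the value being $1$ at $i=1$), and requiring vertex $1$ to be the root does not change this count, so there are at most $i^{i-2}$ choices of $T$; there are at most $|Z|^{i-1}$ choices of $\mathbf z$ and at most $|A|^{i}$ choices of $\mathbf a$. Multiplying,
\[
|\Pi_i|\;\le\; i^{i-2}\,|Z|^{i-1}\,|A|^{i}\;\le\; i^{i-2}(|A||Z|)^{i},
\]
where the last inequality uses $|Z|\ge 1$. The only step that requires real care — the rest is bookkeeping — is this reduction to the triple: one must be confident that the leaf structure of a DESPOT policy tree is immaterial to the policy (so that $\Pi_i$ embeds into the set of triples and not into something much larger) and that the structure-determined indexing makes the encoding well defined and injective.
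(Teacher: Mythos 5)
Your proof is correct and follows essentially the same route as the paper's: both bound $|\Pi_i|$ by counting the tree of internal nodes via Cayley's formula (the paper injects rooted ordered trees into labeled trees, you index the nodes canonically using the observation labels) and then multiply by $|A|^{i}$ action labels and $|Z|^{i-1}$ observation labels to get $i^{i-2}|A|^{i}|Z|^{i-1}\le i^{i-2}(|A||Z|)^{i}$. Your explicit injectivity argument (leaves carry no information, so the internal structure determines the policy) just makes precise a step the paper leaves implicit.
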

\begin{proof}
Let $\Pi'_i$ be the class of rooted ordered trees of size $i$.
A policy tree in $\Pi_i$ is obtained from some tree in $\Pi'_i$ by assigning one
of the $|A|$ possible action labels to each node, and one of at most $|Z|$
possible labels to each edge, thus $|\Pi_i| \le |A|^{i} \cdot |Z|^{i-1}$.
 $|\Pi'_i|$.
To bound $|\Pi'_i|$, note that it is not more than the number of all trees
with $i$ labeled nodes, because the in-order labeling of a tree in $\Pi'_i$
corresponds to a labeled tree.
By Cayley's formula~\citeyear{cay89}, the number of trees with $i$
labeled nodes is $i^{i-2}$, thus $|\Pi'_i| \le i^{(i-2)}$.
Therefore
$|\Pi_i| 
 \le i^{i-2} \cdot |A|^i \cdot  |Z|^{i-1} 
 \le i^{i-2}(|A||Z|)^{i}$.
\end{proof}

In the following, we often abbreviate $V_{\pi}(b_0)$ and $\hat{V}_{\pi}(b_0)$
as $V_{\pi}$ and $\hat{V}_{\pi}$ respectively, since we will only consider the
true and empirical values for a fixed but arbitrary $b_0$.
Our proof follows a line of reasoning similar to that of \citeA{wanwon12}.

\begin{theoremstar}~{\hspace{-1ex}\rm\bf \ref{th:uniform}}
  For any $\tau, \alpha \in (0,1)$, any 
  belief $b_0$, and any positive integers $D$ and $K$, 
	with probability at least $1 - \tau$, \emph{every} \despot
  policy tree $\pol
  \in \polclass_{b_0, \height,\nsam}$ satisfies
\begin{equation*}
V_{\pi}(b_0) \geq
  \frac{1-\alpha}{1+\alpha}\hat{V}_{\pi}(b_0)
	- \frac{\rmax}{(1+\alpha)(1-\gamma)}\cdot\frac{\ln(4 / \tau) +
       |\pi|\ln \bigl(KD |A| |Z|\bigr)}{\alpha K},
\end{equation*}
where the random variable $\hat{V}_{\pi}(b_0)$ is the estimated value of $\pi$
under the set of $\nsam$ scenarios randomly sampled according to $b_0$.
\end{theoremstar}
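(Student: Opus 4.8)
The plan is to apply Haussler's concentration bound (Lemma~\ref{lem:haussler}) to each fixed policy, then take a union bound over the finite set $\polclass_{b_0,\height,\nsam}$, using the size estimate of Lemma~\ref{lem:size} and choosing the free parameter $\nu$ in Haussler's bound to depend on the policy size so that the counting factor cancels.

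First I would fix a policy tree $\pi\in\polclass_{b_0,\height,\nsam}$ with $|\pi|=i$. Writing $\hat V_\pi(b_0)=\frac1K\sum_{k=1}^{K}V_{\pi,\seq_k}$ for the $K$ i.i.d.\ sampled scenarios $\seq_1,\dots,\seq_K$, each $V_{\pi,\seq_k}$ lies in $[0,\rmax/(1-\gamma)]$ (rewards are non-negative and bounded by $\rmax$), the $V_{\pi,\seq_k}$ are i.i.d.\ across $k$, and $\mathbb{E}[V_{\pi,\seq_k}]=V_\pi(b_0)$ because the deterministic simulative model $g$ faithfully reproduces the POMDP dynamics (this is where we use that the default policy's value on one scenario is independent of its value on another). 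I then invoke the consequence of Lemma~\ref{lem:haussler} with $M=\rmax/(1-\gamma)$, $n=K$, and
\[
\nu_i=\frac{\rmax}{(1-\gamma)\,\alpha^2K}\Bigl(\ln(4/\tau)+i\ln\bigl(KD|A||Z|\bigr)\Bigr),
\]
which gives $V_\pi(b_0)<\frac{1-\alpha}{1+\alpha}\hat V_\pi(b_0)-\frac{\alpha}{1+\alpha}\nu_i$ with probability less than $2e^{-\alpha^2\nu_iK/M}=\tau/\bigl(2(KD|A||Z|)^{i}\bigr)$. Since $\frac{\alpha}{1+\alpha}\nu_i$ is exactly the term subtracted in the theorem, this is precisely the event that the claimed inequality fails for~$\pi$.

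Next I would union-bound over all policies. The key finite-size fact is that every $\pi\in\polclass_{b_0,\height,\nsam}$ has $|\pi|\le KD$: in a policy tree the scenario sets $\seqset_b$ of nodes $b$ at a common depth are pairwise disjoint (each scenario traces a single path under $\pi$) and are contained in the $K$ sampled scenarios, so each of the at most $D$ levels of internal nodes carries at most $K$ of them. Grouping policies by size and using $|\Pi_0|=1$ together with Lemma~\ref{lem:size} ($|\Pi_i|\le i^{i-2}(|A||Z|)^i$ for $i\ge1$), the probability that the bound fails for some policy is at most
\[
\sum_{i=0}^{KD}|\Pi_i|\cdot\frac{\tau}{2(KD|A||Z|)^i}\;\le\;\frac{\tau}{2}\Bigl(1+\sum_{i=1}^{KD}\frac{i^{i-2}}{(KD)^i}\Bigr),
\]
where the factor $(|A||Z|)^i$ cancels — this cancellation is exactly why $|A||Z|$ sits inside the logarithm in $\nu_i$, and the $D$ inside the logarithm is what makes the remaining series summable given the cutoff $i\le KD$. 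Writing $N=KD$ one has $i^{i-2}/N^i=N^{-2}(i/N)^{i-2}\le N^{-2}$ for $2\le i\le N$ and $=N^{-1}$ for $i=1$, so $\sum_{i=1}^{N}i^{i-2}/N^i\le N^{-1}+(N-1)N^{-2}\le 1$; since each per-policy probability was a strict inequality, the total failure probability is strictly less than $\tau$, which proves the theorem.

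The main obstacle is the bookkeeping that makes the union bound finite and tight: one must (i) pin down the a priori bound $|\pi|\le KD$ on policy size — without it the series $\sum_i i^{i-2}/N^i$ diverges — and (ii) tie the per-policy slack $\nu_i$ to $|\pi|$ in just the right way so that the tree-counting factor $(|A||Z|)^i$ is absorbed and the base of the geometric-type sum matches the size cutoff. The concentration step itself is a direct citation of Haussler's bound; the only modeling care needed there is the i.i.d.\ structure and unbiasedness of $V_{\pi,\seq_k}$, which is where the independence assumption on the default policy enters.
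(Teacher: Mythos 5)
Your proposal is correct and follows essentially the same route as the paper: Haussler's bound applied per policy with a size-dependent slack, a union bound over the policy class stratified by size, and the Cayley-based counting bound $|\Pi_i|\le i^{i-2}(|A||Z|)^i$ together with $|\pi|\le KD$ to absorb the counting factor into the logarithm. The only difference is bookkeeping: you allocate failure probability $\tau/\bigl(2(KD|A||Z|)^i\bigr)$ directly and sum $i^{i-2}/(KD)^i$ over $i\le KD$, whereas the paper allocates $\tau/(2i^2|\Pi_i|)$, sums via $\sum_i 1/i^2=\uppi^2/6$, and only afterwards upper-bounds $\epsilon_{|\pi|}$ using the same counting estimates — an equivalent calibration.
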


\begin{proof}
  Consider an arbitrary policy tree $\pi \in \polclass_{b_0, \height,\nsam}$.
We know that for a random scenario $\seq$ for the belief $b_0$, executing the
policy $\pi$ w.r.t. $\seq$ gives us a sequence of states and observations
distributed according to the distributions $P(s' | s, a)$ and $P(z | s, a)$. 
Therefore, for $\pi$, its true value $V_{\pi}$ equals 
$\mathbb{E}\left(V_{\pi, \seq} \right)$, where the expectation is over the
distribution of scenarios.
On the other hand, since 
$\hat{V}_{\pi} = \frac{1}{K}\sum_{k=1}^K V_{\pi,\seq_k}$,
and the scenarios $\seq_1, \seq_2,\ldots,\seq_K$ are independently sampled,
Lemma~\ref{lem:haussler} gives
\begin{equation} \label{eqn:single}
	\Pr\left(V_{\pi} < \frac{1-\alpha}{1+\alpha} \hat{V}_{\pi} 
		- \frac{\alpha}{1 + \alpha} \epsilon_{|\pi|}\right) 
	< 2e^{-\alpha^2{\epsilon_{|\pi|}}K/M} 
\end{equation}
where $M = (R_{\max}) / (1-\gamma)$, and $\epsilon_{|\pi|}$ is chosen such that

\begin{equation} \label{eqn:epsilon}
2e^{-\alpha^2{\epsilon_{|\pi|}}K/M} = \tau/ (2|\pi|^2 |\Pi_{|\pi|}|).
\end{equation}

By the union bound, we have
\begin{eqnarray*} \label{eqn:union} 
\Pr\left( \exists \pi \in
    \polclass_{b_0, \height,\nsam} \text{ such that } V_{\pi} <
      \frac{1-\alpha}{1+\alpha} \hat{V}_{\pi} 
			- \frac{\alpha}{1 + \alpha} \epsilon_{|\pi|} 
  \right) \\
  \leq \sum_{i=1}^{\infty} \sum_{\pi \in \Pi_i} \Pr\left( V_{\pi} <
    \frac{1-\alpha}{1+\alpha} \hat{V}_{\pi} 
		- \frac{\alpha}{1 + \alpha} \epsilon_{|\pi|} \right).
\end{eqnarray*}

By the choice of $\epsilon_i$'s and Inequality~(\ref{eqn:single}), the right
hand side of the above inequality is bounded by
$\sum_{i=1}^{\infty} |\Pi_i| \cdot [ \tau / (2i^2 |\Pi_i|) ] 
= \bfgreek{pi}^2 \tau / 12 < \tau$, where the well-known identity
$\sum_{i=1}^\infty 1/i^2 = \bfgreek{pi}^2 / 6$ is used.
Hence,

\begin{equation} \label{eqn:uniform}
\Pr\left(
 \exists \pi \in \polclass_{b_0, \height,\nsam}
 \left[
				V_{\pi} < \frac{1-\alpha}{1+\alpha} \hat{V}_{\pi} 
					- \frac{\alpha}{1 + \alpha} \epsilon_{|\pi|}
			  \right]
		\right) 
< \tau.
\end{equation}
 
Equivalently, with probability $1-\tau$, every $\pi \in \polclass_{b_0, \height,\nsam}$
satisfies

\begin{equation} \label{eqn.bound}
V_{\pi} \geq \frac{1-\alpha}{1+\alpha}\hat{V}_{\pi} 
		- \frac{\alpha}{1+\alpha} \epsilon_{|\pi|}.
\end{equation}

To complete the proof, we now give an upper bound on $\epsilon_{|\pi|}$.
From Equation~(\ref{eqn:epsilon}), we can solve for $\epsilon_{|\pi|}$ to get
$\epsilon_{i} = \frac{R_{\max}}{\alpha (1-\gamma)} \cdot 
  \frac{\ln(4 / \tau) + \ln(i^2 |\Pi_i|)}{\alpha K}$.
For any $\pi$ in $\polclass_{b_0, \height,\nsam}$, its size is at most $KD$, and 
$i^2 |\Pi_i| \le (i |A| |Z|)^i \le (KD|A||Z|)^i$ by Lemma~\ref{lem:size}. Thus we have
\begin{equation}
\epsilon_{|\pi|} \le \frac{R_{\max}}{\alpha (1-\gamma)} \cdot 
  \frac{\ln(4 / \tau) + |\pi|\ln(KD |A| |Z|)}{\alpha K}. \nonumber
\end{equation}
Combining this with Inequality~(\ref{eqn.bound}), we get
\begin{equation}
V_{\pi} \geq \frac{1-\alpha}{1+\alpha}\hat{V}_{\pi} 
	- \frac{R_{\max}}{(1+\alpha)(1-\gamma)}
  \cdot \frac{\ln(4 / \tau) + |\pi|\ln(KD|A||Z|)}{\alpha K}. \nonumber
\end{equation}
This completes the proof.
\end{proof}

\subsection*{Proof of Theorem~\ref{th:despot}}
We need the following lemma for proving Theorem~\ref{th:despot}.

\begin{lemma}
\label{lem:hoeffding}
For a fixed policy $\pi$ and any $\tau \in (0,1)$, with probability at least $1 - \tau$.
\begin{equation}
	\hat{V}_{\pi} \geq V_\pi - \frac{R_{\max}}{1-\gamma}\sqrt{\frac{2\ln (1/\tau)}{K}}\nonumber
\end{equation}
\end{lemma}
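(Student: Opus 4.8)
The plan is to recognize Lemma~\ref{lem:hoeffding} as a one-line application of Hoeffding's inequality to the i.i.d.\ sample of per-scenario returns. First I would recall, exactly as in the proof of Theorem~\ref{th:uniform}, that the scenarios $\seq_1,\dots,\seq_K$ are drawn independently from the same distribution determined by $b_0$, so the returns $V_{\pi,\seq_1},\dots,V_{\pi,\seq_K}$ are i.i.d.\ with common mean $\mathbb{E}(V_{\pi,\seq_k}) = V_\pi$, and $\hat{V}_\pi = \frac1K\sum_{k=1}^K V_{\pi,\seq_k}$ is their empirical mean. Since all rewards are non-negative and bounded above by $\rmax$, each return satisfies $0 \le V_{\pi,\seq_k} \le \sum_{t=0}^\infty \gamma^t \rmax = \rmax/(1-\gamma)$, so the summands lie in a fixed interval of length $M := \rmax/(1-\gamma)$.

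Next I would invoke the one-sided Hoeffding bound for the average of $K$ independent variables confined to an interval of length $M$:
\[
  \Pr\bigl(\hat{V}_\pi < V_\pi - t\bigr) \le \exp\bigl(-2Kt^2/M^2\bigr),\qquad t>0.
\]
Setting the right-hand side equal to $\tau$ and solving gives $t = M\sqrt{\ln(1/\tau)/(2K)}$. Because $M\sqrt{\ln(1/\tau)/(2K)} \le M\sqrt{2\ln(1/\tau)/K}$, the event $\hat{V}_\pi \ge V_\pi - \frac{\rmax}{1-\gamma}\sqrt{2\ln(1/\tau)/K}$ contains the event $\hat{V}_\pi \ge V_\pi - t$ and therefore holds with probability at least $1-\tau$, which is precisely the claim. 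The constant stated in the lemma is deliberately loose; matching the tighter $\sqrt{1/2}$ factor would work just as well but is not needed downstream.

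There is no genuine obstacle here — this is a textbook concentration argument. The only points that warrant a sentence of care are (i) verifying the i.i.d.\ structure and the uniform bound $V_{\pi,\seq_k}\in[0,\rmax/(1-\gamma)]$ on the per-scenario discounted return, which simply reuses the observation already made in the proof of Theorem~\ref{th:uniform}; and (ii) keeping the deviation bound one-sided, since only a lower bound on $\hat{V}_\pi$ is required. A two-sided version would introduce an extra factor of $2$ inside the logarithm, but the loose stated constant absorbs that anyway, so either form suffices.
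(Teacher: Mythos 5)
Your proof is correct and follows essentially the same route as the paper: both apply a one-sided Hoeffding bound to the i.i.d.\ per-scenario returns $V_{\pi,\seq_k} \in [0, \rmax/(1-\gamma)]$ with empirical mean $\hat{V}_\pi$ and mean $V_\pi$, then solve for the deviation at confidence level $\tau$. The only cosmetic difference is that the paper plugs $\tau = e^{-K\epsilon^2/(2M^2)}$ into a looser form of Hoeffding's inequality so the stated constant $\sqrt{2\ln(1/\tau)/K}$ falls out exactly, whereas you use the sharper exponent $-2Kt^2/M^2$ and then observe the lemma's constant is loose enough to absorb the slack.
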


\begin{proof}
Let $\pi$ be a policy and $V_\pi$ and $\hat{V}_\pi$ as mentioned. Hoeffding's inequality~\citeyear{hoe63} gives us
\begin{equation}
	\Pr\left(\hat{V}_{\pi} \ge V_{\pi} - \epsilon\right) \ge 1- e^{-K\epsilon^2 / (2M^2)}, \nonumber
\end{equation}
where $M = R_{\max} / (1 - \gamma)$.

Let $\tau = e^{-K\epsilon^2 / (2M^2)}$ and solve for $\epsilon$, then we get
\begin{equation}
	\Pr\left(\hat{V}_{\pi} \geq V_\pi - \frac{R_{\max}}{1-\gamma}\sqrt{\frac{2\ln (1/\tau)}{K}}\right) 
	 \geq 1-\tau. \nonumber
\end{equation}
\end{proof}

\begin{theoremstar}~{\hspace{-1ex}\rm\bf \ref{th:despot}}
  Let $\pol$ be an arbitrary policy at a belief $b_0$.  Let
  $\Pi_\mathcal{D}$ be the set of policies derived from a
\despot~$\mathcal{D}$ that has height \height and is constructed from
$\nsam$ scenarios sampled randomly according to $b_0$.  For any $\tau,
\alpha \in (0,1)$, if
\begin{equation*}
    \small
{\hat \pol} = \argmax_{\pol' \in \Pi_\mathcal{D}} \biggl\{   \frac{1-\alpha}{1+\alpha}\hat V_{\pi'}(b_0) -
     \frac{R_{\max}}{(1+\alpha)(1-\gamma)}\cdot\frac{|\pi'|\ln\bigl(KD|A||Z|\bigr)}{\alpha
        K}\biggr\},
  \end{equation*}
then
\begin{dm}
  \resizebox{0.95\hsize}{!}{$
  V_{\hat \pi}(b_0) \geq \frac{1-\alpha}{1+\alpha}V_\pi(b_0) -
  \frac{R_{\max}}{(1+\alpha)(1-\gamma)}\biggl(\frac{\ln(8 / \tau) +
      |\pi|\ln\bigl(KD|A||Z|\bigr)}{\alpha K} + (1-\alpha)\Bigl(\sqrt{\frac{2\ln
          (2/\tau)}{K}}+\gamma^{\sss D}\Bigr)\biggr),
  $}
\end{dm}
with probability at least $1 - \tau$.
\end{theoremstar}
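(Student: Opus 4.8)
\noindent
The plan is to chain three ingredients: the uniform estimation bound of Theorem~\ref{th:uniform} applied to $\hat\pi$, the defining optimality of $\hat\pi$ for the regularized objective in \eqref{eq:5}, and Hoeffding's inequality (Lemma~\ref{lem:hoeffding}) applied to the fixed comparison policy $\pi$. The only preparatory work is to turn $\pi$ --- which need not be a \despot policy and may be ``larger'' than the whole of $\mathcal D$ --- into a legitimate competitor inside $\Pi_{\mathcal D}$.

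First I would define $\pi''\in\Pi_{\mathcal D}$ as the \emph{restriction} of $\pi$ to $\mathcal D$: at each node of $\mathcal D$ it takes the action prescribed by $\pi$, and it reverts to the default policy at the leaves of $\mathcal D$ and upon any observation absent from $\mathcal D$. Two properties are needed. (i)~The policy tree of $\pi''$ is obtained from that of $\pi$ by pruning subtrees (those below observation edges not in $\mathcal D$, and everything past depth $D$), so every internal node of $\pi''$ is an internal node of $\pi$ and hence $|\pi''|\le|\pi|$. (ii)~Under every sampled scenario $\seq$, the trajectory of $\pi''$ agrees with that of $\pi$ through the first $D$ steps, because following $\pi$'s actions under $\seq$ keeps one inside $\mathcal D$ up to depth $D$ by construction of the \despot; the two trajectories can differ only in the discounted tail beyond step $D$, so non-negativity and the bound $R_{\max}$ on rewards give $\hat V_{\pi''}(b_0)\ge\hat V_{\pi}(b_0)-\gamma^{D}R_{\max}/(1-\gamma)$.

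Next I would apply Theorem~\ref{th:uniform} with confidence parameter $\tau/2$: on an event $E_1$ of probability at least $1-\tau/2$, every policy in $\Pi_{\mathcal D}$, and in particular $\hat\pi$, satisfies \eqref{eq:4} with $\ln(8/\tau)$ in place of $\ln(4/\tau)$. Writing the objective in \eqref{eq:5} as $J(\pi')$, the optimality of $\hat\pi$ gives $J(\hat\pi)\ge J(\pi'')$; adding this to the Theorem~\ref{th:uniform} bound for $\hat\pi$ cancels the $\hat V_{\hat\pi}$ and $|\hat\pi|$ terms and leaves, using $|\pi''|\le|\pi|$,
\[
  V_{\hat\pi}(b_0)\ \ge\ \frac{1-\alpha}{1+\alpha}\hat V_{\pi''}(b_0)-\frac{R_{\max}}{(1+\alpha)(1-\gamma)}\cdot\frac{\ln(8/\tau)+|\pi|\ln(KD|A||Z|)}{\alpha K}.
\]
Finally, on a second event $E_2$ of probability at least $1-\tau/2$, Lemma~\ref{lem:hoeffding} applied to the fixed policy $\pi$ with confidence $\tau/2$ yields $\hat V_{\pi}(b_0)\ge V_\pi(b_0)-\frac{R_{\max}}{1-\gamma}\sqrt{2\ln(2/\tau)/K}$, which together with (ii) gives $\hat V_{\pi''}(b_0)\ge V_\pi(b_0)-\frac{R_{\max}}{1-\gamma}\bigl(\gamma^{D}+\sqrt{2\ln(2/\tau)/K}\bigr)$. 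Substituting this into the last display on $E_1\cap E_2$, which by a union bound has probability at least $1-\tau$, and pulling the factor $\tfrac{1-\alpha}{1+\alpha}$ outside the $\gamma^{D}$ and square-root terms reproduces the stated inequality.

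The distribution of the confidence budget ($\tau/2$ to each concentration event, producing the $\ln(8/\tau)$ and $\ln(2/\tau)$ factors) and the final algebraic regrouping are routine. I expect the genuinely delicate point to be step~(i)--(ii): constructing $\pi''$ and checking that the restriction to $\mathcal D$ neither increases the policy size nor costs more than the $\gamma^{D}$ horizon-truncation term in empirical value --- that is, turning an arbitrary comparison policy into a bona fide \despot policy cleanly. Everything else is assembled from Theorem~\ref{th:uniform}, Lemma~\ref{lem:hoeffding}, and the optimality of $\hat\pi$.
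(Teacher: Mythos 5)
Your proposal is correct and follows essentially the same route as the paper's proof: restrict the comparison policy $\pi$ to a \despot policy $\pi''$ (the paper's $\pi'$) with $|\pi''|\le|\pi|$ and $\hat V_{\pi''}\ge \hat V_{\pi}-\gamma^{D}R_{\max}/(1-\gamma)$ under the sampled scenarios, apply Theorem~\ref{th:uniform} with confidence $\tau/2$ together with the optimality of $\hat\pi$ for the regularized objective, and finish with Hoeffding's inequality (Lemma~\ref{lem:hoeffding}) for the fixed policy $\pi$ at confidence $\tau/2$ and a union bound. The only difference is that you spell out the construction of $\pi''$ explicitly, which the paper merely asserts.
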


\begin{proof}
By Theorem 1, with probability at least $1 - \tau/2$,
\begin{equation*}
V_{\hat{\pi}} \geq 
   \frac{1-\alpha}{1+\alpha}\hat{V}_{\hat{\pi}} 
	 - \frac{R_{\max}}{(1+\alpha)(1-\gamma)}
     \left[\frac{\ln(8 / \tau) + |\hat{\pi}|\ln(KD|A||Z|)}{\alpha K}\right].\nonumber\\
\end{equation*}
Suppose the above inequality holds on a random set of \nsam scenarios.
Note that there is a $\pi' \in \polclass_{b_0, \height,\nsam}$ which is a subtree of
$\pi$ and has the same trajectories on these scenarios up to depth $D$.
By the choice of $\hat{\pi}$, it follows that the following hold on the same
scenarios
\begin{equation}
V_{\hat{\pi}} \geq 
   \frac{1-\alpha}{1+\alpha}\hat{V}_{\pi'} 
	 - \frac{R_{\max}}{(1+\alpha)(1-\gamma)}
      \left[\frac{\ln(8 / \tau) + |\pi'|\ln(KD|A||Z|)}{\alpha K}\right].
			\nonumber\\
\end{equation}
In addition, $|\pi| \geq |\pi'|$, and $\hat{V}_{\pi'} \geq
\hat{V}_{\pi} - \gamma^D (R_{max})/(1-\gamma)$ since $\pi'$ and
$\pi$ only differ from depth $D$ onwards, under the chosen scenarios.
It follows that for these scenarios, we have
\begin{equation} \label{eqn:b1}
V_{\hat{\pi}} \geq 
	\frac{1-\alpha}{1+\alpha}\left(\hat{V}_{\pi}
		- \gamma^D\frac{R_{max}}{1-\gamma}\right) 
  - \frac{R_{\max}}{(1+\alpha)(1-\gamma)}\left[\frac{\ln(8 / \tau) + |\pi|\ln(KD|A||Z|)}{\alpha K}\right].
\end{equation}
Hence, the above inequality holds with probability at least $1 - \tau/2$.

By Lemma~\ref{lem:hoeffding}, with probability at least $1 - \tau/2$, we have
\begin{equation} \label{eqn:b2}
	\hat{V}_{\pi} \geq V_{\pi} - \frac{R_{\max}}{1-\gamma}\sqrt{\frac{2\ln (2/\tau)}{K}}.
\end{equation}

By the union bound, with probability at least $1 - \tau$, both
Inequality~(\ref{eqn:b1}) and Inequality~(\ref{eqn:b2}) hold, which imply
the inequality in the theorem holds.
This completes the proof.
\end{proof}

\subsection*{Proof of Theorem~\ref{th:bounded} and Theorem~\ref{th:unbounded}}
\begin{lemmastar} \hspace{-0.5ex}{\rm\bf \ref{lem:forward}}
For any \despot node $b$, if $\wdeu(b) > 0$ and $a^* =
  \arg\max_{a\in A} \uppernu(b, a)$, then
\[\wdeu(b) \le \sum_{z \in Z_{b, a^*}} \wdeu(b'),\]
where $b' = \tau(b,a^*,z)$ is a child of $b$. 
\end{lemmastar}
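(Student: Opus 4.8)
The plan is a two-line case split driven by the backup equations, together with the observation that the scenarios reaching a node are partitioned among that node's children along any single action branch. Throughout, write $\gap(x)=\uppernu(x)-\lowernu(x)$ for the RWDU gap, recall that $\wdeu(x)=\gap(x)-\tfrac{|\seqset_x|}{K}\,\xi\,\gap(b_0)$, and recall that the maintained bounds at $b$ satisfy $\uppernu(b)=\max\{\lowernu_0(b),\ \max_a\uppernu(b,a)\}$ with $\uppernu(b,a)=\rho(b,a)+\sum_{z\in Z_{b,a}}\uppernu(b')$, and likewise $\lowernu(b)=\max\{\lowernu_0(b),\ \max_a[\rho(b,a)+\sum_{z\in Z_{b,a}}\lowernu(b')]\}$.

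First I would show that the hypothesis $\wdeu(b)>0$ forces $\uppernu(b)$ to be realised by an action term rather than by $\lowernu_0(b)$. Since $\tfrac{|\seqset_b|}{K}\,\xi\,\gap(b_0)\ge 0$ (the invariant $\uppernu\ge\lowernu$ gives $\gap(b_0)\ge 0$, and $0<\xi<1$), $\wdeu(b)>0$ yields $\gap(b)>0$, hence $\uppernu(b)>\lowernu(b)\ge\lowernu_0(b)$. Therefore $\uppernu(b)=\max_a\uppernu(b,a)=\uppernu(b,a^*)=\rho(b,a^*)+\sum_{z\in Z_{b,a^*}}\uppernu(b')$, where $a^*=\argmax_a\uppernu(b,a)$.

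Second I would bound $\gap(b)$ by the sum of the child gaps under $a^*$. The lower-bound backup gives $\lowernu(b)\ge\rho(b,a^*)+\sum_{z\in Z_{b,a^*}}\lowernu(b')$, since the branch $a^*$ is one of the options in that maximisation. Subtracting from the expression for $\uppernu(b)$, the reward-plus-regularisation term $\rho(b,a^*)$ cancels identically, leaving $\gap(b)\le\sum_{z\in Z_{b,a^*}}\bigl(\uppernu(b')-\lowernu(b')\bigr)=\sum_{z\in Z_{b,a^*}}\gap(b')$. This cancellation is the only place the argument departs from the unregularised statement of \citeA{SmiSim04}: the $-\lambda$ appearing in $\rho$ occurs with the same coefficient on both sides.

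Finally I would pass to excess uncertainty via the scenario partition. Under action $a^*$, each scenario in $\seqset_b$ determines a unique observation and hence continues into exactly one child $\tau(b,a^*,z)$, so $\seqset_b=\bigcup_{z\in Z_{b,a^*}}\seqset_{b'}$ is a disjoint union and $|\seqset_b|=\sum_{z\in Z_{b,a^*}}|\seqset_{b'}|$. Combining this with the previous paragraph,
\[
\wdeu(b)=\gap(b)-\frac{|\seqset_b|}{K}\,\xi\,\gap(b_0)
\le\sum_{z\in Z_{b,a^*}}\!\left(\gap(b')-\frac{|\seqset_{b'}|}{K}\,\xi\,\gap(b_0)\right)
=\sum_{z\in Z_{b,a^*}}\wdeu(b'),
\]
which is the claim. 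I do not expect a real obstacle here; the two points needing care are the first step (this is exactly where the hypothesis $\wdeu(b)>0$ is used, to rule out the degenerate case $\uppernu(b)=\lowernu_0(b)$) and the bookkeeping of which scenarios land in which child, both of which are routine.
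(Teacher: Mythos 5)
Your proof is correct and follows essentially the same route as the paper's: use $\wdeu(b)>0$ to rule out $\uppernu(b)=\lowernu_0(b)$ so that $\uppernu(b)=\rho(b,a^*)+\sum_{z}\uppernu(b')$, lower-bound $\lowernu(b)$ by the same action branch so $\rho(b,a^*)$ cancels, and then use the scenario partition $|\seqset_b|=\sum_z|\seqset_{b'}|$ to pass from gaps to excess uncertainties. Your write-up is if anything slightly more explicit than the paper's (spelling out why $\gap(b)>0$ and why the degenerate case is excluded), but the argument is the same.
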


\begin{proof}
Let $\children(b, a^*)$ be all the set of all $b'$ of the form $b' = \tau(b,
a, z)$ for some $z \in Z_{b, a^*}$, that is, the set of children nodes of $b$
in the DESPOT tree.
If $\wdeu(b) > 0$, then $\uppernu(b) - \lowernu(b) > 0$, and thus $\uppernu(b) \neq
\lowernu_0(b)$.
Hence we have 
\[\uppernu(b) 
  = \uppernu(b, a^*)
  = \rho(b, a^*) + \sum_{b' \in \children(b, a^*)} \uppernu(b'), \text{ and }\]
\[\lowernu(b) \ge \lowernu(b, a^*) 
  \ge \rho(b, a^*) + \sum_{b' \in \children(b, a^*)} \lowernu(b'),\]
Subtracting the first equation by the second inequality, we have
\[ \uppernu(b) - \lowernu(b) \le \sum_{b' \in \children(b, a^*)} [\uppernu(b') - \lowernu(b')].\]
Note that 
\[ \frac{|\seqset_{b}|}{K} \cdot \xi \cdot \gap(b_0) = 
  \sum_{b' \in \children(b, a^*)} \frac{|\seqset_{b'}|}{K} \cdot \xi \cdot \gap(b_0).\]
We have
\[{ \sum_{b' \in \children(b, a^*)} [\uppernu(b') - \lowernu(b') - 
 \frac{|\seqset_{b'}|}{K} \cdot \xi \cdot \gap(b_0)] }
 \ge {\uppernu(b) - \lowernu(b) - \frac{|\seqset_{b}|}{K} \cdot \xi \gap(b_0)}.\]
That is, $\wdeu(b) \le \sum_{b' \in \children(b, a^*)} \wdeu(b')$.
\end{proof}

\begin{lemmastar} \hspace{-0.5ex}{\rm\bf \ref{lem:blocking}}
Let $b'$ be an ancestor of $b$ in a \despot $\dtree$ and $\len(b',b)$ be the number of nodes on the path from $b'$ to
$b$. If
\[
\frac{|\seqset_{b'}|}{K} \gamma^{\depth(b')} (\upperv(b') - \lowerv_0(b'))
\le \lambda \cdot  \len(b', b),
\]
then $b$ cannot be a belief node in an optimal regularized policy derived from
$\dtree$.
\end{lemmastar}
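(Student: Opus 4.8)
The plan is an exchange argument. Suppose, toward a contradiction, that $b$ is a belief node of some optimal regularized policy $\hat{\pol}$ derived from $\dtree$; then, $b'$ being an ancestor of $b$, the whole path from the root $b_0$ down to $b$ lies in $\hat{\pol}$ and every node on it, including $b$ itself, is an internal node. From $\hat{\pol}$ I would build a competing policy $\pol' \in \Pi_\dtree$ by cutting the policy tree at $b'$: $\pol'$ chooses the same action as $\hat{\pol}$ at every node of the path $b_0 = y_0, y_1, \ldots, y_k = b'$, retains unchanged every subtree hanging off this path, and applies the default policy $\pol_0$ from $b'$ onward (so $b'$ becomes a leaf of $\pol'$). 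I then show $\nu_{\pol'}(b_0) \ge \nu_{\hat{\pol}}(b_0)$, so $\pol'$ is also an optimal regularized policy derived from $\dtree$; since $b$ does not occur in $\pol'$, no generality is lost by assuming $b$ is not a belief node of an optimal regularized policy, which is the content of the lemma.

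First I would record the per-policy form of the backup \eqref{eqn:best-reg-pol}. For a policy choosing action $a$ at an internal node $x$, expanding $\hat{V}$ in \eqref{eqn:rwdu} into immediate reward plus discounted child values and using $|\pol_x| = 1 + \sum_z |\pol_{\tau(x,a,z)}|$ yields $\nu_\pol(x) = \rho(x,a) + \sum_z \nu_\pol(\tau(x,a,z))$. Applying this identity along $y_0, \ldots, y_k$, and using that $\hat{\pol}$ and $\pol'$ agree on the action at each $y_j$ and on all off-path subtrees, every $\rho$-term and off-path $\nu$-term cancels level by level, leaving the telescoped identity
\begin{dm}
\nu_{\hat{\pol}}(b_0) - \nu_{\pol'}(b_0) = \nu_{\hat{\pol}}(b') - \nu_{\pol_0}(b').
\end{dm}
So it suffices to prove $\nu_{\hat{\pol}}(b') \le \nu_{\pol_0}(b')$.

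For the right-hand side, $|\pol_0| = 0$ and $\lowerv_0(b') = \hat{V}_{\pol_0}(b')$ give $\nu_{\pol_0}(b') = \frac{|\seqset_{b'}|}{\nsam}\gamma^{\depth(b')}\lowerv_0(b')$. For the left-hand side, $\upperv(b')$ upper-bounds the empirical value of the subtree $\hat{\pol}_{b'}$ (the invariant maintained by the backup for $\upperv$), so $\nu_{\hat{\pol}}(b') = \frac{|\seqset_{b'}|}{\nsam}\gamma^{\depth(b')}\hat{V}_{\hat{\pol}_{b'}}(b') - \lambda|\hat{\pol}_{b'}| \le \frac{|\seqset_{b'}|}{\nsam}\gamma^{\depth(b')}\upperv(b') - \lambda|\hat{\pol}_{b'}|$. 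Finally, since the path from $b'$ to $b$ consists entirely of internal nodes of $\hat{\pol}_{b'}$, its size satisfies $|\hat{\pol}_{b'}| \ge \len(b',b)$. Combining the three facts,
\begin{dm}
\nu_{\hat{\pol}}(b') - \nu_{\pol_0}(b') \le \frac{|\seqset_{b'}|}{\nsam}\gamma^{\depth(b')}\bigl(\upperv(b') - \lowerv_0(b')\bigr) - \lambda\,\len(b',b) \le 0,
\end{dm}
the last step being precisely the hypothesis of the lemma.

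The step I expect to be most delicate is the telescoping identity: one must verify that $\pol'$ is a legitimate DESPOT policy (it runs $\pol_0$ below $b'$ and coincides with $\hat{\pol}$ above $b'$, so every reachable observation branch is covered), and that ``agreeing off the path'' really produces clean level-by-level cancellation, including the degenerate case $b' = b_0$. The remaining ingredients — that $\upperv$ bounds the empirical value of any subtree policy at $b'$, and that the $\len(b',b)$ nodes on the path contribute $\len(b',b)$ internal nodes to $\hat{\pol}_{b'}$ — are routine bookkeeping.
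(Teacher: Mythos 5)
Your proof is correct and rests on the same two facts as the paper's own proof: that $\upperv(b')$ upper-bounds the empirical value of the optimal policy's subtree at $b'$, and that this subtree has size at least $\len(b',b)$ because it must contain every node on the path from $b'$ to $b$. The difference is packaging. The paper argues by contradiction: if $b$ lies in an optimal regularized policy $\pi$, then the subtree $\pi_{b'}$ must beat the default policy at $b'$ in RWDU (this is asserted with the terse annotation ``since $b'$ is not pruned''), and rearranging then violates the hypothesis. You make exactly that local-optimality step explicit via the root-level exchange and the telescoping identity $\nu_{\hat{\pol}}(b_0)-\nu_{\pol'}(b_0)=\nu_{\hat{\pol}}(b')-\nu_{\pol_0}(b')$, which is a cleaner justification of what the paper leaves implicit (the identity $\nu_\pol(x)=\rho(x,\pol(x))+\sum_z\nu_\pol(\tau(x,\pol(x),z))$ you record is precisely the per-policy form of \eqref{eqn:best-reg-pol} that underlies the paper's claim). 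One nuance: your argument shows that \emph{some} optimal regularized policy avoids $b$ (truncating at $b'$ loses nothing), i.e.\ the WLOG form, whereas the lemma as stated says \emph{no} optimal policy can contain $b$; the paper reaches the stronger literal form only through a strict inequality whose justification is thin in the boundary case where the hypothesis holds with equality and a policy through $b$ ties the truncated one. Since the lemma is used solely to argue that \textsc{Prune} does not compromise optimality, your version suffices for that purpose, and your explicit treatment of this point is if anything more careful than the paper's.
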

\begin{proof}
If $b$ is included in the policy $\pi$ maximizing the regularized utility,
then for any node $b'$ on the path from $b$ to the root, the subtree
$\pi_{b'}$ under $b'$ satisfies
\begin{align*}
 &\frac{|\seqset_{b'}|}{K} \gamma^{\depth(b')} \upperv(b') - \lambda |\pi_{b'}|  \\
 \ge& \frac{|\seqset_{b'}|}{K} \gamma^{\depth(b')} \hat{V}_{\pi_{b'}}(b') -
 \lambda |\pi_{b'}| &\text{(since $U(b') > \hat{V}_{\pi_{b'}}(b')$)} \\
 >& \frac{|\seqset_{b'}|}{K} \gamma^{\depth(b')} \lowerv_0(b'), &\text{(since $b'$ is not pruned)}
\end{align*}
which implies
\begin{equation*}
\frac{|\seqset_{b'}|}{K} \gamma^{\depth(b')} \upperv(b') - \lambda |\pi_{b'}|  
 > \frac{|\seqset_{b'}|}{K} \gamma^{\depth(b')} \lowerv_0(b').
\end{equation*}
Rearranging the terms in the above inequality, we have 
\begin{equation*}
\frac{|\seqset_{b'}|}{K} \gamma^{\depth(b')} [\upperv(b') - \lowerv_0(b')] 
  >   \lambda |\pi_{b'}|. 
\end{equation*}
We have $\len(b', b) \le |\pi_{b'}|$ as $b$ is not pruned and thus $\pi_{b'}$
need to include all nodes from $b'$ to $b$.
Hence,
\begin{equation*}
\frac{|\seqset_{b'}|}{K} \gamma^{\depth(b')} [\upperv(b') - \lowerv_0(b')]
  >  \lambda \cdot \len(b', b).
\end{equation*}
\end{proof}

\begin{theoremstar}\hspace{-0.5ex}{\rm\bf \ref{th:bounded}}
Suppose that \tmax is bounded and that the anytime \despot algorithm terminates
with a partial \despot $\dtree'$ that has gap $\gap(b_0)$ between the upper
and lower bounds at the root $b_0$. 
The optimal regularized policy $\hat{\pol}$ derived from $\dtree'$
satisfies
\begin{dm}
  \nu_{\hat{\pol}}(b_0) \ge \nu^*(b_0) - \gap(b_0) - \delta,
\end{dm}
where $\nu^*(b_0)$ is the value of an optimal regularized policy derived from the full
\despot \dtree at $b_0$. 
\end{theoremstar}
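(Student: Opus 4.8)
The plan is to split the statement into two facts about the quantities that \algref{alg:builddespot} holds at termination, and then combine them arithmetically. \textbf{Fact 1:} the backed-up lower bound satisfies $\lowernu(b_0)=\nu_{\hat \pol}(b_0)$, the regularized value of the returned policy. \textbf{Fact 2:} the backed-up upper bound satisfies $\uppernu(b_0)\ge \nu^*(b_0)-\delta$. Granting both, and using $\gap(b_0)=\uppernu(b_0)-\lowernu(b_0)$ by definition, we get $\nu_{\hat \pol}(b_0)=\lowernu(b_0)=\uppernu(b_0)-\gap(b_0)\ge \nu^*(b_0)-\gap(b_0)-\delta$, which is the claim. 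Fact 1 is the observation already recorded just before the theorem: the leaf initializations together with the backup recursion $\lowernu(b)=\max\{\lowernu_0(b),\max_{a}\{\rho(b,a)+\sum_z\lowernu(\tau(b,a,z))\}\}$ implement, on the partial tree, exactly the dynamic program of \secref{sec:dp} with the default policy $\pol_0$ at every leaf and at every node closed by \textsc{MakeDefault}, so a one-line bottom-up induction gives $\lowernu(b_0)=\max_{\pol'\in\Pi_{\dtree'}}\nu_{\pol'}(b_0)=\nu_{\hat \pol}(b_0)$.

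Fact 2 is the heart of the proof, and I would establish it by a bottom-up induction over $\dtree'$ carrying the strengthened invariant
\[
\uppernu(b)\ \ge\ \nu^{\sharp}(b)\ -\ \frac{|\seqset_b|}{\nsam}\,\gamma^{\depth(b)}\,\delta ,
\]
where $\nu^{\sharp}(b)$ denotes the largest RWDU of a policy subtree rooted at $b$ inside the \emph{full} \despot \dtree, subject to applying the default policy at every \textsc{MakeDefault}-closed node. At a frontier leaf $b$ (created but never expanded) we have $\uppernu(b)=\max\{\lowernu_0(b),\ \tfrac{|\seqset_b|}{\nsam}\gamma^{\depth(b)}\upperv_0(b)-\lambda\}$: if the best subtree at $b$ in \dtree is the default policy then $\nu^{\sharp}(b)=\lowernu_0(b)\le\uppernu(b)$; otherwise that subtree has size at least $1$, and $\delta$-approximateness, $\upperv_0(b)\ge\hat V^*(b)-\delta$, together with $\tfrac{|\seqset_b|}{\nsam}\gamma^{\depth(b)}\le 1$, yields the invariant (after pulling out $-\tfrac{|\seqset_b|}{\nsam}\gamma^{\depth(b)}\delta$ from both arguments of the max). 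A depth-$\height$ leaf and a \textsc{MakeDefault}-closed node both have $\uppernu(b)=\lowernu_0(b)=\nu^{\sharp}(b)$, so the invariant holds there with zero error. For an internal node $b$ of $\dtree'$, all children $b'=\tau(b,a,z)$ ($a\in A$, $z\in Z_{b,a}$) lie in $\dtree'$ and $Z_{b,a}$ is the same set in $\dtree$ and $\dtree'$, so the backup equation for $\uppernu$ has exactly the shape of the recursion defining $\nu^{\sharp}$; since the additive error term does not depend on the action and $\sum_{z\in Z_{b,a}}\tfrac{|\seqset_{b'}|}{\nsam}\gamma^{\depth(b')}=\tfrac{|\seqset_b|}{\nsam}\gamma^{\depth(b)+1}$, the inductive hypothesis at the children propagates the invariant (with a spare factor $\gamma$). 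At the root $|\seqset_{b_0}|=\nsam$ and $\depth(b_0)=0$, so $\uppernu(b_0)\ge\nu^{\sharp}(b_0)-\delta$.

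It then remains to identify $\nu^{\sharp}(b_0)$ with $\nu^*(b_0)$. By Lemma~\ref{lem:blocking}, no \textsc{MakeDefault}-closed node can be a belief node of an optimal regularized policy derived from the full \despot \dtree; hence the unconstrained maximizer in $\Pi_\dtree$ already satisfies the ``default at closed nodes'' restriction, so it is feasible for $\nu^{\sharp}$ and $\nu^{\sharp}(b_0)=\nu^*(b_0)$. Combining with the previous paragraph, $\uppernu(b_0)\ge\nu^*(b_0)-\delta$, and with Fact 1 this completes the proof.

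I expect the main obstacle to be the interaction between pruning and the upper bound: closing a node with \textsc{MakeDefault} drops its upper bound down to $\lowernu_0$, and one must be certain this never forces $\uppernu(b_0)$ below $\nu^*(b_0)-\delta$ — which is precisely what Lemma~\ref{lem:blocking} buys — while simultaneously carrying the $\tfrac{|\seqset_b|}{\nsam}\gamma^{\depth(b)}$ weighting through the induction so that the $\delta$-error introduced at the frontier by the $\delta$-approximate $\upperv_0$ does not accumulate across the (potentially very wide) tree. A secondary delicate point is confirming that the pruning guarantee of Lemma~\ref{lem:blocking} remains applicable when $\upperv_0$ is only $\delta$-approximate rather than a genuine upper bound on the empirical value; this is where the argument needs the most care, and it is the reason the error term in the theorem is $\delta$ rather than smaller.
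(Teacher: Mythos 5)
Your proposal is correct and is essentially the paper's own argument: the paper carries the same weighted invariant, phrased as $\uppernu(b)+\gamma^{\depth(b)}\frac{|\seqset_b|}{K}\delta \ge \uppernu'(b)$ where $\uppernu'$ is the bound that would result from the shifted exact bound $\upperv_0+\delta$, and it likewise concludes via $\lowernu(b_0)=\nu_{\hat{\pol}}(b_0)$ and $\uppernu(b_0)+\delta\ge\nu^*(b_0)$. Your explicit detour through the restricted optimum $\nu^{\sharp}$ and Lemma~\ref{lem:blocking} only spells out what the paper asserts tersely when it says the backup equations keep $\uppernu'$ a valid upper bound on $\nu^*$.
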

\begin{proof}
Let $U'_0(b) = \upperv_0(b) + \delta$, then $U'_0$ is an exact upper bound. 
Let $\uppernu'_0$ be the corresponding initial upper bound, and $\uppernu'$ be the
corresponding upper bound on $\nu^*(b)$.
Then $\uppernu'_0$ is a valid initial upper bound for $\nu^*(b)$ and the backup
equations ensure that $\uppernu'(b)$ is a valid upper bound for $\nu^*(b)$.
On the other hand, it can be easily shown by induction that 
$\uppernu(b) + \gamma^{\depth(b)} \frac{|\seqset_{b}|}{K} \delta \ge \uppernu'(b)$.
As a special case for $b = b_0$, we have $\uppernu(b) + \delta \ge \uppernu'(b_0)$.
Hence, when the algorithm terminates, we also have 
$\uppernu(b_0) + \delta \ge \uppernu'(b_0) \ge \nu^*(b_0)$.
Equivalently, 
$\nu_{\hat{\pi}} 
= \lowernu(b_0) \ge \nu^*(b_0) - (\uppernu(b_0) - \lowernu(b_0)) - \delta 
= \nu^*(b_0) - \gap(b_0) - \delta$.
The first equality holds because the initialization and the computation of the
lower bound $\lowernu$ via the backup equations are exactly that for finding
an optimal regularized policy value in the partial \despot.
\end{proof}

\begin{theoremstar}\hspace{-0.5ex}{\rm\bf \ref{th:unbounded}}
Suppose that \tmax is unbounded and $\gap_0$ is the target gap between the
upper and lower bound at the root of the partial \despot constructed by the
anytime \despot algorithm. Let $\nu^*(b_0)$ be
the value of an optimal policy derived from the full
\despot \dtree at $b_0$.
\begin{itemize}
\item[(1)] If $\gap_0 > 0$, then the algorithm terminates in finite
  time with a near-optimal policy $\hat \pol$  satisfying
\begin{dm}
\nu_{\hat{\pol}}(b_0) \ge \nu^*(b_0) - \gap_0 - \delta.
\end{dm}
\item[(2)] If $\gap_0 = 0$, $\delta=0$, and
  the regularization constant $\lambda>0$, then the 
  algorithm terminates in finite time with an optimal policy $\hat \pol$, \ie, 
   $\nu_{\hat{\pol}}(b_0) = \nu^*(b_0)$.
\end{itemize}
\end{theoremstar}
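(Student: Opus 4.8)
Both parts reduce, after establishing \emph{finite termination}, to reading off the quality guarantee from (the proof of) Theorem~\ref{th:bounded}. That proof nowhere uses boundedness of \tmax: it only uses that the \textsc{while} loop of \textsc{BuildDespot} has exited and that the reported policy $\hat{\pol}$ is the one attaining the backed-up lower bound, so $\nu_{\hat{\pol}}(b_0) = \lowernu(b_0) \ge \nu^*(b_0) - \gap(b_0) - \delta$. In part~(1), once termination is established, the exit condition gives $\gap(b_0) \le \gap_0$, and the bound $\nu_{\hat{\pol}}(b_0) \ge \nu^*(b_0) - \gap_0 - \delta$ follows. In part~(2), we additionally use $\delta = 0$, so $\uppernu(b_0)$ is a genuine upper bound on $\nu^*(b_0)$: at termination $\gap(b_0) = 0$ forces $\lowernu(b_0) = \uppernu(b_0) \ge \nu^*(b_0)$; since $\hat{\pol}$ is also a \despot policy of the full tree \dtree, $\nu_{\hat{\pol}}(b_0) \le \nu^*(b_0)$, and combining gives $\nu_{\hat{\pol}}(b_0) = \nu^*(b_0)$.

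\textbf{Bounding the size of $\dtree'$.} The partial \despot $\dtree'$ is always a subtree of a \emph{finite} object, and this is where the two parts differ. In part~(1) the height bound $D$ is finite, so the full \despot \dtree has at most $\O(|A|^D \nsam)$ nodes. In part~(2), where $D$ may be unbounded, I would instead invoke $\lambda > 0$ together with Lemma~\ref{lem:blocking}: since $\tfrac{|\seqset_{b_0}|}{\nsam}\gamma^{0}(\upperv(b_0) - \lowerv_0(b_0)) \le \rmax/(1-\gamma)$, any node at depth exceeding $\lceil \rmax/(\lambda(1-\gamma))\rceil + 1$ satisfies the blocking inequality with ancestor $b_0$ and hence is frozen by \textsc{MakeDefault}; so $\dtree'$ has bounded depth and again bounded size. (Here $\nu^*$ refers to the optimal regularized policy of the full \dtree, which — as argued in Section~\ref{sec:dp} — has size at most $\lceil \rmax/(\lambda(1-\gamma))\rceil + 1$, so it lives in this bounded-depth portion; this is also what makes $\nu^*$ well defined when $D$ is infinite.)

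\textbf{From a settled tree to finite convergence.} After finitely many iterations no new node can be added, because each \textsc{Explore} call follows a greedy path — choosing $a^\ast = \argmax_{a} \uppernu(b,a)$ and then the child maximizing the excess uncertainty $\wdeu$ — and, whenever that path reaches a leaf at depth $\le D$ that is not blocked, it expands it. By Lemma~\ref{lem:forward}, at every internal node the chosen child has $\wdeu > 0$, so the traversal can only stop at a depth cutoff or at a blocked node; hence once the node set is maximal, every greedy-reachable leaf is frozen and has $\gap = 0$. (A leaf with $\gap(\cdot) > 0$ cannot be starved forever: its $\wdeu$ is $\gap(\cdot) - \tfrac{|\seqset|}{\nsam}\xi\,\gap(b_0)$ with $\gap(\cdot)$ fixed, while repeated \textsc{Backup} shrinks the competing siblings' gaps, so it is eventually selected and expanded — contradiction.) With the tree settled, the \textsc{Backup} operations are value iteration on a fixed finite-depth DP; iterating the inequality $\gap(b) \le \sum_{b' \in \children(b,a^\ast)} \gap(b')$ down to the (zero-gap) greedy leaves shows $\gap(b_0)$ decreases monotonically to $0$. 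Thus it drops below any $\gap_0 > 0$ in finite time (part~(1)), and in the admissible case it reaches $0$ exactly after a bounded number of backup sweeps over the finite-depth tree (part~(2)).

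\textbf{Main obstacle.} The delicate step is the last one — obtaining convergence in \emph{finite} time rather than only in the limit — because the excess uncertainty of a starved leaf depends on $\gap(b_0)$, the very quantity we are driving to zero, so the apparent circularity must be broken by first fixing the (finite, settled) tree and only then analyzing which leaves the greedy rule can reach and how many backup sweeps a finite-depth DP needs. A secondary technical point is confirming that freezing is monotone under the interplay of \textsc{Backup} and \textsc{Prune}: one checks by induction on subtree depth that $\upperv$ is non-increasing under its backup equation $\upperv(b) = \max_a\{\tfrac{1}{|\seqset_b|}\sum_{\seq} R(s_\seq,a) + \gamma\sum_z \tfrac{|\seqset_{b'}|}{|\seqset_b|}\upperv(b')\}$ (using $\delta = 0$ in part~(2) so that $\upperv_0$ is already admissible), so a blocking inequality, once satisfied, stays satisfied and \textsc{Prune} re-freezes any node that \textsc{Backup} momentarily unfreezes.
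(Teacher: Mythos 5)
Your high-level reduction matches the paper: part (1) reads the guarantee off Theorem~\ref{th:bounded} once the loop exits with $\gap(b_0)\le\gap_0$; part (2) uses $\delta=0$ so that $\lowernu(b_0)=\uppernu(b_0)\ge\nu^*(b_0)$ at termination, plus $\nu_{\hat\pol}(b_0)\le\nu^*(b_0)$; and $\lambda>0$ with Lemma~\ref{lem:blocking} confines expansions to depth $\lceil \rmax/(\lambda(1-\gamma))\rceil+1$. The genuine gap is in your finite-termination argument. You first let the tree ``settle'' and then argue a non-frozen greedy-reachable leaf ``cannot be starved forever'' because ``repeated \textsc{Backup} shrinks the competing siblings' gaps.'' That step is unsupported and in fact wrong: \textsc{Backup} is applied only along the explored path and never changes the bounds of off-path siblings, and on a tree in which no expansion and no \textsc{MakeDefault} occurs the backups reach a fixed point, after which further explorations change nothing. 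Likewise the final step --- iterating $\gap(b)\le\sum_{z}\gap(b')$ down to zero-gap leaves over ``a bounded number of backup sweeps'' --- appeals to full sweeps the algorithm does not perform, and the frontier of that recursion also contains nodes with $\wdeu\le 0$ whose gaps are only bounded by $\frac{|\seqset_b|}{K}\,\xi\,\gap(b_0)$, so closing the resulting inequality $\gap(b_0)\le\xi\,\gap(b_0)$ would require all backup equalities to hold simultaneously throughout the tree, which you never establish. This is precisely the circularity you flag as the ``main obstacle,'' and the proposal does not resolve it.

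The paper avoids the settled-tree framing entirely with a per-call progress (counting) argument: while $\gap(b_0)>\gap_0$ we have $\wdeu(b_0)=(1-\xi)\gap(b_0)>0$, and Lemma~\ref{lem:forward} shows the greedy descent preserves $\wdeu>0$; hence each call to \textsc{Explore} either stops at the depth limit or by \textsc{Prune} --- in which case \textsc{MakeDefault} closes the gap of some node --- or reaches a leaf and expands it. For part (1) the paper further shows that no node deeper than a $d_0$ with $\gamma^{d_0}\rmax/(1-\gamma)<\xi\gap_0$ is ever expanded (its gap is already below $\frac{|\seqset_b|}{K}\,\xi\,\gap_0$, so its $\wdeu<0$ while $\gap(b_0)>\gap_0$); for part (2) the pruning-depth bound you correctly identified plays this role. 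Since each of the finitely many candidate nodes can be expanded or closed at most once, the loop exits after finitely many iterations, and the quality bounds then follow exactly as you state. Your use of the full height-$D$ tree in part (1) is fine when $D$ is a finite input (the paper's $d_0$ argument additionally makes the claim independent of $D$), but the essential repair is to replace the starvation/sweep reasoning with the per-exploration progress guarantee from Lemma~\ref{lem:forward}; your secondary ``monotone freezing'' discussion then becomes unnecessary.
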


\begin{proof}
\medskip\noindent(1) 
It suffices to show that eventually $\uppernu(b_0) - \lowernu(b_0) \le
\epsilon_0$, which then implies $\nu_{\hat{\pol}}(b_0) \ge \nu^*(b_0) -
\epsilon_0 - \delta$ by Theorem~\ref{th:bounded}.

We first show that only nodes within certain depth $d_0$ will ever be expanded.
In particular, we can choose any $d_0 > 0$ such that $\xi \epsilon_0  >
\gamma^{d_0} R_{max} / (1 - \gamma)$.
For any node $b$ beyond depth $d_0$, we have
$\gap(b) 
\le \gamma^{\depth(b)} \frac{|\seqset_b|}{K} \frac{R_{max}}{1 - \gamma} 
\le \gamma^{d_0} \frac{|\seqset_b|}{K} \frac{R_{max}}{1 - \gamma} 
< \frac{|\seqset_b|}{K} \cdot \xi \cdot \epsilon_0$.
Since $\gap(b_0) > \epsilon_0$ before the algorithm terminates, we have 
$\wdeu(b) = \gap(b) - \frac{|\seqset_b|}{K} \cdot \xi \cdot \gap(b_0) 
< \gap(b) - \frac{|\seqset_b|}{K} \cdot \xi \cdot \epsilon_0 < 0$, and $b$
will not be expanded during the search.

On the other hand, the forward search heuristic guarantees that each
exploration
closes the gap of at least one node or expands at least one node.
To see this, note that an exploration terminates when its depth exceeds the limit
$D$, or when a node is pruned, or when it encounters a node $b$ with $\wdeu(b)
< 0$.
In the first two cases, the gap for the last node is closed.
We show that for the last case, the exploration must have expanded at least one
node.
By Lemma~\ref{lem:forward}, if $\wdeu(b) > 0$, then the next chosen node $b'$
satisfies $\wdeu(b') > 0$, otherwise the upper bound for $\wdeu(b)$ will be at
most 0, a contradiction.
Since the root $b_0$ has positive $\wdeu$, thus the exploration will follow a path
with nodes with positive $\wdeu$ to reach a leaf node, and then expand it.
Termination may then happen as the next chosen node has negative \wdeu.

Since each node can be expanded or closed at most once, but only nodes within
depth $d_0$ can be considered, thus after finitely many steps 
$\gap(b_0) = \uppernu(b_0) - \lowernu(b_0) \le \epsilon_0$, and the search
terminates.

\medskip\noindent(2)
Following the proof of (1), when $\epsilon = 0$, before $\uppernu(b_0) =
\lowernu(b_0)$, each exploration either closes the gap of at least one node, or expands
at least one leaf node.
However, both can only be done finitely many times because the Prune
procedure ensures only nodes within a depth of 
$\lceil \frac{R_{max}}{ \lambda (1-\gamma)} \rceil + 1$ can be expanded, and
the gap of each such node can only be closed once only.
Thus the search will terminate in finite time with $\uppernu(b_0) = \lowernu(b_0)$.
Since the upper bound is a true one, we have 
$\nu_{\hat{\pi}}(b_0) = \lowernu(b_0) = \nu^*(b_0)$, and thus $\hat{\pi}$ is an
optimal regularized policy derived from the infinite horizon \despot.
\end{proof}

\newpage
\section{Pseudocode for Anytime \despot}\label{sec:pseudocode}

\begin{algorithm}
\caption{Anytime \despot}\label{alg:despot-all}
\setlength{\columnsep}{25pt}
\setlength{\marginparwidth}{0pc}
\begin{multicols}{2}
\textbf{Input}
\begin{itemize}
  \itemsep=-3pt
\item[$\beta\colon$] Initial belief.
\item [$\gap_0\colon$] The target gap between $\uppernu(b_0)$ and $\lowernu(b_0)$.
\item [$\xi\colon$] The rate of target gap reduction. 
\item [$K\colon$] The number of sampled scenarios.
\item [$D\colon$] The maximum depth of the DESPOT.
\item [$\lambda\colon$] Regularization constant. 
\item [$\tmax\colon$] The maximum online planning time per step.
\end{itemize}

\begin{algorithmic}[1]
\STATE  Initialize $\bvar{b} \gets \beta$.
\LOOP
\STATE $\lowernu \gets \textsc{BuildDespot($\bvar{b}$)}$.
\STATE $a^* \gets \max_{a\in A} \lowernu(\bvar{b}, a)$.
\IF{$\lowerv_0(\bvar{b}) > \lowernu(\bvar{b}, a^*)$}
 	\STATE $a^* \gets \pi_0(\bvar{b})$
\ENDIF
\STATE Execute $a^*$.
\STATE Receive observation $z$.
\STATE $\bvar{b} \gets \tau(\bvar{b}, a^*, z)$.
\ENDLOOP
\end{algorithmic}
\bigskip
$\textsc{BuildDespot}({b}_0) $
\begin{algorithmic}[1]
\STATE Sample randomly a set $\seqset_{{b}_0}$ of $\nsam$  scenarios 
from the current belief ${b}_0$. 
\STATE Create a new \despot \dtree with a single node $\bvar{b}$ as the root.
\STATE Initialize $\upperv({b}_0), 
\lowerv_0({b}_0), \uppernu({b}_0), \text{ and } \lowernu({b}_0) $.
\STATE $\gap({b}_0) \gets \uppernu({b}_0) - \lowernu({b}_0)$. 
\WHILE {$\gap({b}_0) >  \gap_0$
          and the total running time is less than \tmax}
	\STATE $b \gets \textsc{Explore}(\dtree, \bvar{b}_{0})$.
\STATE $\textsc{Backup}(\dtree, b)$.
\STATE $\gap({b}_0) \gets \uppernu({b}_0) - \lowernu({b}_0)$. 
\ENDWHILE
\RETURN $\lowernu$

\end{algorithmic}
\columnbreak
$\textsc{Explore}(\dtree, b)$
\begin{algorithmic}[1]
\WHILE {$\depth(b) \leq D$,  $\wdeu(b) > 0$, and \textsc{Prune}($\dtree$, $b$) =
  {\small FALSE}}
\IF {$b$ is a leaf node in $\dtree$}
	\STATE Expand $b$ one level deeper.  Insert each new child $b'$ of $b$
        into $\dtree$, and
        initialize $\upperv(b')$,
        $\lowerv_0(b')$,  $\uppernu(b')$,   and  $\lowernu(b')$.
\ENDIF
\STATE $a^* \gets \argmax_{a \in A} {\uppernu(b, a)}$.
\STATE $z^* \gets \argmax_{z \in Z_{b,a^*}} \wdeu(\tau(b,a^*,z))$.
\STATE $b \gets \tau(b, a^*, z^*)$.
\ENDWHILE
\IF {$\depth(b) > D$}
	\STATE \textsc{MakeDefault}($b$).
\ENDIF
\RETURN $b$.
\end{algorithmic}
\medskip
\textsc{Prune}($\dtree, b$)
\begin{algorithmic}[1]
\STATE $\textsc{Blocked} \gets \text{\small FALSE}$.
\FOR{each node $x$ on the path from $b$ to the root of \dtree}
	\IF{$x$ is blocked by any ancestor node in \dtree}
		\STATE \textsc{MakeDefault}($x$);
		\STATE \textsc{Backup}(\dtree, $x$).
                \STATE $\textsc{Blocked} \gets \text{\small TRUE}$.
	\ELSE
		\STATE \textbf{break}
	\ENDIF
\ENDFOR
\RETURN \textsc{Blocked}
\end{algorithmic}
\medskip
\textsc{MakeDefault}($b$)
\begin{algorithmic}[1]
	\STATE $\upperv(b) \gets \lowerv_0(b)$.
	\STATE $\uppernu(b) \gets \lowernu_0(b)$.
	\STATE $\lowernu(b) \gets \lowernu_0(b)$.
\end{algorithmic}
\medskip
\textsc{Backup}($\dtree, b$)
\begin{algorithmic}[1]
\FOR{each node $x$ on the path from $b$ to the root of \dtree}
\STATE  Perform backup on $\uppernu(x) $, $\lowernu(x) $, and 
  $\upperv(x)$. 
\ENDFOR
\end{algorithmic}
\end{multicols}
\end{algorithm}

\bibliography{ref}

\begin{thebibliography}{}

\bibitem[\protect\BCAY{Bai, Hsu, Lee,\ \BBA\ Ngo}{Bai et~al.}{2011}]{BaiHsu11}
Bai, H., Hsu, D., Lee, W.~S., \BBA\ Ngo, V. \BBOP2011\BBCP.
\newblock \BBOQ Monte carlo value iteration for continuous-state {POMDPs}\BBCQ\
\newblock {\Bem Algorithmic Foundations of Robotics IX}, 175--191.

\bibitem[\protect\BCAY{Bai, Cai, Ye, Hsu,\ \BBA\ Lee}{Bai
  et~al.}{2015}]{BaiCai15}
Bai, H., Cai, S., Ye, N., Hsu, D., \BBA\ Lee, W. \BBOP2015\BBCP.
\newblock \BBOQ {Intention-Aware Online POMDP Planning for Autonomous Driving
  in a Crowd}\BBCQ\
\newblock In {\Bem Proc. IEEE Int. Conf. on Robotics \& Automation}.

\bibitem[\protect\BCAY{Bertsekas\ \BBA\ Castanon}{Bertsekas\ \BBA\
  Castanon}{1999}]{BerCas99}
Bertsekas, D.~P.\BBACOMMA\  \BBA\ Castanon, D.~A. \BBOP1999\BBCP.
\newblock \BBOQ {Rollout algorithms for stochastic scheduling problems}\BBCQ\
\newblock {\Bem J. Heuristics}, {\Bem 5\/}(1), 89--108.

\bibitem[\protect\BCAY{Cayley}{Cayley}{1889}]{cay89}
Cayley, A. \BBOP1889\BBCP.
\newblock \BBOQ {A} {T}heorem on {T}rees\BBCQ\
\newblock {\Bem Quart. J. Math}, {\Bem 23}, 376--378.

\bibitem[\protect\BCAY{Chad{\`e}s, Carwardine, Martin, Nicol, Sabbadin,\ \BBA\
  Buffet}{Chad{\`e}s et~al.}{2012}]{ChaCar12}
Chad{\`e}s, I., Carwardine, J., Martin, T., Nicol, S., Sabbadin, R., \BBA\
  Buffet, O. \BBOP2012\BBCP.
\newblock \BBOQ {MOMDPs: A Solution for Modelling Adaptive Management
  Problems}\BBCQ\
\newblock In {\Bem Proc. AAAI Conf. on Artificial Intelligence}.

\bibitem[\protect\BCAY{Chong, Givan,\ \BBA\ Chang}{Chong
  et~al.}{2000}]{chonggi00}
Chong, E., Givan, R., \BBA\ Chang, H. \BBOP2000\BBCP.
\newblock \BBOQ A framework for simulation-based network control via hindsight
  optimization\BBCQ\
\newblock In {\Bem Proc. IEEE Conf. on Decision \& Control},
  \lowercase{\BVOL}~2, \BPGS\ 1433--1438.

\bibitem[\protect\BCAY{Coquelin\ \BBA\ Munos}{Coquelin\ \BBA\
  Munos}{2007}]{CoqMun07}
Coquelin, P.-A.\BBACOMMA\  \BBA\ Munos, R. \BBOP2007\BBCP.
\newblock \BBOQ Bandit algorithms for tree search\BBCQ\
\newblock In {\Bem Proc. Conf. on Uncertainty in Artificial Intelligence}.

\bibitem[\protect\BCAY{Gelly\ \BBA\ Silver}{Gelly\ \BBA\
  Silver}{2007}]{GelSil07}
Gelly, S.\BBACOMMA\  \BBA\ Silver, D. \BBOP2007\BBCP.
\newblock \BBOQ Combining online and offline knowledge in {UCT}\BBCQ\
\newblock In {\Bem Proc. Int. Conf. on Machine Learning}.

\bibitem[\protect\BCAY{Gordon, Salmond,\ \BBA\ Smith}{Gordon
  et~al.}{1993}]{GorSal93}
Gordon, N., Salmond, D., \BBA\ Smith, A. \BBOP1993\BBCP.
\newblock \BBOQ {Novel approach to nonlinear/non-Gaussian Bayesian state
  estimation}\BBCQ\
\newblock {\Bem IEE Proc. F on Radar \& Signal Processing}, {\Bem 140\/}(2),
  107--113.

\bibitem[\protect\BCAY{Hauskrecht\ \BBA\ Fraser}{Hauskrecht\ \BBA\
  Fraser}{2000}]{hauskrecht2000planning}
Hauskrecht, M.\BBACOMMA\  \BBA\ Fraser, H. \BBOP2000\BBCP.
\newblock \BBOQ {Planning treatment of ischemic heart disease with partially
  observable Markov decision processes}\BBCQ\
\newblock {\Bem Artificial Intelligence in Medicine}, {\Bem 18\/}(3), 221--244.

\bibitem[\protect\BCAY{Haussler}{Haussler}{1992}]{Hau92}
Haussler, D. \BBOP1992\BBCP.
\newblock \BBOQ Decision theoretic generalizations of the {PAC} model for
  neural net and other learning applications\BBCQ\
\newblock {\Bem Information and Computation}, {\Bem 100\/}(1), 78--150.

\bibitem[\protect\BCAY{He, Brunskill,\ \BBA\ Roy}{He et~al.}{2011}]{HeBru11}
He, R., Brunskill, E., \BBA\ Roy, N. \BBOP2011\BBCP.
\newblock \BBOQ Efficient planning under uncertainty with macro-actions\BBCQ\
\newblock {\Bem J. Artificial Intelligence Research}, {\Bem 40\/}(1), 523--570.

\bibitem[\protect\BCAY{Hoeffding}{Hoeffding}{1963}]{hoe63}
Hoeffding, W. \BBOP1963\BBCP.
\newblock \BBOQ Probability inequalities for sums of bounded random
  variables\BBCQ\
\newblock {\Bem J. American statistical association}, {\Bem 58\/}(301), 13--30.

\bibitem[\protect\BCAY{Kaelbling, Littman,\ \BBA\ Cassandra}{Kaelbling
  et~al.}{1998}]{KaeLit98}
Kaelbling, L.~P., Littman, M.~L., \BBA\ Cassandra, A.~R. \BBOP1998\BBCP.
\newblock \BBOQ {Planning and acting in partially observable stochastic
  domains}\BBCQ\
\newblock {\Bem Artificial intelligence}, {\Bem 101\/}(1), 99--134.

\bibitem[\protect\BCAY{Kearns, Mansour,\ \BBA\ Ng}{Kearns
  et~al.}{1999}]{KeaMan99}
Kearns, M., Mansour, Y., \BBA\ Ng, A. \BBOP1999\BBCP.
\newblock \BBOQ Approximate planning in large {POMDPs} via reusable
  trajectories\BBCQ\
\newblock In {\Bem Advances in Neural Information Processing Systems (NIPS)},
  \lowercase{\BVOL}~12, \BPGS\ 1001--1007.

\bibitem[\protect\BCAY{Kearns, Mansour,\ \BBA\ Ng}{Kearns
  et~al.}{2002}]{KeaMan02}
Kearns, M., Mansour, Y., \BBA\ Ng, A.~Y. \BBOP2002\BBCP.
\newblock \BBOQ {A sparse sampling algorithm for near-optimal planning in large
  Markov decision processes}\BBCQ\
\newblock {\Bem Machine Learning}, {\Bem 49\/}(2-3), 193--208.

\bibitem[\protect\BCAY{Kocsis\ \BBA\ Szepesvari}{Kocsis\ \BBA\
  Szepesvari}{2006}]{KocSze06}
Kocsis, L.\BBACOMMA\  \BBA\ Szepesvari, C. \BBOP2006\BBCP.
\newblock \BBOQ Bandit based {Monte-Carlo} planning\BBCQ\
\newblock In {\Bem Proc. Eur. Conf. on Machine Learning}, \BPGS\ 282--293.

\bibitem[\protect\BCAY{Kurniawati, Hsu,\ \BBA\ Lee}{Kurniawati
  et~al.}{2008}]{KurHsu08}
Kurniawati, H., Hsu, D., \BBA\ Lee, W. \BBOP2008\BBCP.
\newblock \BBOQ {SARSOP}: Efficient point-based {POMDP} planning by
  approximating optimally reachable belief spaces\BBCQ\
\newblock In {\Bem Proc. Robotics: Science and Systems}.

\bibitem[\protect\BCAY{Lusena, Goldsmith,\ \BBA\ Mundhenk}{Lusena
  et~al.}{2001}]{LusGol01}
Lusena, C., Goldsmith, J., \BBA\ Mundhenk, M. \BBOP2001\BBCP.
\newblock \BBOQ {Nonapproximability results for partially observable Markov
  decision processes}\BBCQ\
\newblock {\Bem J. Artif. Intell. Res. (JAIR)}, {\Bem 14}, 83--103.

\bibitem[\protect\BCAY{Madani, Hanks,\ \BBA\ Condon}{Madani
  et~al.}{1999}]{MadHan99}
Madani, O., Hanks, S., \BBA\ Condon, A. \BBOP1999\BBCP.
\newblock \BBOQ On the undecidability of probabilistic planning and
  infinite-horizon partially observable {Markov} decision problems\BBCQ\
\newblock In {\Bem Proc. AAAI Conf. on Artificial Intelligence}, \BPGS\
  541--548.

\bibitem[\protect\BCAY{Ng\ \BBA\ Jordan}{Ng\ \BBA\ Jordan}{2000}]{NgJor00}
Ng, A.\BBACOMMA\  \BBA\ Jordan, M. \BBOP2000\BBCP.
\newblock \BBOQ {PEGASUS}: A policy search method for large {MDPs} and
  {POMDPs}\BBCQ\
\newblock In {\Bem Proc. Conf. on Uncertainty in Artificial Intelligence},
  \BPGS\ 406--415.

\bibitem[\protect\BCAY{Papadimitriou\ \BBA\ Tsitsiklis}{Papadimitriou\ \BBA\
  Tsitsiklis}{1987}]{PapTsi87}
Papadimitriou, C.~H.\BBACOMMA\  \BBA\ Tsitsiklis, J.~N. \BBOP1987\BBCP.
\newblock \BBOQ The complexity of {Markov} decision processes\BBCQ\
\newblock {\Bem Mathematics of Operations Research}, {\Bem 12\/}(3), 441--450.

\bibitem[\protect\BCAY{Paquet, Tobin,\ \BBA\ Chaib-Draa}{Paquet
  et~al.}{2005}]{PaqTob05}
Paquet, S., Tobin, L., \BBA\ Chaib-Draa, B. \BBOP2005\BBCP.
\newblock \BBOQ {An online POMDP algorithm for complex multiagent
  environments}\BBCQ\
\newblock In {\Bem Proceedings of the fourth international joint conference on
  Autonomous agents and multiagent systems}, \BPGS\ 970--977. ACM.

\bibitem[\protect\BCAY{Pineau, Gordon,\ \BBA\ Thrun}{Pineau
  et~al.}{2003}]{pingor03}
Pineau, J., Gordon, G., \BBA\ Thrun, S. \BBOP2003\BBCP.
\newblock \BBOQ Point-based value iteration: An anytime algorithm for
  {POMDPs}\BBCQ\
\newblock In {\Bem Proc. Int. Jnt. Conf. on Artificial Intelligence}, \BPGS\
  477--484.

\bibitem[\protect\BCAY{Poupart}{Poupart}{2005}]{poupart2005exploiting}
Poupart, P. \BBOP2005\BBCP.
\newblock {\Bem Exploiting structure to efficiently solve large scale partially
  observable Markov decision processes}.
\newblock Ph.D.\ thesis, University of Toronto.

\bibitem[\protect\BCAY{Ross\ \BBA\ Chaib-Draa}{Ross\ \BBA\
  Chaib-Draa}{2007}]{RosCha07}
Ross, S.\BBACOMMA\  \BBA\ Chaib-Draa, B. \BBOP2007\BBCP.
\newblock \BBOQ {AEMS: An anytime online search algorithm for approximate
  policy refinement in large POMDPs}\BBCQ\
\newblock In {\Bem Proc. Int. Jnt. Conf. on Artificial Intelligence}, \BPGS\
  2592--2598.

\bibitem[\protect\BCAY{Ross, Pineau, Paquet,\ \BBA\ Chaib-Draa}{Ross
  et~al.}{2008}]{RosPin08}
Ross, S., Pineau, J., Paquet, S., \BBA\ Chaib-Draa, B. \BBOP2008\BBCP.
\newblock \BBOQ Online planning algorithms for {POMDPs}\BBCQ\
\newblock {\Bem J. Artificial Intelligence Research}, {\Bem 32\/}(1), 663--704.

\bibitem[\protect\BCAY{Roy, Burgard, Fox,\ \BBA\ Thrun}{Roy
  et~al.}{1999}]{roy1999coastal}
Roy, N., Burgard, W., Fox, D., \BBA\ Thrun, S. \BBOP1999\BBCP.
\newblock \BBOQ Coastal navigation: mobile robot navigation with uncertainty in
  dynamic environments\BBCQ\
\newblock In {\Bem Proc. IEEE Int. Conf. on Robotics \& Automation}.

\bibitem[\protect\BCAY{Satia\ \BBA\ Lave}{Satia\ \BBA\ Lave}{1973}]{SatLav73}
Satia, J.\BBACOMMA\  \BBA\ Lave, R. \BBOP1973\BBCP.
\newblock \BBOQ {Markovian decision processes with probabilistic observation of
  states}\BBCQ\
\newblock {\Bem Management Science}, {\Bem 20\/}(1), 1--13.

\bibitem[\protect\BCAY{Silver\ \BBA\ Veness}{Silver\ \BBA\
  Veness}{2010}]{SilVen10}
Silver, D.\BBACOMMA\  \BBA\ Veness, J. \BBOP2010\BBCP.
\newblock \BBOQ {Monte-Carlo} planning in large {POMDPs}\BBCQ\
\newblock In {\Bem Advances in Neural Information Processing Systems (NIPS)}.

\bibitem[\protect\BCAY{Smallwood\ \BBA\ Sondik}{Smallwood\ \BBA\
  Sondik}{1973}]{SmaSon73}
Smallwood, R.~D.\BBACOMMA\  \BBA\ Sondik, E.~J. \BBOP1973\BBCP.
\newblock \BBOQ {The optimal control of partially observable Markov processes
  over a finite horizon}\BBCQ\
\newblock {\Bem Operations Research}, {\Bem 21\/}(5), 1071--1088.

\bibitem[\protect\BCAY{Smith\ \BBA\ Simmons}{Smith\ \BBA\
  Simmons}{2004}]{SmiSim04}
Smith, T.\BBACOMMA\  \BBA\ Simmons, R. \BBOP2004\BBCP.
\newblock \BBOQ Heuristic search value iteration for {POMDPs}\BBCQ\
\newblock In {\Bem Proc. Conf. on Uncertainty in Artificial Intelligence},
  \BPGS\ 520--527.

\bibitem[\protect\BCAY{Smith\ \BBA\ Simmons}{Smith\ \BBA\
  Simmons}{2005}]{smisim05}
Smith, T.\BBACOMMA\  \BBA\ Simmons, R. \BBOP2005\BBCP.
\newblock \BBOQ Point-based {POMDP} algorithms: Improved analysis and
  implementation\BBCQ\
\newblock In {\Bem Proc. Conf. on Uncertainty in Artificial Intelligence}.

\bibitem[\protect\BCAY{Somani, Ye, Hsu,\ \BBA\ Lee}{Somani
  et~al.}{2013}]{SomYe13}
Somani, A., Ye, N., Hsu, D., \BBA\ Lee, W. \BBOP2013\BBCP.
\newblock \BBOQ {DESPOT}: {POMDP} planning with regularization\BBCQ\
\newblock In {\Bem Advances in Neural Information Processing Systems (NIPS)}.

\bibitem[\protect\BCAY{Spaan\ \BBA\ Vlassis}{Spaan\ \BBA\
  Vlassis}{2005}]{spavla05}
Spaan, M.\BBACOMMA\  \BBA\ Vlassis, N. \BBOP2005\BBCP.
\newblock \BBOQ Perseus: {Randomized} point-based value iteration for
  {POMDPs}\BBCQ\
\newblock {\Bem J. Artificial Intelligence Research}, {\Bem 24}, 195--220.

\bibitem[\protect\BCAY{Wang, Won, Hsu,\ \BBA\ Lee}{Wang
  et~al.}{2012}]{wanwon12}
Wang, Y., Won, K., Hsu, D., \BBA\ Lee, W. \BBOP2012\BBCP.
\newblock \BBOQ {Monte Carlo Bayesian} reinforcement learning\BBCQ\
\newblock In {\Bem Proc. Int. Conf. on Machine Learning}.

\bibitem[\protect\BCAY{Yoon, Fern, Givan,\ \BBA\ Kambhampati}{Yoon
  et~al.}{2008}]{yoonfern08}
Yoon, S., Fern, A., Givan, R., \BBA\ Kambhampati, S. \BBOP2008\BBCP.
\newblock \BBOQ Probabilistic planning via determinization in hindsight\BBCQ\
\newblock In {\Bem Proc. AAAI Conf. on Artificial Intelligence}.

\bibitem[\protect\BCAY{Zhang\ \BBA\ Zhang}{Zhang\ \BBA\
  Zhang}{2001}]{zhang2001speeding}
Zhang, N.\BBACOMMA\  \BBA\ Zhang, W. \BBOP2001\BBCP.
\newblock \BBOQ {Speeding Up the Convergence of Value Iteration in Partially
  Observable Markov Decision Processes}\BBCQ\
\newblock {\Bem J. Artificial Intelligence Research}, {\Bem 14}, 29--51.

\end{thebibliography}

\end{document}